\def\eqref#1{equation~\ref{#1}}
\def\1{\bm{1}}
\DeclareMathAlphabet{\mathsfit}{\encodingdefault}{\sfdefault}{m}{sl}
\SetMathAlphabet{\mathsfit}{bold}{\encodingdefault}{\sfdefault}{bx}{n}
\newcommand{\Eb}{\mathbb{E}}
\newcommand{\Pb}{\mathbb{P}}
\newcommand{\Qb}{\mathbb{Q}}
\newcommand{\Rb}{\mathbb{R}}
\newcommand{\ThetaB}{\boldsymbol{\Theta}}
\newcommand{\Sc}{\mathcal{S}}
\newcommand{\Ac}{\mathcal{A}}
\newcommand{\Oc}{\mathcal{O}}
\newcommand{\Nc}{\mathcal{N}}
\newcommand{\Fc}{\mathcal{F}}
\newcommand{\Hc}{\mathcal{H}}
\newcommand{\Pc}{\mathcal{P}}
\newcommand{\mbf}{\mathbf{m}}
\newcommand{\Mbf}{\mathbf{M}}
\newcommand{\TV}{\mathtt{TV}}
\newcommand{\norm}[1]{\left\lVert#1\right\rVert}
\newtheorem{lemma}{Lemma}
\newtheorem{prop}{Proposition}
\newtheorem{theorem}{Theorem}
\newtheorem{definition}{Definition}
\newtheorem{assmp}{Assumption}
\newtheorem{example}{Example}
\Crefname{assmp}{Assumption}{Assumptions}
\newcommand{\yl}[1]{{\color{red}(YL: #1)}}
\newcommand{\jing}[1]{{\color{magenta}(JY: #1)}}
\newcommand{\RM}[1]{\left(\romannumeral#1\right)}
\newcommand{\DTV}{\mathtt{D}_{\TV}}
\title{Provable Benefits of Multi-task RL under Non-Markovian Decision Making Processes}
\author{Ruiquan Huang\thanks{
  The Pennsylvania State University,
  State College, PA, 16801, USA. \texttt{\{rzh5514, yangjing\}@psu.edu} },\quad Yuan Cheng\thanks{National University of Singapore, 119077, Singapore. \texttt{yuan.cheng@u.nus.edu, vtan@nus.edu.sg}}, \quad Jing Yang$^*$,\\
  \quad Vincent Tan$^{\dagger}$, \quad 
   Yingbin Liang\thanks{The Ohio State University, Columbus, OH, 43210, USA. \texttt{liang.889@osu.edu}},\quad 
}
\date{}
\begin{document}

\maketitle

\begin{abstract}

In multi-task reinforcement learning (RL) under Markov decision processes (MDPs), the presence of shared latent structures among multiple MDPs has been shown to yield significant benefits to the sample efficiency compared to single-task RL. In this paper, we investigate whether such a benefit can extend to more general sequential decision making problems, such as partially observable MDPs (POMDPs) and more general predictive state representations (PSRs). The main challenge here is that the large and complex model space makes it hard to identify what types of common latent structure of multi-task PSRs can reduce the model complexity and improve sample efficiency.
To this end, we posit a {\em joint model class} for tasks and use the notion of $\eta$-bracketing number to quantify its complexity; this number also serves as a general metric  to capture the similarity of tasks and thus determines the benefit of multi-task over single-task RL. We first study  upstream multi-task learning over PSRs, in which all tasks share the same observation and action spaces. We propose a provably efficient algorithm  UMT-PSR for finding near-optimal policies for all PSRs, and demonstrate that the advantage of multi-task learning manifests if the joint model class of PSRs has a smaller $\eta$-bracketing number compared to that of individual single-task learning. We also provide several example multi-task PSRs with small $\eta$-bracketing numbers, which reap the benefits of multi-task learning. We further investigate downstream learning, in which the agent needs to learn a new target task that shares some commonalities with the upstream tasks via a similarity constraint. By exploiting the learned PSRs from the upstream, we develop a sample-efficient algorithm that provably finds a near-optimal policy. Upon specialization to the examples used to elucidate the $\eta$-bracketing numbers, our downstream results further highlight the benefit compared to directly learning the target PSR without upstream information. Ours is the first theoretical study that quantifies the benefits of multi-task RL with PSRs over its single-task counterpart. 
\end{abstract}

\section{Introduction}

Multi-task sequential decision making, or multi-task reinforcement learning (MTRL) is a subfield of reinforcement learning (RL) that extends the learning process across multiple tasks. Many real-world applications can be modeled by MTRL. For instance, in robotics and autonomous driving, different types of robots and vehicles in a shared environment can have different observational capabilities based on their sensors and learning goals. Other applications include personalized healthcare, weather forecasting across different regions, and manufacturing quality control on different types of products.
%
%
The fundamental idea behind MTRL is to leverage the inherent similarities 
among a set of tasks in order to improve the overall learning efficiency and performance. 
For Markov decision processes (MDPs), 
a line of works~\citep{pathak2017curiosity,tang2017exploration,oord2018representation,laskin2020curl,lu2021power,cheng2022provable,agarwal2022provable,pacchiano2022joint} have explored multi-task representation learning and shown its benefit both practically and theoretically.

However, it is still an open question whether such a benefit can extend to more general sequential decision making problems, even in partially observable MDPs (POMDPs), let alone more general predictive state representations (PSRs). In this context, it is even unclear: 
\begin{center}
    \textit{When can latent similarity structure encompassed by multiple PSRs be potentially  beneficial?}
\end{center}
 The challenges mainly emanate from two aspects. First, \textit{the large and complex model space makes it hard to identify what types of common latent structure of multi-task PSRs can reduce the model complexity}. The non-Markovian property of these problems implies that the sufficient statistics or belief about the current environmental state encompasses all the observations and actions from past interactions with the environment. This dramatically increases the statistical complexity. Even for a finite observation space and action space, model complexity can be exponentially large in the number of observations and actions. Such a complex parameter space makes it difficult to identify what types of latent similarity structure of multi-task PSRs reduce the model complexity.
 Second, \textit{reduced {\bf model complexity} does not necessarily result in benefit in {\bf statistical efficiency} gain of RL}. In RL, model learning and data collection are intertwined. The agent has to choose an exploration policy in each iteration based on the model learned in the past. Such iterative process introduces temporal dependence to the collected data, which makes the analysis of multi-task PSRs complicated.

 In this paper, we answer the question above with upstream multi-task learning and downstream transfer learning. We summarize our contributions below.

 \begin{enumerate}[topsep=0pt, left=0pt] 
     \item To deal with the first challenge, we propose a unified approach to characterize the effect of task similarity on model complexity by introducing the notion of the {\em $\eta$-bracketing number} for the {\em joint model space} of multiple tasks. Regardless of whether the concrete form of task similarity is implicit or explicit, desirable task similarity should contribute to reduce the $\eta$-bracketing number compared to that without similarity structures. This significantly generalizes  existing studies of multi-task MDPs that considered only specific task similarity structures.
     \item We deal with the second challenge in both upstream and downstream learning. For the former, we propose a novel multi-task PSRs algorithm called UMT-PSR, which features a pairwise additive distance-based optimistic planning and exploration as well as confidence set construction based on the bracketing number of the {\em joint} model class. We then prove that if the bracketing number of the multi-task model class normalized by the number of tasks is lower than that of a single task, UMT-PSR benefits from multi-task learning with these novel designs. We then provide several specific multi-task POMDP/PSR examples with low bracketing number to demonstrate that UMT-PSR is often more efficient than single-task learning. 
     \item We further employ the upstream learning to downstream learning by connecting upstream and downstream models via similarity constraints. We show that the downstream learning can identify a near-accurate model and find a near-optimal policy. Upon specialization to the examples used to elucidate the $\eta$-bracketing numbers, our downstream results further highlight the benefit in comparison to directly learning parameters of PSRs without upstream information. Our analysis here features a novel technique of using R\'enyi divergence to measure the approximation error which guarantees the sub-optimality bound without requiring the realizability condition. 
 \end{enumerate}

Our work is the first theoretical study that characterizes the benefits of multi-task RL with PSRs/POMDPs over its single-task counterpart. 

\section{Related Work}
\textbf{MTRL under MDPs:} 
Multitask representation learning and transfer learning have been extensively studied in RL, particularly under MDPs. 
\cite{DBLP:conf/icml/AroraDKLS20} demonstrated that representation learning can reduce sample complexity for imitation learning.
\cite{DBLP:conf/icml/HuCJL021} analyzed MTRL with low inherent Bellman error~\citep{zanette2020learning} and known representation. \cite{zhang2021provably} studied multi-task learning under similar transition kernels. In contrast, \cite{DBLP:conf/uai/BrunskillL13} studied the benefit of MTRL when each task is independently sampled from a distribution over a finite set of MDPs. 
Recent studies have also considered the case where all tasks share a common representation, including \cite{DBLP:conf/iclr/DEramoTBR020} which demonstrated the convergence rate benefit on value iteration, and \cite{lu2021power} which proved the sample efficiency gain of MTRL under low-rank MDPs. 
Some recent work further took the impact of sequential exploration and temporal dependence in data into account. Considering sequential exploration with shared unknown representation, \cite{cheng2022provable,agarwal2022provable} studied reward free MTRL under low-rank MDPs as upstream learning and applied the learned representation from upstream to downstream RL. \cite{pacchiano2022joint} focused on a common low-dimensional linear representation and investigated MTRL under linearly-factored MDPs. \cite{lu2022provable} explored MTRL with general function approximation.

Note that all the above studies considered specific common model structures shared among tasks, whereas our paper proposes a unified way to characterize the similarity among tasks. Further, none of the existing studies considered multi-task POMDPs/PSRs, which is the focus of our paper.





\textbf{Single-task RL with PSRs and general sequential decision making problems:} A general decision making framework PSR~\citep{littman2001predictive} was proposed to generalize MDPs and POMDPs. Since then, various approaches have been studied to make the problem tractable with polynomial sample efficiency. These methods include spectral type of techniques~\citep{boots2011closing,hefny2015supervised,jiang2018completing,zhang2022reinforcement}, methods based on optimistic planning and maximum log-likelihood estimators together with confidence set-based design~\citep{zhan2022pac,liu2022optimistic}, the bonus-based approaches~\citep{huang2023provably},  value-based actor-critic approaches~\citep{uehara2022provably}, posterior sampling methods~\citep{zhong2022posterior}.
\citet{chen2022partially} further improved the sample efficiency for previous work including OMLE~\citep{liu2022optimistic}, MOPS~\citep{agarwal2022model}, and E2D~\citep{foster2021statistical}.  


\section{Preliminaries}



\textbf{Notations.} For any positive integer $N$, we use $[N]$ to denote the set $\{1,\cdots,N\}$. For any vector $x$, 
the $i$-th coordinate of $x$ is represented as $[x]_i$. For a set $\mathcal{X}$, the Cartesian product of $N$ copies of $\mathcal{X}$ is denoted by $\mathcal{X}^N$. For probability distributions $\mathbb{P}$ and $\mathbb{Q}$ supported on a countable set $\mathcal{X}$, the total variation distance between them   is $\mathtt{D}_{\TV}\left(\Pb,\Qb\right) = \sum_x|\Pb(x)-\Qb(x)|$, and the R\'enyi divergence of order $\alpha$, for $\alpha>1$, between them is  $\mathtt{D}_{\mathtt{R},\alpha}(\mathbb{P},\mathbb{Q}) = \frac{1}{\alpha-1}\log \Eb_{\mathbb{P}}[(\mathrm{d}\mathbb{P}/\mathrm{d}\mathbb{Q})^{\alpha-1}]$. 
 
\subsection{The Non-markovian decision making problem}\label{subsec: dm prob}

We consider an episodic decision making process, which is not necessarily Markovian, with an observation space $\mathcal{O}$ and a finite action space $\mathcal{A}$. We assume that the process is episodic and each episode contains $H$ steps, i.e., with horizon $H$. At each step, the evolution of the process is controlled by an underlying distribution $\Pb$, where $\Pb(o_h|o_1,\ldots,o_{h-1},a_1,\ldots,a_{h-1})$ is the probability of visiting $o_h$ at step $h$ given that the learning agent has observed $o_t\in\mathcal{O}$ and taken action $a_t\in\mathcal{A}$ for previous steps $t\in [h-1]$. And the learning agent receives a reward at each episode determined by the reward function $R: (\mathcal{O}\times\mathcal{A})^H \rightarrow [0,1]$. We denote such a process compactly as $\mathtt{P} = (\mathcal{O},  \mathcal{A}, H, \Pb, R)$. For each step $h$, we denote historical trajectory as $\tau_h:=(o_1,a_1,\ldots,o_h,a_h)$,
the set of all possible historical trajectories as $\mathcal{H}_h = (\mathcal{O}\times\mathcal{A})^{h}$, the future trajectory as $\omega_h:=(o_{h+1},a_{h+1},\ldots, o_H,a_H)$, and the set of all possible future trajectories as $\Omega_h = (\mathcal{O}\times\mathcal{A})^{H-h}$. 

The agent interacts with the environment in each episode as follows. At step 1, a fixed initial observation $o_1$ is drawn. At each step $h \in [H]$, due to the non-Markovian nature, the action selection and  environment transitions are based on whole history information. Specifically, the agent can choose an action $a_h$ based on the history $\tau_{h-1}$ and the current observation $o_h$ with strategy $\pi_h(a_h|\tau_{h-1},o_h)$. We denote such a strategy as a policy, and collect the policies over $H$ steps into $\pi=\{\pi_h\}_{h=1}^H$, and denote the set of all feasible policies as $\Pi$.
Then the environment takes a transition to $o_{h+1}$ based on $\Pb(o_{h+1}|\tau_{h})$. The episode terminates after $H$ steps.

 For any historical trajectory $\tau_h$, we further divided it into    $\tau_h^o = (o_{1},\ldots,o_h)$ and $\tau_h^a = (a_{1},\ldots,a_h)$ which is observation and action sequences contained in $\tau_h$, respectively. Similar to $\tau_h$, for the future trajectories $\omega_h$, we denote $\omega_h^o$ as the observation sequence in $\omega_h$, and $\omega_h^a$ as the action sequence in $\omega_h$. For simplicity, we write $\pi(\tau_h) = \pi(a_h|o_h,\tau_{h-1})\cdots\pi(a_1|o_1)$ to denote the probability of choosing the sequence of actions $\tau_h^a$ given the observations $\tau_h^o$ under the policy $\pi$. 
 We denote $\Pb^{\pi}$ as the distribution of the trajectories induced by the policy $\pi$ under the dynamics $\Pb$. The value function of a policy $\pi$ under $\Pb$ and the reward $R$ is denoted by $V_{\Pb,R}^{\pi} = \Eb_{\tau_H\sim\Pb^{\pi}}[R(\tau_H)]$. The primary learning goal is to find an $\epsilon$-optimal policy $\bar{\pi}$, which is one that satisfies $\max_{\pi}V_{\Pb,R}^{\pi} - V_{\Pb,R}^{\bar{\pi}} \leq \epsilon$.

Given that addressing a general decision-making problem entails an exponentially large sample complexity in the worst case, this paper focuses on the {\it low-rank} class of problems as in \cite{zhan2022pac,liu2022optimistic,chen2022partially}. Before formal definition of the low-rank problem, we introduce the dynamics matrix $\mathbb{D}_h \in \mathbb{R}^{|\mathcal{H}_h|\times |\Omega_h|}$ for each $h$, where we use $\tau_h \in \mathcal{H}_h$ and $\omega_h \in \Omega_h$ to index the rows and columns of the matrix $\mathbb{D}_h$, respectively, and the entry at the $\tau_h$-th row and $\omega_h$-th column of $\mathbb{D}_h$ equals to the conditional probability $\Pb(\omega_h^o, \tau_h^o | \tau_h^a, \omega_h^a)$.
\begin{definition}[Rank-$r$ sequential decision making problem]
    A sequential decision making problem is rank $r$ if for any $h$, the model dynamics matrix $\mathbb{D}_h$ has rank {at most} $r$.
\end{definition}

As a result, for each $h$, the probability of observing $\omega_h^{o}$ can be represented by a linear combination of probabilities on a set of future trajectories known to the agent called {\em core tests} $\mathcal{Q}_h = \{\mathbf{q}_{h}^{1},\ldots, \mathbf{q}_h^{d_h}\} \subset \Omega_h$, where $d_h\geq r$. 
Specifically, there exist functions $\mbf: \Omega_h \to \Rb^{d_h}, \psi: \Hc_h \to \Rb^{d_h}$ such that $\RM{1}$ the value of the $\ell$-th coordinate of $\psi(\tau_h)$ equals to the conditional probability 
$\Pb(\mathbf{o}_h^{\ell},\tau_h^o | \mathbf{a}_h^{\ell}, \tau_h^a)$ on $(\mathbf{q}_h^{\ell}, \tau_h )$, where $\mathbf{o}_h^{\ell}$ and $\mathbf{a}_h^{\ell}$ to denote the observation and the action sequences of $\mathbf{q}_h^{\ell}$, and $\RM{2}$ for any $\omega_h \in \Omega_h, \tau_h \in \Hc_h$, the conditional probability can be factorized as 
\begin{align}
    \textstyle \Pb(\omega_h^o,\tau_h^o | \tau_h^a,\omega_h^a) =  \mbf(\omega_h)^{\top}\psi(\tau_h). \label{eq: lr-decom}
\end{align}



{\bf Predictive State Representation.} 
Following from Theorem C.1 in \cite{liu2022optimistic}, 
given core tests $\{\mathcal{Q}_h\}_{h=1}^H$, any low rank decision making problem admits a (self-consistent) \textbf{predictive state representation} (PSR) $\theta = \left\{\left(\phi_h,\Mbf_h\right)\right\}_{h=1}^H$, such that Eq.~\ref{eq: lr-decom} can be reparameterized by $\theta$. Mathematically, For any $h \in [H]$, $\tau_h \in \Hc_h, \omega_h \in \Omega_h$:
\begin{align*}
    \textstyle \mbf(\omega_h)^{\top} = \phi_H^{\top}\Mbf_H(o_H,a_H)\cdots\Mbf_{h+1}(o_{h+1},a_{h+1}),\quad \psi(\tau_h) = \Mbf_h(o_h,a_h)\cdots\Mbf_1(o_1,a_1)\psi_0, 
\end{align*}
and $\sum_{(o_h,a_h)\in\Oc\times\Ac}\phi_{h+1}^{\top}\Mbf_h(o_h,a_h) = \phi_h^{\top}$. 
For ease of the presentation, we assume $\psi_0$ is known.\footnote{The sample complexity of learning $\psi_0$ if it is unknown is relatively small compared to the learning of the other parameters~\citep{liu2022optimistic}.} 

The following assumption is standard in the literature~\citep{liu2022optimistic,chen2022partially}.

\begin{assmp}[$\gamma$-well-conditioned PSR]\label{assmp:well-condition} We assume any PSR $\theta = \{(\phi_h,\Mbf_h)\}_{h=1}^H$ considered in this paper is $\gamma$-well-conditioned for some $\gamma > 0$, i.e.
\begin{align}
    \textstyle \forall h \in [H],~~ \max_{x\in\mathbb{R}^{d_h}:\|x\|_1\leq 1} \max_{\pi\in\Pi} \sum_{\omega_h \in \Omega_h}\pi(\omega_h)|\mbf(\omega_h)^{\top}x|\leq \frac{1}{\gamma}.
\end{align}
\end{assmp}


In the following context, we use 
$\Pb_{\theta}$ to indicate the model determined by the PSR $\theta$. For simplicity, we denote $V_{\Pb_{\theta}, R}^{\pi}$ as $V_{\theta,R}^{\pi}$. 
Moreover, let $\mathcal{Q}_h^A = \{\mathbf{a}_h^{\ell}\}_{\ell=1}^{d_h}$ be the action sequence set from core tests which is constructed by eliminating any repeated action sequence. The set $\mathcal{Q}_h^A$ is also known as the core action sequence set. The set of all rank-$r$ and $\gamma$-well-conditioned PSRs is denoted by $\Theta$.

\subsection{Upstream Multi-task Learning}\label{subsec: up}
In \textbf{upstream multi-task} learning, the agent needs to solve $N$ 
low-rank decision making problems (also known as source tasks) at the same time instead of only one single problem (task). The set of $N$ source tasks is denoted by $\{\mathtt{P}_n\}_{n=1}^N$, where $\mathtt{P}_n = (\mathcal{O},\mathcal{A},H, \Pb_{\theta_n^*}, R_n)$, and $\theta_n^*\in\Theta$.\footnote{For simplicity, we assume all tasks have the same rank and $\gamma$, but have different core test sets. The extension to different ranks and $\gamma$'s is straightforward.} In other words, all $N$ tasks are identical except for their model parameters $\theta_n^* = \{(\phi_h^{n,*}, \Mbf_{h}^{n,*})\}_{h=1}^H$, and reward functions $R_n$. Moreover, we denote the model class of multi-task
PSRs as  $\boldsymbol{\Theta}_{\mathrm{u}}$ (the subscript stands for \underline{u}pstream), a subset of $\Theta^N$.

The goal of the upstream learning consists of two parts: (i) Finding near-optimal policies for all $N$ tasks on average. Mathematically, given an accuracy level $\epsilon$, the set of $N$ policies that are produced by the algorithm $\{\bar{\pi}^1,\ldots,\bar{\pi}^N \}$ should satisfy $\frac{1}{N} \sum_{n=1}^N(\max_{\pi}V_{\theta_n^* ,R_n}^{\pi} - V_{\theta_n^* ,R_n}^{\bar{\pi}^n} ) \leq \epsilon$;  (ii) Characterizing the theoretical benefit of multi-task PSRs learning in terms of the sample complexity, compared to learning each task individually.

\subsection{Bracketing Number of Joint Parameter Space}

One critical factor that affects the efficiency of multi-task learning compared to separate task learning is the presence of shared latent structure among the tasks, 
which yields a reduced model space in multi-task PSRs learning, as compared to separately learning single tasks over the Cartesian product of $N$ model spaces (see \Cref{fig:1} for an illustration in 2 dimensions). 
Consequently, this reduction in model complexity can ultimately lead to improved sample efficiency. 
Unlike the specific shared model structures among multiple tasks that the previous works studied, such as shared representation in \citet{cheng2022provable} and similar transition kernels in \citet{zhang2021provably}, here we focus on a general shared model space and use the notion of the {\bf $\eta$-bracketing number} to quantify the complexity of the joint model space. Such a notion plays a central role in capturing the benefit of multi-task PSR learning over single-task learning.

\begin{wrapfigure}{r}{0.36\textwidth}
\centering  
\vspace{-20pt}
\subfigure[Independent learning of each task where  the joint model class is $\Pc^2$]{   
\centering    
\begin{overpic}[width=0.34\textwidth]{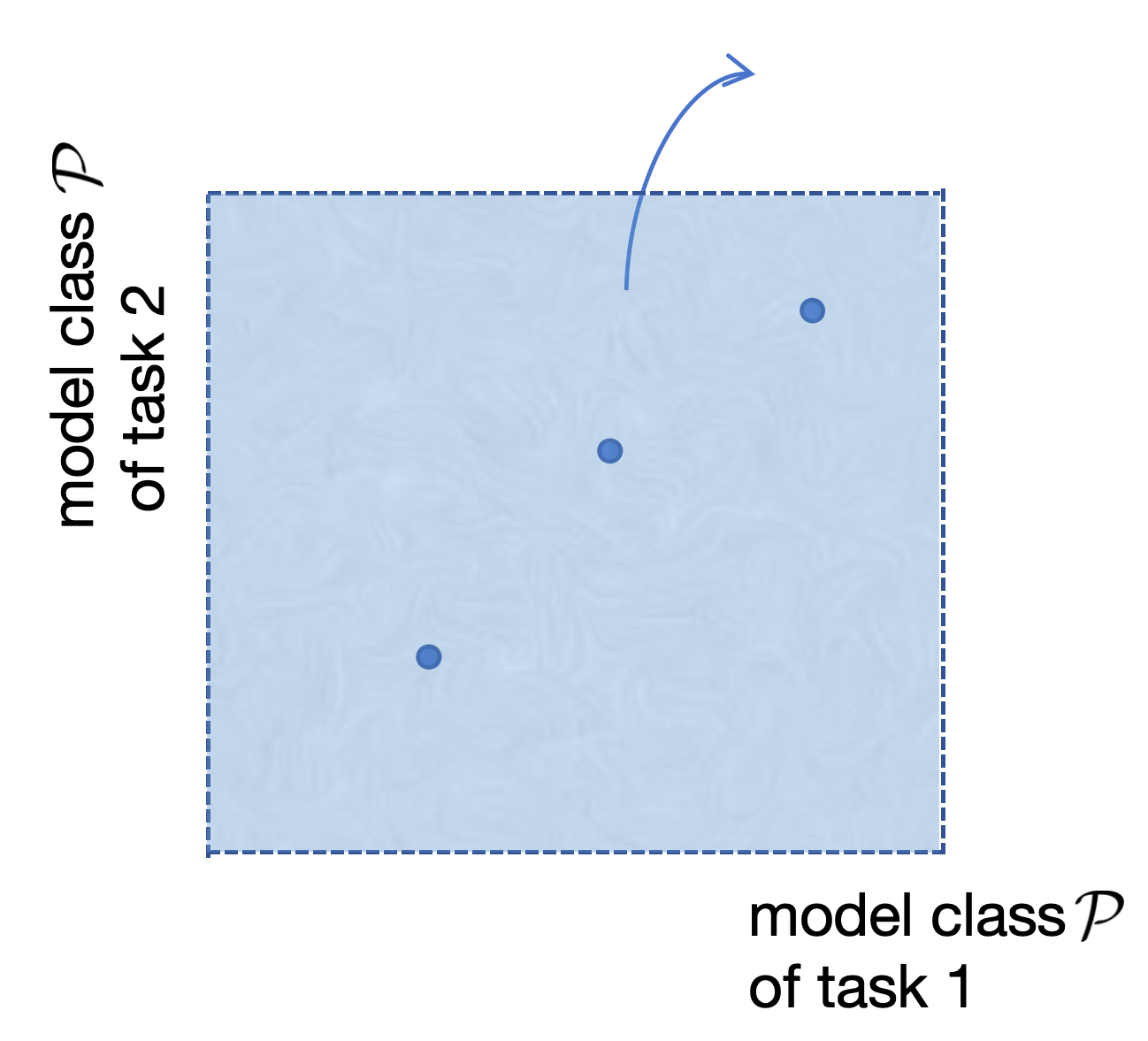} 
    \put(35,52){\fontsize{3pt}{10pt}\selectfont$\left(P^{(*,1)},  P^{(*,2)}\right)$}
    \put(67,83){\footnotesize$\mathcal{P}^2$}
\end{overpic} 
\label{fig:1-a}\vspace{-25pt}
}
\subfigure[Joint learning of tasks with shared   latent  model structure. The joint model class is $\Fc_{\mathrm{u}}$, a strict subset of $\Pc^2$.]{ 
\centering   
\begin{overpic}[width=0.36\textwidth]{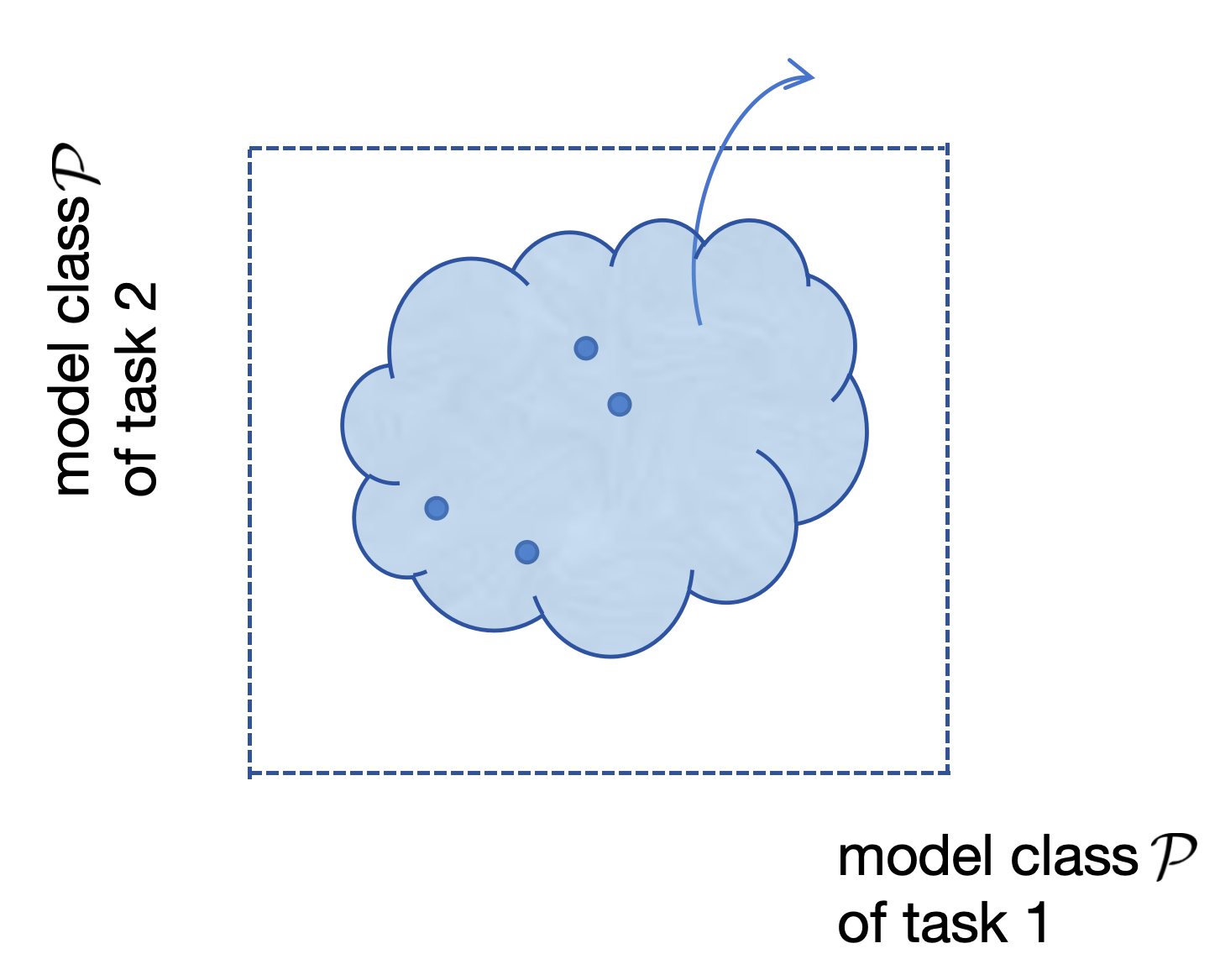} 
    \put(33,42){\fontsize{2pt}{10pt}\selectfont$\left(P^{(*,1)},  P^{(*,2)}\right)$}
    \put(68,74){\footnotesize$\Fc_{\mathrm{u}}\subsetneq \mathcal{P}^2$}
\end{overpic} 
\label{fig:1-b} 
} 
\vspace{-10pt}
\caption{Reduction in $\eta$-bracketing number when $\Fc_{\mathrm{u}}\subsetneq\Pc^2$ }
\vspace{-20pt}
\label{fig:1}    
\end{wrapfigure}

We start with a domain $\mathcal{X}$ and a single task function class $\mathcal{F}$, in which each element $f:\mathcal{X}\rightarrow \mathbb{R}_+$. For the multi-task case, the function class is a subset $\boldsymbol{\mathcal{F}}_{\mathrm{u}}$ of $\mathcal{F}^{N}$. 

\begin{definition}[$\eta$-Bracketing number of vector-valued function class $\boldsymbol{\mathcal{F}}_{\mathrm{u}}$ w.r.t. $\norm{\cdot}$]
   Given two vector-valued functions $\mathbf{l}$ and $\mathbf{g}: \mathcal{X} \to \Rb^N_{+}$, the bracket $[\mathbf{l},\mathbf{g}]$ is the set of all functions $\mathbf{f} \in \boldsymbol{\mathcal{F}}_{\mathrm{u}}$ satisfying $\mathbf{l}\leq \mathbf{f}\leq\mathbf{g}$.\footnote{We write that two vectors $\mathbf{a},\mathbf{b}$ satisfy  $\mathbf{a}\leq \mathbf{b}$ if  $\mathbf{b} - \mathbf{a}$ is coordinate-wise nonnegative.} An $\eta$-bracket is a bracket $[\mathbf{l},\mathbf{g}]$ with $\norm{\mathbf{g}-\mathbf{l}} < \eta$. The bracketing number $\mathcal{N}_{\eta}(\boldsymbol{\Fc}_{\mathrm{u}},\norm{\cdot})$ is the minimum number of $\eta$-brackets needed to cover $\boldsymbol{\Fc}_{\mathrm{u}}$.\footnote{We say that a collection of   sets $S_1\ldots,S_n$ {\em cover} a set $S$  if  $S\subset\cup_{i=1}^n S_i$.}
\end{definition}

In this paper, we are interested in the bracketing number of the {\em joint} model space, i.e., distribution spaces over $(\mathcal{O}\times\mathcal{A})^H$ parameterized by $\boldsymbol{\Theta}_{\mathrm{u}}$. For simplicity,
 we use $\Nc_{\eta}(\boldsymbol{\Theta}_{\mathrm{u}})$ to denote the $\eta$-bracketing number of $\{ (\mathbb{P}_{\theta_1},\ldots,\Pb_{\theta_N}) | \boldsymbol{\theta}\in\boldsymbol{\Theta}_{\mathrm{u}}\}$ w.r.t.\ the $\ell_\infty$ policy weighted norm $\norm{\cdot}_{\infty}^{\mathrm{p}}$, where the $\ell_\infty$ policy weighted norm between two vector-valued functions $\mathbf{l}=\{l_1,\ldots,l_N\}$ and $\mathbf{g}=\{g_1,\ldots,g_N\}$ defined on $(\Oc \times \Ac)^H$ is equal to $\norm{\mathbf{g} - \mathbf{l}}_{\infty}^{\mathrm{p}} = \max_{i \in [N]}\max_{\pi_i \in \Pi}\sum_{\tau_H}|l_i(\tau_H)-g_i(\tau_H)|\pi_i(\tau_H)$. As we will show later, a lower $\eta$-bracketing number of the joint model space results in a lower sample complexity in multi-task PSR learning. 

  In practice, it is common tasks share certain common model structures and hence their joint model space will have a much lower $\eta$-bracketing number compared to the product of model spaces (i.e, treating the model of each task separately). We provide several such examples of non-Markovian decision processes 
  in \Cref{subsec: example-up}. We provide more examples of MDPs with their $\eta$-bracketing numbers in Appx. \ref{Appd_sub: F-1}. Notably, there can be much richer scenarios beyond these examples. 
\subsection{Downstream Transfer Learning}\label{subsec: down}
In {\bf downstream learning}, the agent is assigned with a new target task $ \mathtt{P}_{0} = (\mathcal{O},\mathcal{A}, H, \Pb_{\theta_0^*}, R_0) $, where $\theta_0^*\in\Theta$, which shares some similarities with source tasks to benefit from upstream learning. Here, we capture the shared structure between upstream and downstream tasks via the {\bf similarity constraint}  $\mathtt{C}(\theta_0,\theta_1^*,\ldots\theta_N^*)\leq \boldsymbol{0}$ where  $\mathtt{C}:\Theta^{N+1}\rightarrow \mathbb{R}^{n_d}$, $n_d\in\mathbb{N}$. The similarity constraint establishes the relationship between the downstream target task and the upstream source tasks. Hence, the downstream model class is given by $\Theta_0^{\mathrm{u}} = \{\theta_0\in\Theta| \mathtt{C}(\theta_0,\theta_1^*,\ldots\theta_N^*)\leq \boldsymbol{0}\}$.  
%
%
%
We note that the similarity constraint is general enough to capture various relationships between upstream and downstream tasks. For example, \citet{cheng2022provable} consider the case when the downstream task shares the same representation as upstream tasks, which is equivalent to assuming $[\mathtt{C}(\theta_0,\ldots,\theta^*_N)]_n = \|\phi^{(*,n)} - \phi_0\|_2$, where $n\in[N]$  and $\phi^{(*,n)}$ is the representation of task $n$. However, the similarity constraint allows much richer beyond the above example, for example, downstream tasks can have similar but not the same representations as the upstream, or may share only some representation features, but not all of them. 

The goal of the downstream learning is to find a near-optimal policy, by exploiting the constraint similarity with upstream tasks and utilizing upstream knowledge to achieve better sample efficiency compared with learning without upstream knowledge.  


\if{0}
\subsection{Bracketing number}

To explicitly characterize the benefit brought by the constraints, we introduce the notion of bracketing number \citep{geer2000empirical}, which serves as a complexity measure of function classes.

Suppose a single-task function class is $\mathcal{F}$, in which each element $f:\mathcal{X}\rightarrow \mathbb{R}_+$. A multi-task function class is a subset $\boldsymbol{\mathcal{F}}_{\mathrm{u}}$ of $\mathcal{F}^{N}$. We also refer $\mathcal{X}$ to the domain of the function class.


\begin{definition}[$\eta$-Bracketing number of vector-valued function class $\boldsymbol{\mathcal{F}}_{\mathrm{u}}$ w.r.t. $\norm{\cdot}$]
   Given two vector-valued functions $\mathbf{l}$ and $\mathbf{g}: \mathcal{X} \to \Rb^N_{+}$, the bracket $[\mathbf{l},\mathbf{g}]$ is the set of all functions $\mathbf{f} \in \boldsymbol{\mathcal{F}}_{\mathrm{u}}$ satisfying $\mathbf{l}\leq \mathbf{f}\leq\mathbf{g}$.\footnote{If two vectors $\mathbf{a},\mathbf{b}$ satisfy that $\mathbf{a}\leq \mathbf{b}$, then each coordinate of $\mathbf{a} - \mathbf{b}$ is less than or equal to 0.} An $\eta$-bracket is a bracket $[\mathbf{l},\mathbf{g}]$ with $\norm{\mathbf{g}-\mathbf{l}} < \eta$. The bracketing number $\mathcal{N}_{\eta}(\boldsymbol{\Fc}_{\mathrm{u}},\norm{\cdot})$ is the minimum number of $\epsilon$-brackets needed to cover\footnote{If sets $S_1\ldots,S_n$ cover a set $S$, then $S\subset\cup_{i=1}^n S_i$.} $\boldsymbol{\Fc}_{\mathrm{u}}$.
\end{definition}

In this paper, we are interested in the bracketing number of the distribution spaces over $(\mathcal{O}\times\mathcal{A})^H$ parameterized by $\boldsymbol{\Theta}_{\mathrm{u}}$. For simplicity,
we use $\Nc_{\eta}(\boldsymbol{\Theta}_{\mathrm{u}})$ to denote the $\eta$-bracketing number of $\{ (\mathbb{P}_{\theta_1},\ldots,\Pb_{\theta_N}) | \boldsymbol{\theta}\in\boldsymbol{\Theta}_{\mathrm{u}}\}$ w.r.t. $\ell_\infty$ policy weighted norm $\norm{\cdot}_{\infty}$, where the $\ell_\infty$ policy weighted norm between two vector-valued functions $\mathbf{l}=\{l_1,\cdots,l_N\}$ and $\mathbf{g}=\{g_1,\cdots,g_N\}$ defined on $(\Oc \times \Ac)^H$ is equal to $\norm{\mathbf{g} - \mathbf{l}}_{\infty} = \max_{i \in [N]}\max_{\pi_i}\sum_{\tau_H}|l_i(\tau_H)-g_i(\tau_H)|\pi_i(\tau_H)$.




\textcolor{red}{add definition of $\mathcal{F}$ and $f$, meaning of bracket number, link with sharing parameter}


\textcolor{red}{definition, add other cases. How would bracketing number change v.s. how shared information formulate}
\fi

\section{Upstream learning over multi-task PSRs}

We present our upstream algorithm in \Cref{subsec: alg-up}, characterize its theoretical performance in \Cref{subsec: result-up} and present examples to validate the benefit of upstream multi-task learning in \Cref{subsec: example-up}.

We use bold symbol to represent the multi-task parameters or policy. Specifically, $\boldsymbol{\theta} = (\theta_1,\ldots,\theta_N)$, and $\boldsymbol{\pi} = (\pi_1,\ldots,\pi_N)$. We define $Q_A = \max_n\max_h|\mathcal{Q}_h^{n,A}|$, where $Q_h^{n,A}$ is the core action sequence set of task $n$ at step $h$. The policy, denoted by $\nu_h(\pi,\pi')$, takes $\pi$ at the initial $h-1$ steps and switches to $\pi'$ from the $h$-th step. 
Lastly, $\mathtt{u}_{\mathcal{X}}$ represents the uniform distribution over the set $\mathcal{X}$. 

\subsection{Algorithm: Upstream Multi-Task PSRs (UMT-PSR)}\label{subsec: alg-up}

We provide the pseudo-code of our upstream multi-task algorithm called \textit{Upstream Multi-Task PSRs} (UMT-PSR) in \Cref{alg:upstream}. This iterative algorithm   consists of three main steps as follows.

\begin{algorithm}[htbp]
\caption{\textbf{U}pstream \textbf{M}ulti-\textbf{T}ask \textbf{PSR}s (UMT-PSR)}\label{alg:upstream}
\begin{algorithmic}[1]
\State {\bf Input:} $\boldsymbol{\mathcal{B}}_1= \ThetaB$ model class, estimation   margin $\beta^{(N)}$, maximum iteration number $K$.

\For{$k=1,\ldots, K$}

\State Set $
    \textstyle \boldsymbol{\pi}^k = \arg\max_{ \boldsymbol{\pi}\in\Pi^N} \max_{ \boldsymbol\theta,  \boldsymbol\theta' \in\boldsymbol{\mathcal{B}}_k  } \sum_{n \in [N]} \mathtt{D}_{\TV}\left(\Pb_{ \theta_n }^{\pi^n}, \Pb_{ \theta'_{n}}^{\pi^n}\right)
$

\For{$n, h \in [N] \times [H]$}

\State Use $\nu_h^{\pi^{n,k}}$ to collect data $\tau_H^{n,k,h} $.

\EndFor

\State Construct $\boldsymbol{\mathcal{B}}_{k+1} =$ 
\begin{align*}
    \bigg\{ \boldsymbol{\theta} \in \boldsymbol{\Theta}_{\mathrm{u}}\!:\!  \sum_{  t \in [k],h \in [H] \atop n \in [N]} \log \Pb_{\theta_n}^{\nu^{\pi^{n,t}}_{h}} (\tau_H^{n,t,h}) \geq \max_{ \boldsymbol{\theta}' \in\boldsymbol{\Theta}_{\mathrm{u}}}  \! \sum_{ t \in [k],h \in [H] \atop n \in [N]} \!\log \Pb_{\theta_n'}^{\nu^{\pi^{n,t}}_{h}} (\tau_H^{n,t,h})  \!-\! \beta^{(N)} \bigg\}\cap \boldsymbol{\mathcal{B}}_k.
\end{align*}

\EndFor
\State {\bf Output:} Any $\overline{\boldsymbol{\theta}} \in \boldsymbol{\mathcal{B}}_{K+1}$, and a greedy multi-task policy $\overline{\boldsymbol{\pi} }=\arg\max_{\boldsymbol{\pi}}\sum_{n \in [N]} V_{\bar{\theta}_n, R_n}^{\pi^n} $
\end{algorithmic}
\end{algorithm}


{\bf Pairwise additive distance based multi-task planning} (Line 3): To promote joint planning among tasks, a natural choice to measure the distance between two multi-task models is the distance between the two product distributions $\Pb_{\theta_1}^{\pi_1}\times\cdots\times \Pb_{\theta_N}^{\pi_N}$ and $\Pb_{\theta_1'}^{\pi_1}\times\cdots\times \Pb_{\theta_N'}^{\pi_N}$. 
However, such a  ``distance between product distributions'' is not sufficient to guarantee the accuracy of the individual models of each task, which is needed in the analysis of the sum of the individual value functions. Hence, we propose to use the ``pairwise additive distance'' for our planning, defined as $\mathtt{D}_{\boldsymbol{\pi}} ( \boldsymbol{\theta},  \boldsymbol{\theta}' ) \triangleq \sum_{n \in [N]}\mathtt{D}_{\TV}( \Pb_{ \theta_n}^{\pi^n} , \Pb_{ \theta_n'}^{\pi^n} ). $
%
%

More specifically, at each iteration $k$, UMT-PSR selects a multi-task policy $\boldsymbol{\pi}^k= (\pi^{1,k},\ldots,\pi^{N,k})$ that maximizes the largest pairwise additive distance $\max_{\boldsymbol{\theta} , \boldsymbol{\theta}'   } \mathtt{D}_{\boldsymbol{\pi}}(\boldsymbol{\theta} , \boldsymbol{\theta}' )$ within the confidence set $\boldsymbol{\mathcal{B}}_k$ (which will be specified later). An important property of $\boldsymbol{\mathcal{B}}_k$ is that it contains the true model $\boldsymbol{\theta}^*$ with high probability. Using this property, the largest pairwise additive distance serves as an {\em optimistic} value of the uncertainty $\mathtt{D}_{\boldsymbol\pi}\left( \boldsymbol{\theta}^*, \boldsymbol{\theta}\right)$ for any multi-task model $\boldsymbol{\theta}\in\boldsymbol{\mathcal{B}}_k$.



{\bf Multi-task exploration} (Line 5): Building upon the planning policy $\boldsymbol{\pi}^k$, for each task $n$ and each step $h$, UMT-PSR executes the policy $\pi^{n,k}$ for first $h-1$ steps, and then uniformly selects an action sequence in $\mathcal{A}\times \mathcal{Q}_h^{n,A}$ for the following $H-h+1$ steps. In particular, at step $h$, UMT-PSR uniformly takes an action in $\mathcal{A}$, and then uniformly chooses a core action sequence $\mathbf{a}_h$ such that regardless of what the observation sequence is, UMT-PSR always plays the action in the sampled core action sequence. In summary, for each $(n,h)\in[N]\times[H]$, UMT-PSR adopts the policy $\nu_h(\pi^{n,k},\mathtt{u}_{\mathcal{A}\times \mathcal{Q}_h^{n,A}})$ to collect a sample trajectory $\tau_H^{n,k,h}$. We abbreviate $\nu_h(\pi^{n,k},\mathtt{u}_{\mathcal{A}\times \mathcal{Q}_h^{n,A}})$ as $\nu^{\pi^{n,k}}_{h}$.

{\bf Confidence set construction via bracketing number of joint model class} (Line 7): Given the sampled trajectories, UMT-PSR calls a maximum likelihood estimation oracle to construct the multi-task confidence set. A novel element here is the use of the bracketing number of the {\em joint model class} to characterize estimation margin $\beta^{(N)}$, which is an upper bound of the gap between the maximum log-likelihood within $\boldsymbol{\Theta}_u$ and the log-likelihood of the true model.  Such a design provides a unified way for any MTRL problem and avoids individual design for each problem in a case-by-case manner. 

\if{0}
\begin{align}
    \boldsymbol{\mathcal{B}}_{k+1} = \bigg\{ \boldsymbol{\theta} \in \boldsymbol{\Theta}_{\mathrm{u}}:  \sum_{  t\leq k,h\leq H \atop n\leq N}   \log \Pb_{\theta_n}^{\nu^{\pi^{n,t}}_{h}} (\tau_H^{n,t,h}) \geq \max_{ \boldsymbol{\theta}' }   \sum_{ t\leq k,h\leq H \atop n\leq N} \log \Pb_{\theta_n'}^{\nu^{\pi^{n,t}}_{h}} (\tau_H^{n,t,h})  - \beta^{(N)} \bigg\}\cap\boldsymbol{\mathcal{B}}_k,\label{eqn:confidence set}
\end{align}
\fi

\subsection{Main Theoretical Result}\label{subsec: result-up}



The following theorem characterizes the guarantee of the model estimation and the sample complexity to find a near-optimal multi-task policy.
\begin{theorem}\label{thm:upstream}
     Under \Cref{assmp:well-condition} 
     , for any fixed $\delta > 0$, let $\boldsymbol{\Theta}_{\mathrm{u}}$ be the multi-task parameter space,  $\beta^{(N)} = c_1(\log\frac{KHN}{\delta} + \log\mathcal{N}_{\eta}(\boldsymbol{\Theta}_{\mathrm{u}}))$, where $c_1 > 0$ and $\eta\leq \frac{1}{KHN}$. Then with probability at least $1-\delta$, UMT-PSR finds a multi-task model $\overline{\boldsymbol{\theta}} = (\bar{\theta}_1,\ldots,\bar{\theta}_N)$ such that
    \begin{align*}
       \textstyle \sum_{n=1}^{N}  \max_{\pi^n\in \Pi}\mathtt{D}_{\TV}\left(\Pb_{ \bar{\theta}_n}^{\pi^n}, \Pb_{\theta_n^*}^{\pi^n}\right) \leq \tilde{O}\left(  \frac{Q_A}{\gamma}\sqrt{ \frac{rH|\mathcal{A}|N \beta^{(N)} }{ K} }  \right).
    \end{align*}
    In addition, if $K=\frac{c_2r|\mathcal{A}|Q_A^2 H \beta^{(N)} }{ N\gamma^2\epsilon^2}$ for large enough $c_2 > 0$, UMT-PSR produces a multi-task policy $\overline{\boldsymbol{\pi}} = (\bar{\pi}^1,\ldots,\bar{\pi}^N)$ such that the average sub-optimality gap is at most $\epsilon$, i.e.
    \begin{align}
        \textstyle \frac{1}{N} \sum_{n=1}^N \left( \max_{\pi\in \Pi} V_{\theta_n^*, R_n}^{\pi} - V_{\theta^*, R_n}^{\bar{\pi}^n} \right) \leq \epsilon.
    \end{align}
\end{theorem}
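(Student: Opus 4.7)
The plan is to combine a multi-task bracketing-number MLE concentration argument with a PSR-to-data conversion and an optimistic-planning/pigeonhole closure, following the single-task PSR template but carried out over the joint model class $\boldsymbol{\Theta}_{\mathrm{u}}$. I would first establish joint MLE concentration: a one-sided bracketing bound applied to the log-likelihood summed over all $N$ tasks, $H$ step-indices, and $K$ iterations, with cover scale $\eta\le 1/(KHN)$ making the discretization bias negligible, shows that $\beta^{(N)}\asymp\log(KHN/\delta)+\log\mathcal{N}_\eta(\boldsymbol{\Theta}_{\mathrm{u}})$ suffices so that, with probability at least $1-\delta$, (i) $\boldsymbol{\theta}^*\in\boldsymbol{\mathcal{B}}_k$ for every $k$, and (ii) for any sequence $\{\boldsymbol{\theta}^k\}$ with $\boldsymbol{\theta}^k\in\boldsymbol{\mathcal{B}}_k$,
\[\sum_{k=1}^{K}\sum_{n,h}H^2\!\left(\mathbb{P}_{\theta^k_n}^{\nu_h^{\pi^{n,k}}},\mathbb{P}_{\theta_n^*}^{\nu_h^{\pi^{n,k}}}\right)\lesssim\beta^{(N)},\]
the adaptive version being obtained via per-element martingale Bernstein over the $\eta$-bracket cover followed by a union bound. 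The $\ell_\infty$ policy-weighted bracketing norm is what makes a single cover simultaneously valid across all exploration policies played.

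Next, invoking \Cref{assmp:well-condition} and the core-action uniform tail built into $\nu_h^{\pi^{n,k}}$, a standard PSR simulation lemma combined with importance weighting gives, per task and iteration,
\[\mathtt{D}_{\TV}^2\!\left(\mathbb{P}_{\theta_n}^{\pi^{n,k}},\mathbb{P}_{\theta_n^*}^{\pi^{n,k}}\right)\lesssim\frac{Q_A^2|\mathcal{A}|rH}{\gamma^2}\sum_{h}H^2\!\left(\mathbb{P}_{\theta_n}^{\nu_h^{\pi^{n,k}}},\mathbb{P}_{\theta_n^*}^{\nu_h^{\pi^{n,k}}}\right),\]
where $\gamma^{-2}$ absorbs the PSR-operator norm blow-up and $|\mathcal{A}|Q_A$ is the importance weight from uniformly sampling $\mathcal{A}\times\mathcal{Q}_h^{n,A}$. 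Set $h_k=\max_{\boldsymbol{\theta}\in\boldsymbol{\mathcal{B}}_k}\sum_n\mathtt{D}_{\TV}(\mathbb{P}_{\theta_n}^{\pi^{n,k}},\mathbb{P}_{\theta_n^*}^{\pi^{n,k}})$. Because $\bar{\boldsymbol{\theta}},\boldsymbol{\theta}^*\in\boldsymbol{\mathcal{B}}_k$, $\boldsymbol{\pi}^k$ maximizes the pairwise additive distance over $\boldsymbol{\mathcal{B}}_k$, and $\bar{\boldsymbol{\theta}}\in\boldsymbol{\mathcal{B}}_{K+1}\subseteq\boldsymbol{\mathcal{B}}_k$, a single triangle inequality on top of the planning optimality yields $\sum_n\max_{\pi^n}\mathtt{D}_{\TV}(\mathbb{P}_{\bar\theta_n}^{\pi^n},\mathbb{P}_{\theta_n^*}^{\pi^n})\le 2h_k$ for every $k\le K$. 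Averaging over $k$ and applying Cauchy--Schwarz twice (over $n$ inside $h_k$ via $h_k\le\sqrt{N}\sqrt{\sum_n\mathtt{D}_{\TV}^2(\cdots)}$, and over $k$ outside) to combine with the previous two displays then produces exactly the $\tilde{O}\!\bigl(\tfrac{Q_A}{\gamma}\sqrt{rH|\mathcal{A}|N\beta^{(N)}/K}\bigr)$ TV bound claimed in the theorem.

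The value statement follows from the per-task decomposition, with $\pi^{n,\star}\in\arg\max_\pi V_{\theta_n^*,R_n}^\pi$,
\[V^{\pi^{n,\star}}_{\theta_n^*,R_n}-V^{\bar\pi^n}_{\theta_n^*,R_n}=\bigl(V^{\pi^{n,\star}}_{\theta_n^*,R_n}-V^{\pi^{n,\star}}_{\bar\theta_n,R_n}\bigr)+\bigl(V^{\pi^{n,\star}}_{\bar\theta_n,R_n}-V^{\bar\pi^n}_{\bar\theta_n,R_n}\bigr)+\bigl(V^{\bar\pi^n}_{\bar\theta_n,R_n}-V^{\bar\pi^n}_{\theta_n^*,R_n}\bigr).\]
The middle term summed over $n$ is nonpositive by the greedy choice $\bar{\boldsymbol{\pi}}=\arg\max_{\boldsymbol{\pi}}\sum_n V_{\bar\theta_n,R_n}^{\pi^n}$, and the outer two terms are each bounded by the corresponding $\mathtt{D}_{\TV}$ since $R_n\in[0,1]$. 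Summing over $n$, dividing by $N$, substituting the TV bound, and solving the resulting inequality for $K$ recovers the stated $K=\Theta(r|\mathcal{A}|Q_A^2H\beta^{(N)}/(N\gamma^2\epsilon^2))$.

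The main technical obstacle is the adaptive-sequence strengthening in the first step: the maximizer $\boldsymbol{\theta}^k$ that defines $h_k$ shifts across iterations, so a plain fixed-$\boldsymbol{\theta}$ MLE bound (which only controls $\sum_t H^2$ for $\bar{\boldsymbol{\theta}}\in\boldsymbol{\mathcal{B}}_{K+1}$) does not directly control $\sum_k\sum_{n,h}H^2(\cdots)$ evaluated at this moving $\boldsymbol{\theta}^k$. Handling this forces the martingale concentration to be run over the $\eta$-bracketing cover of the \emph{joint} space $\boldsymbol{\Theta}_{\mathrm{u}}$, which is precisely why $\log\mathcal{N}_\eta(\boldsymbol{\Theta}_{\mathrm{u}})$ --- rather than $N$ copies of a single-task log-bracketing number --- enters $\beta^{(N)}$, and why the multi-task advantage materializes exactly when this joint log-bracketing number scales sublinearly in $N$.
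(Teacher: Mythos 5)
Your confidence-set step (part (i)), the factor-of-two optimism/triangle-inequality step, and the final value decomposition all match the paper's argument. The gap is in your claim (ii) and the way you use it. You assert that MLE concentration over the $\eta$-bracket cover yields, for any adaptive sequence $\boldsymbol{\theta}^k\in\boldsymbol{\mathcal{B}}_k$,
\begin{align*}
\sum_{k=1}^{K}\sum_{n,h}\mathtt{D}_{\mathtt{H}}^2\left(\Pb_{\theta^k_n}^{\nu_h^{\pi^{n,k}}},\Pb_{\theta_n^*}^{\nu_h^{\pi^{n,k}}}\right)\lesssim\beta^{(N)},
\end{align*}
i.e.\ a bound on the \emph{diagonal} sum in which the $k$-th term is evaluated under the $k$-th iteration's own exploration policies. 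What MLE concentration actually gives (the paper's Propositions 1--2 and its Lemma 3) is: for each fixed $k$ and each $\boldsymbol{\theta}\in\boldsymbol{\mathcal{B}}_k$, the sum $\sum_{t\le k-1}\sum_{n,h}\mathtt{D}_{\mathtt{H}}^2(\Pb_{\theta_n}^{\nu_h^{\pi^{n,t}}},\Pb_{\theta_n^*}^{\nu_h^{\pi^{n,t}}})\le 2\beta^{(N)}$, i.e.\ control under the \emph{past} policies $t<k$ only, since $\boldsymbol{\mathcal{B}}_k$ is built from data collected before round $k$ and $\boldsymbol{\theta}^k$ need not survive into $\boldsymbol{\mathcal{B}}_{k+1}$. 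The diagonal sum cannot be assembled from these fixed-$\theta$ bounds (a union bound over the cover handles the varying $\boldsymbol{\theta}^k$ but not the varying policy index), and it is false in general without further structure: the planner deliberately picks $\boldsymbol{\pi}^k$ to maximize disagreement within $\boldsymbol{\mathcal{B}}_k$, so each diagonal term can be $\Omega(1)$ while all off-diagonal constraints are satisfied, giving a diagonal sum of order $K$. This distribution shift from past data-collection policies to the current evaluation policy is the central difficulty of the theorem, and your outline assumes it away.

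The paper closes this gap with the low-rank structure, which your argument never actually invokes (the $r$ in your "simulation lemma" constant is unexplained). Concretely, its Lemma 3 bounds $\sum_n\mathtt{D}_{\TV}(\Pb_{\hat\theta_n^k}^{\pi_n},\Pb_{\theta_n^*}^{\pi_n})$ for an \emph{arbitrary} evaluation policy by $\sqrt{C_\lambda+\sum_{t<k}(\zeta^n_{t,h-1}+\zeta^n_{t,h})}$ times the expected projected-feature norm $\|(\mathbf{A}_h^n)^{\dagger}\bar{\psi}^{n,*}(\tau_{h-1})\|_{(U_{k,h}^n)^{-1}}$, where $U_{k,h}^n$ accumulates feature outer products from rounds $t<k$; the Hellinger terms appearing there are the past-policy ones that the MLE guarantee does control, and the sum over $k$ of the feature norms is handled by the elliptical potential lemma, which is where the factors $r$ and $\log(1+rK/\lambda)$ in the final bound come from. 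To repair your proof you would need to replace your per-iteration inequality with this eluder/elliptical-potential transfer step (or an equivalent pigeonhole over the rank-$r$ feature space); as written, the chain from $\beta^{(N)}$ to the $1/\sqrt{K}$ rate does not go through.
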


{\bf Benefits of multi-task learning:} \Cref{thm:upstream} shows that with the sample complexity  $\tilde{O}(\frac{r|\mathcal{A}|Q_A^2 H^2 \beta^{(N)} }{ N\gamma^2\epsilon^2})$, UMT-PSR identifies an $\epsilon$-optimal  multi-task policy. As a comparison, the best known sample complexity of a single-task PSR RL is given by $O(\frac{r|\mathcal{A}|Q_A^2 H^2 \beta }{ \gamma^2\epsilon^2}  )$ in \citet{chen2022partially}, where $\beta^{(1)}=\tilde{O}(r^2|\Oc||\Ac|H^2)$ scales the logarithm of the bracketing number of a single-task PSR with rank $r$. It is clear that as long as $\beta^{(N)} < N\beta^{(1)}$, then UMT-PSR enjoys multi-task benefit in the sample complexity. In \Cref{subsec: example-up}, we will provide several example multi-task POMDPs/PSRs to illustrate that such a condition can be satisfied broadly.

Next, we make a few comparisons concerning $\beta^{(N)}$. {\bf (i)} If $N=1$, \Cref{thm:upstream} matches the best known sample complexity given in \citet{chen2022partially}. {\bf (ii)} If none of tasks share any similarity, i.e., $\boldsymbol{\Theta}_{\mathrm{u}} = \Theta^N$, we have $\beta^{(N)} = N\beta^{(1)}$, and the sample complexity does not exhibit any benefit compared to learning the tasks separately. This coincides with the intuition that in the worst case, multi-task learning is not required. {\bf (iii)} The benefits of multi-task learning  are more evident when ${\beta}^{(N)}/N$ decreases. An extreme example is that when all tasks also share the same dynamics, leading to ${\beta}^{(N)} = \beta^{(1)}$. In this case, multi-task learning reduces to the batch setting and as the batch size increases, the iteration number   decreases linearly in $N$.


\subsection{Important Examples of Multi-task PSRs}\label{subsec: example-up}
As shown in \Cref{subsec: up} and \Cref{thm:upstream}, for multi-task models with low $\eta$-bracketing number, i.e., satisfying $\beta^{(N)} < N\beta^{(1)}$, UMT-PSR exhibits better sample complexity than single-task learning. In this section, we provide example multi-task POMDPs and PSRs and show that their $\eta$-bracketing number satisfies the condition.
Detailed proofs for these examples can be found in \Cref{Appd_sub: F-3}.  



\if{0}
{\bf POMDPs and multi-task learning.} We first consider tabular POMDPs, which is a classic subclass of PSRs. Specifically, the dynamics in POMDPs consists of $H$ transition distributions $\{\mathbb{T}_h: \mathcal{S}\times\mathcal{A}\times\mathcal{S}\rightarrow[0,1] \}_{h=1}^H$, and $H$ emission distributions $\{\mathbb{O}_h: \mathcal{S}\times\mathcal{O}\rightarrow[0,1] \}_{h=1}^H$, where $\mathcal{S}$ is a finite state space. The states capture the entire system information, but are unobservable. In POMDPs, at each step $h$, if the current system state is $s_h$, the agent observes $o_h$ with probability $\mathbb{O}_h(o_h|s_h)$. Then, if the agent takes an action $a_h$ based on previous observations $o_{h},\ldots,o_1$ and actions $a_{h-1},\ldots,a_1$, the system state transits to $s_{h+1}$ with probability $\mathbb{T}_h(s_{h+1}|s_h,a_h)$. We use the notation $\mathtt{P}_{\mathrm{po}} = (\mathcal{O},\mathcal{A}, H, \mathcal{S}, \mathbb{T},\mathbb{O}, R)$ to represent a POMDP instance. Note that the tuple $(\mathcal{S}, \mathbb{T},\mathbb{O})$ in POMDPs determine the general dynamics $\Pb$ in PSRs. Suppose all tasks share the same state, observation, and action spaces, then, $\mathtt{P}_{\mathrm{po}}^n = (\mathcal{O},\mathcal{A}, H, \mathcal{S} , \mathbb{T}^n,\mathbb{O}^n, R)$ represents the model of task $n$.
\fi


{\bf Muli-task POMDPs.} We consider tabular POMDPs, which is a classic subclass of PSRs. Specifically, the dynamics in POMDPs consist of $H$ transition distributions $\{\mathbb{T}_h: \mathcal{S}\times\mathcal{A}\times\mathcal{S}\rightarrow[0,1] \}_{h=1}^H$, and $H$ emission distributions $\{\mathbb{O}_h: \mathcal{S}\times\mathcal{O}\rightarrow[0,1] \}_{h=1}^H$, where $\mathcal{S}$ is a finite state space. The states capture the entire system information, but are not directly observable. In POMDPs, at each step $h$, if the current system state is $s_h$, the agent observes $o_h$ with probability $\mathbb{O}_h(o_h|s_h)$. Then, if the agent takes an action $a_h$ based on previous observations $o_{h},\ldots,o_1$ and actions $a_{h-1},\ldots,a_1$, the system state transits to $s_{h+1}$ with probability $\mathbb{T}_h(s_{h+1}|s_h,a_h)$. We use the notation $\mathtt{P}_{\mathrm{po}} = (\mathcal{O},\mathcal{A}, H, \mathcal{S}, \mathbb{T},\mathbb{O}, R)$ to represent a POMDP instance. Note that the tuple $(\mathcal{S}, \mathbb{T},\mathbb{O})$ in POMDPs determine the general dynamics $\Pb$ in PSRs. If all tasks share the same state, observation, and action spaces, then $\mathtt{P}_{\mathrm{po}}^n = (\mathcal{O},\mathcal{A}, H, \mathcal{S} , \mathbb{T}^n,\mathbb{O}^n, R)$ represents the model of task $n$.

\begin{example}[Multi-task POMDP with common transition kernels]\label{eg:same transtion} All tasks (i.e., all POMDPs) share the same transition kernel, i.e., there exists a set of transition distributions $\{\mathbb{T}_h^*\}_{h=1}^H$ such that $\mathbb{T}_h^n = \mathbb{T}_h^*$ for all $n\in[N]$ and $h\in[H]$. 
The emission distributions can be different. Such a scenario arises if the agent observes the same environment from different angles and hence receives different observations. Then, 
$\beta^{(N)}$ is at most $O ( H(|\mathcal{S}|^2|\mathcal{A}| + |\mathcal{S}||\mathcal{O}|N)\log\frac{H|\mathcal{O}||\mathcal{A}||\mathcal{S}|}{\eta})$, whereas the single task 
$\beta^{(1)}$ is given by $O ( H(|\mathcal{S}|^2|\mathcal{A}| + |\mathcal{S}||\mathcal{O}|)\log\frac{H|\mathcal{O}||\mathcal{A}||\mathcal{S}|}{\eta})$. Clearly, $\beta^{(N)} < N\beta^{(1)}$. 
\end{example}

\if{0}
\begin{example}[Multi-task POMDP with common emission]\label{eg:emission} We consider the POMDPs where all tasks share the same emission distributions, i.e., there exists a set of unique emission distributions $\{\mathbb{O}_h^*\}_{h=1}^H$ such that $\mathbb{O}_h^n = \mathbb{O}_h^*$ for all $n\in[N]$ and $h\in[H]$.  Then, 
$\beta^{(N)}$ is at most $ O ( H(|\mathcal{S}|^2|\mathcal{A}|N + |\mathcal{S}||\mathcal{O}|)\log\frac{H|\mathcal{O}||\mathcal{A}||\mathcal{S}|}{\eta})$. Clearly, $\beta^{(N)} < N\beta^{(1)}$ is satisfied. 
\end{example}

\yl{provide a scenario to justify the above multi-task POMDP}
\fi

\textbf{Multi-task PSRs:} We next provide two example multi-task PSRs, in which tasks do not share common model parameters. In these examples, the similarities among tasks could alternatively be established via implicit connections and correlations in latent spaces, which reduce the complexity of the joint model class, hence the estimation margin and the sample complexity of algorithms significantly compared with separately learning each single task. 

   

\begin{example}[Multi-task PSR with perturbed models]\label{eg:M+delta}
     Suppose there exist a latent base task $\mathtt{P}_{\mathrm{b}}$, and a finite noisy perturbation space $\boldsymbol{\Delta}$. Each task $n \in [N]$ is a noisy perturbation of the latent base task and can be parameterized into two parts: the base task plus a task-specified noise term. Specifically, for each step $h \in [H]$ and task $n \in [N]$, any $(o,a) \in \Oc \times \Ac$, we have 
     \begin{align*}
         \textstyle \Mbf_h^n(o_h,a_h)=\Mbf_h^\mathrm{b}(o_h,a_h)+\Delta^n_h(o_h,a_h), \quad \Delta^n_h \in \boldsymbol{\Delta}.
     \end{align*}    
Such a multi-task PSR satisfies that $\beta^{(N)} = \tilde{O}( r^2 |\mathcal{O}| |\mathcal{A}| H^2  + HN\log|\boldsymbol{\Delta}|)$, whereas $\beta^{(1)}$ for a single task is given by $ \tilde{O}( r^2 |\mathcal{O}| |\mathcal{A}| H^2  )$.
Clearly, $\beta^{(N)} \ll N\beta^{(1)}$  holds if $\log |\boldsymbol{\Delta}| \ll \tilde{O}(r^2|\mathcal{O}||\Ac|H)$, which can be easily satisfied for small-size perturbation environments. Hence, the multi-task PSR  benefits from a significantly reduced sample complexity compared to single-task learning.

\end{example}   

\begin{example}[Multi-task PSR: Linear combination of core tasks]\label{eg: linear com}
Suppose that the multi-task PSR lies in the linear span of $m$ core tasks $\{\mathtt{P}_1,\ldots,\mathtt{P}_m\}$.
Specifically, for each task $n \in [N]$, there exists a coefficient  vector $\boldsymbol{\alpha}^n=(\alpha_1^n,\cdots,\alpha_m^n)^\top \in \Rb^m$ s.t. for any $h \in [H]$ and $(o_h,a_h) \in \Oc \times \Ac$,
    \begin{align*}
        \textstyle \phi_h^n(o_h,a_h)=\sum_{l=1}^m\alpha_l^n \phi_h^l(o_h,a_h), \quad \Mbf_h^n(o_h,a_h)=\sum_{l=1}^m\alpha_l^n \Mbf_h^l(o_h,a_h).
    \end{align*}
    For regularization, we assume $0 \leq \alpha^n_l$ for all $l \in [m]$ and $n\in [N]$, and $\sum_{l=1}^m\alpha^n_l=1$ for all $n \in [N]$. It can be shown that 
    $\beta^{(N)}=O(   m( r^2  |\mathcal{O}| |\mathcal{A}| H^2+ N) )$, whereas $\beta^{(1)}= \tilde{O}(r^2|\mathcal{O}||\mathcal{A}|H^2)$.
Clearly, $\beta^{(N)} \ll N\beta^{(1)}$ holds if $m \leq \min\{N,r^2  |\mathcal{O}| |\mathcal{A}| H^2\}$, which is satisfied  in practice.    
\end{example}

\section{Downstream learning for PSRs}

In downstream learning, the agent is assigned   a new task $\mathtt{P}_0 = (\mathcal{O},\mathcal{A},H,\Pb_{\theta_0^*}, R_0)$, where $\theta_0^*\in\Theta_0^{\mathrm{u}}$, and $\Theta_0^{\mathrm{u}}$ is defined in \Cref{subsec: down}. As explained in \Cref{subsec: down}, upstream and downstream tasks are connected via the {\bf similarity constraint}  $\mathtt{C}(\theta_0,\theta_1^*,\ldots\theta_N^*)\leq \boldsymbol{0}$.
Therefore, the agent can use the estimated model parameter $\bar{\theta}_1,\ldots,\bar{\theta}_N$ in the upstream to construct an empirical candidate model class for the downstream task as $\widehat{\Theta}_0^{\mathrm{u}} = \{\theta_0\in\Theta | \mathtt{C}(\theta_0,\bar{\theta}_1,\ldots,\bar{\theta}_N)\leq 0 \}$. Then for downstream learning, we adopt the standard  OMLE~\citep{liu2022optimistic,chen2022partially} for the model class $\hat{\Theta}_0^{\mathrm{u}}$. 

The sample complexity of downstream learning will be determined by the bracketing number of $\hat{\Theta}_0^{\mathrm{u}}$, which is nearly the same as that of the ground truth $\Theta_0^{\mathrm{u}}$. Since the similarity constraint will significantly reduces the complexity of the model parameter space, the bracketing number of $\hat{\Theta}_0^{\mathrm{u}}$ should be much smaller than that of the original parameter space $\Theta$. In this way, the downstream can benefit from the upstream learning with reduced sample complexity. In the following subsections, we first characterize the performance guarantee for downstream learning in terms of the bracketing number of $\hat{\Theta}_0^{\mathrm{u}}$, and then show that the similarity constraint  reduces the bracketing number for the examples given in \Cref{subsec: example-up}.

\subsection{Theoretical Guarantee for Downstream Learning}

One main challenge in the downstream learning is that the true model may not lie in $\hat{\Theta}_0^{\mathrm{u}}$. To handle this, we employ R\'enyi divergence to measure
the ``distance'' from the model class to the true model as follows, mainly because its unique advantage under the MLE oracle: the R\'enyi divergence of order $\alpha$ with $\alpha\ge1$ serves as an upper bound on the TV distance and the KL divergence, and thus has more robust performance.
\begin{definition}\label{assmp:renyi}
Fix $\alpha>1$. The approximation error of $\hat{\Theta}_0^{\mathrm{u}}$ under $\alpha$-R\'enyi divergence is defined as $\mathrm{e}_{\alpha}(\hat{\Theta}_0^{\mathrm{u}}) = \min_{\theta_0\in\hat{\Theta}_0^{\mathrm{u}}}\max_{\pi \in \Pi}\mathtt{D}_{\mathtt{R},\alpha}( \Pb_{\theta_0^*}^{\pi}, \Pb_{\theta_0}^{\pi}  )$.

\end{definition}


\if{0}
The performance of downstream learning highly depends on whether $\hat{\Theta}_0^{\mathrm{u}}$ 
satisfies the realizability condition, i.e., $\theta_0^*\in \hat{\Theta}_0^{\mathrm{u}}$. We first provide the following theoretical guarantee for the simpler realizable setting, and then study the more challenging non-realizable setting.
\begin{theorem}[Realizable setting]\label{thm:downstraem OMLE}
    Let $\beta_0 =c( \log\mathcal{N}_{\eta}(\hat{\Theta}_0^{\mathrm{u}}) )$, where $\eta\leq\frac{1}{KH}$ and $c$ is some absolute constant. Under \Cref{assmp:well-condition}, with probability at least $1-\delta$, the output of \Cref{alg:DMT-PSR} satisfies that
    \begin{align}
         \textstyle \max_{\pi \in \Pi}\mathtt{D}_{\TV}\left(\Pb_{ \bar{\theta}_0}^{\pi}, \Pb_{\theta_0^*}^{\pi}\right) \leq \tilde{O}\left(  \frac{Q_A}{\gamma}\sqrt{ \frac{rH|\mathcal{A}|  \beta_0 }{K} }  \right).
    \end{align}
\end{theorem}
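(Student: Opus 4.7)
The plan is to port the single-task PSR OMLE analysis of \citet{liu2022optimistic,chen2022partially} to the restricted downstream model class $\hat{\Theta}_0^{\mathrm{u}}$. Under the realizable assumption $\theta_0^*\in\hat{\Theta}_0^{\mathrm{u}}$, the only change from the single-task bound is that $\mathcal{N}_\eta(\Theta)$ is replaced by the smaller $\mathcal{N}_\eta(\hat{\Theta}_0^{\mathrm{u}})$, which is where the benefit of the upstream learning surfaces. I expect \Cref{alg:DMT-PSR} to be the natural single-task analogue of UMT-PSR: at each iteration $k$ it picks an optimistic policy $\pi^k=\arg\max_\pi\max_{\theta,\theta'\in\mathcal{B}_k}\mathtt{D}_{\TV}(\Pb_\theta^\pi,\Pb_{\theta'}^\pi)$, executes the exploration policy $\nu_h^{\pi^k}$ for each $h\in[H]$, and refines $\mathcal{B}_{k+1}$ via an MLE log-likelihood constraint with margin $\beta_0$.

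\textbf{Confidence-set validity via bracketing.} First, I would establish $\theta_0^*\in\mathcal{B}_k$ for all $k$ with probability at least $1-\delta$ by a standard martingale-concentration argument applied to an $\eta$-bracketing cover of $\hat{\Theta}_0^{\mathrm{u}}$. With $\eta\leq 1/(KH)$ the bracketing approximation error is negligible, and the choice $\beta_0 = c\log(\mathcal{N}_\eta(\hat{\Theta}_0^{\mathrm{u}})/\delta)$ simultaneously yields an in-sample guarantee: for every $\theta\in\mathcal{B}_{k+1}$,
\begin{align*}
\sum_{t\leq k,\,h\leq H}\mathtt{D}_{\TV}^2\!\left(\Pb_\theta^{\nu_h^{\pi^t}},\,\Pb_{\theta_0^*}^{\nu_h^{\pi^t}}\right)\;\lesssim\;\beta_0.
\end{align*}
This is exactly the step where the reduced bracketing number of $\hat{\Theta}_0^{\mathrm{u}}$ enters the bound.

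\textbf{Lifting to max-policy TV and pigeonhole.} Second, I would translate the in-sample TV distance under the exploratory policies $\nu_h^{\pi^t}$ into a bound on $\max_\pi\mathtt{D}_{\TV}(\Pb_{\theta^k}^\pi,\Pb_{\theta_0^*}^\pi)$ via the PSR simulation lemma and the linear factorization $\Pb_\theta(\omega_h,\tau_h)=\mbf(\omega_h)^\top\psi(\tau_h)$. The uniform sampling on $\mathcal{A}\times\mathcal{Q}_h^{A}$ at step $h$ contributes an importance-weight blow-up of $|\mathcal{A}|Q_A$, while \Cref{assmp:well-condition} supplies the $1/\gamma$ factor that bounds $\pi$-weighted $|\mbf(\omega_h)^\top x|$. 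Combining these with the optimistic planning rule---so that whenever $\bar\theta_0\in\mathcal{B}_{K+1}$ is output, its distance to $\theta_0^*$ is controlled by the per-iteration gap $\mathtt{D}_{\TV}(\Pb_{\theta^k}^{\pi^k},\Pb_{\theta_0^*}^{\pi^k})$---and averaging over $k\in[K]$ yields, after a pigeonhole, the claimed $\tilde{O}(Q_A\sqrt{rH|\mathcal{A}|\beta_0/K}/\gamma)$ bound.

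\textbf{Main obstacle.} The hardest step is the sharp elliptical-potential-type argument needed to extract the $\sqrt{rH|\mathcal{A}|}$ factor (rather than a coarser $\sqrt{d_{\max}H|\mathcal{A}|}$). Following \citet{chen2022partially}, one decomposes the policy TV distance as $\mathtt{D}_{\TV}(\Pb_\theta^\pi,\Pb_{\theta_0^*}^\pi)\leq\sum_h\Eb_{\tau_{h-1}\sim\Pb^\pi}\!\left[\lVert(\Mbf_h^\theta-\Mbf_h^{\theta_0^*})\psi_{h-1}(\tau_{h-1})\rVert_1\right]$ and controls the cumulative effect of the step-$h$ errors by a determinant-type potential on the predictive-state Gram matrix $\sum_t\psi_{h-1}(\tau_{h-1}^t)\psi_{h-1}(\tau_{h-1}^t)^\top$. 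This potential bound is a purely structural property of rank-$r$, $\gamma$-well-conditioned PSRs, so it imports verbatim to the present restricted class; the remaining summations and the final pigeonhole are then routine.
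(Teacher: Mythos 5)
Your proposal is correct and follows essentially the same route as the paper: MLE concentration over an $\eta$-bracketing cover of $\hat{\Theta}_0^{\mathrm{u}}$ to validate the confidence set and obtain the in-sample (squared Hellinger/TV) bound of order $\beta_0$, then the PSR simulation-lemma decomposition with the $|\mathcal{A}|Q_A/\gamma$ factors and a rank-$r$ elliptical-potential argument on the projected predictive-state Gram matrices, finishing with the optimism/pigeonhole step. The paper obtains this realizable statement as the $\epsilon_0=0$ specialization of its non-realizable Theorem~\ref{thm:downstream l2}, but the underlying argument is the one you describe.
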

\fi

\begin{theorem}\label{thm:downstream l2}
    Fix $\alpha>1$. Let $\epsilon_0 = \mathrm{e}_{\alpha}(\hat{\Theta}_0^{\mathrm{u}})$, $\beta_0 = c_0( \log\mathcal{N}_{\eta}(\hat{\Theta}_0^{\mathrm{u}}) + \epsilon_0 KH + (\frac{\mathbf{1}_{\{\epsilon_0\neq0\}}}{\alpha -1} +1)\log\frac{KH}{\delta})$ for some large $c_0$, where $\eta\leq\frac{1}{KH}$. Under \Cref{assmp:well-condition}, with probability at least $1-\delta$, the output of \Cref{alg:DMT-PSR} satisfies that
    \begin{align*}
         \textstyle \max_{\pi \in \Pi}\mathtt{D}_{\TV}\left(\Pb_{ \bar{\theta}_0}^{\pi}, \Pb_{\theta_0^*}^{\pi}\right) \leq \tilde{O}\left(  \frac{Q_A}{\gamma}\sqrt{ \frac{r|\mathcal{A}|H  \beta_0 }{K} }  +\sqrt{\epsilon_0} \right).
    \end{align*}
\end{theorem}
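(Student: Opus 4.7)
The plan is to adapt the OMLE analysis for PSRs (as in \citet{liu2022optimistic, chen2022partially}) to the non-realizable downstream setting via the R\'enyi divergence. Let $\tilde{\theta}_0 \in \arg\min_{\theta_0 \in \hat{\Theta}_0^{\mathrm{u}}} \max_{\pi \in \Pi} \mathtt{D}_{\mathtt{R},\alpha}(\Pb_{\theta_0^*}^\pi, \Pb_{\theta_0}^\pi)$ be the R\'enyi projection of the ground truth onto the downstream candidate class, which by definition achieves approximation error $\epsilon_0$. By the triangle inequality, $\mathtt{D}_{\TV}(\Pb_{\bar{\theta}_0}^\pi, \Pb_{\theta_0^*}^\pi) \leq \mathtt{D}_{\TV}(\Pb_{\bar{\theta}_0}^\pi, \Pb_{\tilde{\theta}_0}^\pi) + \mathtt{D}_{\TV}(\Pb_{\tilde{\theta}_0}^\pi, \Pb_{\theta_0^*}^\pi)$. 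The second term is controlled via Pinsker together with the standard inequality $\mathtt{D}_{\mathrm{KL}} \leq \mathtt{D}_{\mathtt{R},\alpha}$ for $\alpha > 1$, yielding the $\sqrt{\epsilon_0}$ contribution. It therefore suffices to prove the first term is at most $\tilde{O}\big(\frac{Q_A}{\gamma}\sqrt{r|\mathcal{A}|H\beta_0/K}\big)$.

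For the first term I would proceed in three substeps. First, I would establish a \emph{non-realizable MLE concentration lemma} showing that $\tilde{\theta}_0 \in \mathcal{B}_{k+1}$ for all $k \leq K$ with probability at least $1-\delta/2$. The log-likelihood gap between the empirical maximizer and $\tilde{\theta}_0$ is dominated by a martingale concentration of the single-step log-likelihood ratio $\log(\Pb_{\theta_0^*}/\Pb_{\tilde{\theta}_0})$ under data drawn from $\theta_0^*$, plus an $\eta$-bracketing discretization error of order $\log \mathcal{N}_\eta(\hat{\Theta}_0^{\mathrm{u}})$. Control of the $(\alpha-1)$-th moment of this ratio follows from $\Eb_{\theta_0^*}[(\mathrm{d}\Pb_{\theta_0^*}/\mathrm{d}\Pb_{\tilde{\theta}_0})^{\alpha-1}] = \exp((\alpha-1)\mathtt{D}_{\mathtt{R},\alpha}) \leq \exp((\alpha-1)\epsilon_0)$, which yields via a Chernoff / Freedman-type argument a concentration inequality with an additive bias of $O(\epsilon_0 KH)$ (from the nonzero mean of the log-ratio across $KH$ samples) and a logarithmic term of order $\frac{1}{\alpha-1}\log(KH/\delta)$; these two quantities match the corresponding terms in the definition of $\beta_0$.

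Second, the MLE inclusion in $\mathcal{B}_{k+1}$ combined with the standard likelihood-to-Hellinger comparison gives, for every $\theta_0 \in \mathcal{B}_{k+1}$, the pairwise guarantee $\sum_{t \leq k, h \leq H} \mathtt{D}_{\mathrm{H}}^2(\Pb_{\theta_0}^{\nu_h^{\pi^t}}, \Pb_{\tilde{\theta}_0}^{\nu_h^{\pi^t}}) = O(\beta_0)$ (up to the small bias above). Third, I would invoke the PSR change-of-measure / potential lemma (which exploits Assumption~\ref{assmp:well-condition} and the core-test structure, as in \citet{chen2022partially}) to convert these pairwise exploration-policy Hellinger distances into $\sum_{k=1}^{K} \mathtt{D}_{\TV}(\Pb_{\theta_0^k}^{\pi^k}, \Pb_{\tilde{\theta}_0}^{\pi^k})^2 \leq \tilde{O}\big(\frac{Q_A^2 r |\mathcal{A}| H \beta_0}{\gamma^2}\big)$, where $(\theta_0^k, \pi^k)$ are the optimistic model-policy pairs chosen by the downstream algorithm. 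Optimism of the planning step (with $\tilde{\theta}_0 \in \mathcal{B}_{k+1}$) bounds the uncertainty at the output $\bar{\theta}_0$ by the maximum in-set pairwise distance, and Cauchy-Schwarz over $k$ yields the claimed $\sqrt{1/K}$ rate.

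The main obstacle is the first substep. Standard PSR OMLE analyses rely crucially on realizability ($\theta_0^* \in \hat{\Theta}_0^{\mathrm{u}}$) so that the log-likelihood ratio is a mean-zero martingale increment and admits sub-Gaussian control via Markov's inequality on the exponential. Here $\theta_0^* \notin \hat{\Theta}_0^{\mathrm{u}}$ in general, and the ratio carries a nonzero bias that, if not handled carefully, accumulates linearly in $KH$ and destroys the rate. The R\'enyi divergence is the precise tool that uniformly bounds the $(\alpha-1)$-th moment of the likelihood ratio across policies, but we must track how this moment bound propagates through the tail argument---producing the $\frac{\mathbf{1}_{\{\epsilon_0 \neq 0\}}}{\alpha-1}$ factor---and also verify that the $\eta$-bracketing discretization, applied to $\hat{\Theta}_0^{\mathrm{u}}$ rather than a class containing the truth, still closes at the rate $\eta \leq 1/(KH)$.
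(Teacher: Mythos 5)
Your proposal follows essentially the same route as the paper's proof: the R\'enyi projection $\theta_0^{\epsilon_0}$, the Markov/Chernoff bound on the $(\alpha-1)$-th moment of the likelihood ratio yielding the $\epsilon_0 KH + \frac{1}{\alpha-1}\log(K/\delta)$ bias, the bracketing-based MLE concentration, the likelihood-to-Hellinger comparison, the PSR change-of-measure and elliptical potential lemmas, and the optimism-plus-Cauchy--Schwarz conclusion. The only cosmetic difference is that the paper centers the Hellinger and TV control at the true model $\theta_0^*$ (whose distribution generates the data, which keeps the change-of-measure and elliptical-potential expectations aligned) rather than at the projection $\tilde{\theta}_0$; the resulting extra $\epsilon_0 KH$ terms are absorbed into $\beta_0$ either way.
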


{\bf Benefits of downstream transfer learning:} \Cref{thm:downstream l2} shows that when $\epsilon_0<\epsilon^2/4$, with sample complexity at most $\tilde{O}(\frac{rQ_A^2|\mathcal{A}|H\beta_0}{\gamma^2\epsilon^2})$, OMLE identifies an $\epsilon$-optimal policy for the downstream task. As a comparison, the best known sample complexity for single-task PSR RL without transfer learning is $\tilde{O}(\frac{rQ_A^2|\mathcal{A}|H\beta}{\gamma^2\epsilon^2})$, where $\beta = \tilde{O}(\log\mathcal{N}_{\eta}(\Theta))$ \citep{chen2022partially}. It is clear that as long as $\beta_0<\beta$, then downstream learning enjoys transfer benefit in the sample complexity. 

Notably, in the realizable case when $\epsilon_0 = 0$, i.e. $\theta_0^*\in \hat{\Theta}_0^{\mathrm{u}}$, we must have $\beta_0 = \tilde{O}(\log\mathcal{N}_{\eta}(\hat{\Theta}_0^{\mathrm{u}}))\leq \beta$, since $\hat{\Theta}_0^{\mathrm{u}}\subset\Theta$. In the non-realizable case when $\epsilon_0>0$, compared to the realizable case, the estimation error of $\bar{\theta}_0$ has an additive factor of $\tilde{O}(\sqrt{\epsilon_0} + \sqrt{1/(K(\alpha-1))})$ after hiding system parameters. We remark that this factor shrinks if the approximation error of $\hat{\Theta}_0^{\mathrm{u}}$ decreases and the order of R\'enyi divergence grows, which coincide with the intuition.






\subsection{Examples in  Downstream Learning Tasks}\label{sec:downstream example}
We revisit the examples presented in upstream multi-task learning, specifically \Cref{eg:same transtion,eg:M+delta,eg: linear com}, and subsequently extend their application in downstream tasks under the realizable setting. With the prior knowledge obtained from upstream learning, these examples exhibit reduced $\eta$-bracketing number, and hence benefit in the sample efficiency. Detailed proofs are in Appx.~\ref{Appd_sub: F-4}.  


\setcounter{example}{0}


\begin{example}[Multi-task POMDP 
 with Common transition kernels] Suppose $\hat{\mathbb{T}}$ is the output from UMT-PSR. In this case, the downstream $\hat{\Theta}_0^{\mathrm{u}}$ is constructed by combining $\hat{\mathbb{T}}$ and all possible emission distributions. Then $\beta_0 = \tilde{O}(H|\mathcal{S}||\mathcal{O}|)$. However, for POMDP without prior knowledge, $\beta = \tilde{O}(H(|\mathcal{S}|^2|\mathcal{A}|+|\mathcal{S}||\mathcal{O}|))$. Clearly, $\beta_0\leq \beta$, indicating the benefit of downstream learning. 

\end{example}

\if{0}

\begin{example}[Multi-task POMDP with common transition kernels] We then examine the case where each task shares the same emission distributions, i,e. there exists a set of unique emission distributions $\{\mathbb{O}_h^*\}_{h=1}^H$ such that $\mathbb{O}_h^n = \mathbb{O}_h^*$ for all $n\in[N]$ and $h\in[H]$.   Then, the log $\eta$-bracketing number is at most $ O ( H|\mathcal{S}|^2|\mathcal{A}| \log\frac{H |\mathcal{A}||\mathcal{S}|}{\eta})$.

\end{example}
\fi

For PSRs without prior knowledge, we have $\beta^{\mathrm{PSR}} = \tilde{O}(r^2|\mathcal{O}||\mathcal{A}|H^2 )$.

\begin{example}[Multi-task PSR with perturbed models]\label{eg:M+delta-ds}The downstream task $\mathtt{P}_0$ is a noisy perturbation of a base task $\mathtt{P}_{\mathrm{b}}$. Specifically, for each step $h \in [H]$, any $(o,a) \in \Oc \times \Ac$, we have 
     \begin{align}
         \textstyle \phi_H^0=\phi_H^{\mathrm{b}}, \Mbf_h^{0}(o_h,a_h)=\Mbf_h^\mathrm{b}(o_h,a_h)+\Delta^{0}_h(o_h,a_h), \quad \Delta^{0}_h \in \boldsymbol{\Delta}. \nonumber
     \end{align} 
Then, $\beta_0 = \tilde{O}(H\log|\boldsymbol{\Delta}|)$, which is much lower than $\beta^\mathrm{PSR}$ if $\log |\boldsymbol{\Delta}| \ll \tilde{O}(r^2|\mathcal{O}||\Ac|H)$.

\end{example}

\begin{example}[Multi-task PSR: Linear combination of core tasks]\label{eg: linear com-ds}
The downstream task $\mathtt{P}_0$ lies in the linear span of $L$ upstream tasks (e.g. the firs $L$ source tasks). 
Specifically, there exists a coefficient vector $\boldsymbol{\alpha}^{0}=(\alpha_1^{0},\cdots,\alpha_L^{0})^\top \in \Rb^L$ s.t. for any $h \in [H]$ and $(o_h,a_h) \in \Oc \times \Ac$,
\begin{align}
        \textstyle \phi_H^{0}=\sum_{l=1}^L\alpha_l^{0} \phi_H^l, \quad \Mbf_h^{0}(o_h,a_h)=\sum_{l=1}^L\alpha_l^{0} \Mbf_h^l(o_h,a_h). \nonumber
\end{align}
    For regularization, we assume $0 \leq \alpha^{0}_l$ for all $l \in [L]$, and $\sum_{l=1}^L\alpha^{0}_l=1$. Then  $\beta_0 = \tilde{O}(LH)$, which is much smaller than $\beta^\mathrm{PSR}$ if $L \leq \min\{N,r^2  |\mathcal{O}| |\mathcal{A}| H^2\}$.
\end{example}


\section{Conclusion}
In this paper, we study multi-task learning on general non-markovian low-rank decision making problems. Given that all tasks share the same observation and action spaces, using the approach of PSRs, we theoretically characterize that multi-task learning presents benefit over single-task learning if the joint model
class of PSRs has a smaller $\eta$-bracketing number. We also provide specific example multi-task PSRs with small $\eta$-bracketing numbers. Then, with prior knowledge from the upstream, we show that downstream learning is more efficient than learning from scratch.

\if{0}
\subsubsection*{Author Contributions}
If you'd like to, you may include  a section for author contributions as is done
in many journals. This is optional and at the discretion of the authors.

\subsubsection*{Acknowledgments}
Use unnumbered third level headings for the acknowledgments. All
acknowledgments, including those to funding agencies, go at the end of the paper.
\fi

\bibliography{MultiTaskPSR}
\bibliographystyle{apalike}

\newpage

\appendix

\linewidth\hsize {\centering
	{\Large\bfseries  Supplementary Materials \par}}

\allowdisplaybreaks

\section{Multi-task MLE Analysis}

The following lemma shows that the true model lies in the confidence set with high probability.

\begin{prop}[Confidence Set]\label{prop:MT MLE1} For all $k\in [K], $ and for any $\boldsymbol{\theta}=(\theta_1,\ldots,\theta_N) \in \ThetaB$, let $\eta\leq 1/(NKH)$, ${\beta}^{(N)} = c\log\left(\mathcal{N}_{\eta}(\ThetaB)NKH/\delta\right)$ for some $c \geq 0$. With probability at least $1-\delta$, for any $k\in[K]$, we have
    \begin{align}
        \sum_{n\leq N}\sum_{t\leq k} \sum_{h\leq H} \log  \Pb_{\theta_n^*}^{\nu^{\pi^{n,t}}_{h}}(\tau_H^{n,t,h}) \geq \sum_{n\leq N} \sum_{t \leq k} \sum_{h\leq H} \log   \Pb_{\theta_n}^{\nu^{\pi^{n,t}}_{h}}(\tau_H^{n,t,h})   - \beta^{(N)}.
    \end{align}
\end{prop}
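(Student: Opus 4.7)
The plan is to adapt the standard bracketing-based MLE concentration template to the multi-task sequential-exploration setting of UMT-PSR. First, by the definition of the $\eta$-bracketing number, fix an $\eta$-bracket cover $\{[\mathbf{l}^j,\mathbf{g}^j]\}_{j=1}^{\Nc_\eta(\ThetaB)}$ of the joint model class under the policy-weighted $\ell_\infty$ norm $\norm{\cdot}_\infty^{\mathrm{p}}$. For any fixed $\boldsymbol{\theta}=(\theta_1,\ldots,\theta_N)\in\ThetaB$, there is an index $j=j(\boldsymbol{\theta})$ with $l^j_n(\tau_H)\le \Pb_{\theta_n}(\tau_H)\le g^j_n(\tau_H)$ pointwise for every $n$. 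Since $\log$ is monotone and the policy factor cancels in any log-likelihood ratio (i.e., $\log \Pb_{\theta_n}^{\nu}(\tau)-\log \Pb_{\theta_n^*}^{\nu}(\tau)=\log\bigl(\Pb_{\theta_n}(\tau)/\Pb_{\theta_n^*}(\tau)\bigr)$), bounding $\sum\log(g^j_n/\Pb_{\theta_n^*})$ uniformly over the finitely many brackets $j$ suffices.

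Second, for each fixed bracket $j$, define
\begin{align*}
Z^j_{n,t,h}:=\tfrac{1}{2}\log\frac{g^j_n(\tau_H^{n,t,h})}{\Pb_{\theta_n^*}(\tau_H^{n,t,h})},
\end{align*}
and let $\mathcal{F}_{t-1}$ be the $\sigma$-field generated by all trajectories from iterations $<t$, so that the planning policies $\boldsymbol{\pi}^t$ are $\mathcal{F}_{t-1}$-measurable. Conditional on $\mathcal{F}_{t-1}$, the $NH$ trajectories within iteration $t$ are mutually independent with $\tau_H^{n,t,h}\sim\Pb_{\theta_n^*}^{\nu^{\pi^{n,t}}_h}$. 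A Hellinger-type calculation together with $\sqrt{ab}\le (a+b)/2$ then gives
\begin{align*}
\Eb\bigl[\exp(Z^j_{n,t,h})\,\big|\,\mathcal{F}_{t-1}\bigr] = \sum_\tau \nu^{\pi^{n,t}}_h(\tau)\sqrt{\Pb_{\theta_n^*}(\tau)\,g^j_n(\tau)} \le \tfrac{1}{2}+\tfrac{1}{2}\sum_\tau \nu^{\pi^{n,t}}_h(\tau)\,g^j_n(\tau) \le 1+\tfrac{\eta}{2},
\end{align*}
where the last step uses that $g^j_n$ upper bounds a legitimate distribution $\Pb_{\theta'_n}$ from the bracket, together with the bound $\sum_\tau \nu(\tau)(g^j_n-l^j_n)(\tau)\le \eta$ from the policy-weighted $\ell_\infty$ bracket width.

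Third, iterating this conditional bound through the tower rule across $t\le k$ and the independent $(n,h)$-samples, and then applying Markov's inequality, yields for each $j$
\begin{align*}
\Pb\!\Bigl(\sum_{n\le N,\,t\le k,\,h\le H}Z^j_{n,t,h}\ge x\Bigr) \le \exp\!\bigl(NKH\eta/2 - x\bigr).
\end{align*}
A union bound over $j\in[\Nc_\eta(\ThetaB)]$ and $k\in[K]$, using $\eta\le 1/(NKH)$ to collapse the $NKH\eta/2$ term to a constant, shows that choosing $\beta^{(N)}=c\log(\Nc_\eta(\ThetaB)NKH/\delta)$ with a sufficiently large $c$ suffices; doubling back through the factor $\tfrac{1}{2}$ in $Z^j_{n,t,h}$ and re-inserting the cancelled policy factor produces exactly the stated inequality.

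The main technical obstacle is correctly handling the interplay of adaptive data collection with the multi-task joint bracket. The planning policy $\boldsymbol{\pi}^t$ depends on all prior trajectories across all $N$ tasks, so the samples are neither i.i.d.\ nor independent across iterations; what rescues the argument is that, conditional on $\mathcal{F}_{t-1}$, each $\tau_H^{n,t,h}$ still has the exact law $\Pb_{\theta_n^*}^{\nu^{\pi^{n,t}}_h}$, which is precisely the ingredient the Hellinger-type exponential-moment bound requires. The joint bracketing in the policy-weighted $\ell_\infty$ norm is tailored for this step: a single bracket simultaneously controls the discretization across all $N$ tasks, so only $\log\Nc_\eta(\ThetaB)$—rather than $N\log\Nc_\eta(\Theta)$—enters the union bound, which is exactly where the multi-task savings behind \Cref{thm:upstream} originate.
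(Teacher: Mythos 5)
Your proof is correct and follows essentially the same route as the paper's: an $\eta$-bracket cover of the joint class, a conditional exponential-moment bound of $1+O(\eta)$ per sample via the tower rule over the adaptively chosen policies, then Chernoff--Markov and a union bound over brackets and $k$, finishing by monotonicity since $\Pb_{\theta_n}\le g^j_n$. The only (harmless) difference is that you exponentiate half the log-ratio and invoke $\sqrt{ab}\le(a+b)/2$, whereas the paper works with the full likelihood ratio directly, so that $\Eb[\Pb_{\overline{\theta}_n}^{\nu}/\Pb_{\theta_n^*}^{\nu}\mid\mathcal{F}_{t-1}]=\sum_{\tau}\Pb_{\overline{\theta}_n}^{\nu}(\tau)\le 1+\eta$; both choices are absorbed into the constant $c$.
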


\begin{proof} 

Consider a set of $\eta$-brackets, denoted by $\boldsymbol{\Theta}_{\eta}$, that covers $\boldsymbol{\Theta}$. For any $\boldsymbol{\theta}$, we can find two measures in $\boldsymbol{\Theta}_{\eta}$ parameterized by $\underline{\boldsymbol{\theta}}$ and $\overline{\boldsymbol{\theta}}$ such that
\begin{align}
        \forall n,\tau_H,\pi, \quad &\Pb_{\overline{\theta}_n}^{\pi} (\tau_H) \geq \Pb_{\theta_n }^{\pi}(\tau_H) \geq \Pb_{\underline{\theta}_n} (\tau_H), \label{Eq: prop1-1}\\
        \forall n, \quad &\mathtt{D}_{\TV}\left( \Pb_{\overline{\theta}_n}^{\pi} (\tau_H) , \Pb_{\underline{\theta}_n }^{\pi}(\tau_H) \right) \leq \eta. \nonumber
\end{align}

Note that the above two inequalities imply that $\sum_{\tau_H} \Pb_{\overline{\theta}_n}^{\pi} (\tau_H) \leq    \eta + \sum_{\tau_H}\Pb_{\underline{\theta}_n }^{\pi}(\tau_H) \leq 1 +\eta.$

Then, we have
    \begin{align*}
        \Eb&\left[\exp\left(\sum_n\sum_{t=1}^{k}\sum_h\log   \frac{\Pb_{\overline{\theta}_n}^{\nu^{\pi^{n,t}}_{h}}(\tau_H^{n,t,h})}{\Pb_{\theta_n^*}^{\nu^{\pi^{n,t}}_{h}}(\tau_H^{n,t,h})}   \right)\right] \\
        & = \Eb\left[\exp\left(\sum_n\sum_{t=1}^{k-1}\sum_h\log  \frac{\Pb_{\overline{\theta}_n}^{\nu^{\pi^{n,t}}_{h}}(\tau_H^{n,t,h})}{\Pb_{\theta_n^*}^{\nu^{\pi^{n,t}}_{h}}(\tau_H^{n,t,h})}  \right)\Eb\left[\prod_n\prod_h \frac{\Pb_{\overline{\theta}_n}^{\nu^{\pi^{n,k}}_h}(\tau_H^{n,k,h})}{\Pb_{\theta_n^*}^{\nu^{\pi^{n,k}}_h}(\tau_H^{n,k,h})}    \right] \right] \\
        & =\Eb\left[\exp\left(\sum_n\sum_{t=1}^{k-1}\sum_h\log  \frac{\Pb_{\overline{\theta}_n}^{\nu^{\pi^{n,t}}_{h}}(\tau_H^{n,t,h})}{\Pb_{\theta_n^*}^{\nu^{\pi^{n,t}}_{h}}(\tau_H^{n,t,h})}  \right)\prod_n\prod_h \sum_{\tau_H^{n,k,h}}\Pb_{\overline{\theta}_n}^{\nu^{\pi^{n,k}}_h}(\tau_H^{n,k,h})\right]  \\
        & \leq \Eb\left[\exp\left(\sum_n\sum_{t=1}^{k-1}\sum_h \log \left[\frac{\Pb_{\overline{\theta}_n}^{\nu^{\pi^{n,t}}_{h}}(\tau_H^{n,t,h})}{\Pb_{\theta_n^*}^{\nu^{\pi^{n,t}}_{h}}(\tau_H^{n,t,h})}\right] \right)\prod_n \prod_h \left(1+\frac{1}{NKH} \right)\right]\\
        & \leq \left(1+\frac{1}{NKH}\right)^{NKH} \leq e.
    \end{align*}
    
Therefore, by Chernoff type bound, we have 
    \begin{align*}
        &\Pb\left( \sum_{t\leq k, h\leq H \atop n\leq N} \log  \frac{\Pb_{\overline{\theta}_n}^{\nu^{\pi^{n,t}}_{h}}(\tau_H^{n,t,h})}{\Pb_{\theta_n^*}^{\nu^{\pi^{n,t}}_{h}}(\tau_H^{n,t,h})}  \geq \log(1/\delta)\right)   \leq \frac{\Eb\left[\exp\left(\sum_{t\leq k, h\leq H \atop n\leq N} \log  \frac{\Pb_{\overline{\theta}_n}^{\nu^{\pi^{n,t}}_{h}}(\tau_H^{n,t,h})}{\Pb_{\theta_n^*}^{\nu^{\pi^{n,t}}_{h}}(\tau_H^{n,t,h})}\right)\right]}{{1/\delta}} \leq e \delta.
    \end{align*}
    Taking a union bound over all $(\overline{\theta},k) \in \bar{\boldsymbol\Theta}_{\epsilon} \times [K]$ and rescaling $\delta$, we have for any $\boldsymbol{\theta} \in \boldsymbol{\Theta}$, 
    \begin{align*}
        \Pb\left(\forall \overline{\boldsymbol{\theta}}\in\boldsymbol{\Theta}_{\eta}, k\in[K],~~ \sum_{t\leq k, h\leq H \atop n\leq N} \log  \frac{\Pb_{\overline{\theta}_n}^{\nu^{\pi^{n,t}}_{h}}(\tau_H^{n,t,h})}{\Pb_{\theta_n^*}^{\nu^{\pi^{n,t}}_{h}}(\tau_H^{n,t,h})}   \geq \log(eK\mathcal{N}_{\eta}(\boldsymbol\Theta)/\delta)\right) \leq \delta.
    \end{align*}
The proof is finished by noting that $\overline{\boldsymbol{\theta}}$ is an optimistic measure (see \Cref{Eq: prop1-1}).
\end{proof}

The following lemma establishes the relationship between the Hellinger-squared distance and the difference of log likelihood functions between true parameters and any possible parameters from the model class. 

\begin{prop}\label{prop: hellinger< log}

Let $\eta\leq 1/(N^2K^2H^2)$, ${\beta}^{(N)} = c\log\left(\mathcal{N}_{\eta}(\ThetaB)NKH/\delta\right)$ for some $c \geq 0$. Then, with probability at least $1-\delta$, we have, for any $\boldsymbol{\theta}=(\theta_1,\ldots,\theta_N) \in \ThetaB$, the following inequality holds.
    \begin{align*}
        \sum_{t\leq k, h\leq H \atop n\leq N} \mathtt{D}_{\mathtt{H}}^2 \left( \Pb_{ \theta_n }^{ \nu^{\pi^{n,t}}_{h} } , \Pb_{\theta_n^* }^{\nu^{\pi^{n,t}}_{h} }\right) \leq \sum_{t\leq k, h\leq H \atop n\leq N} \log\frac{ \Pb_{\theta_n^*}^{\nu^{\pi^{n,t}}_{h}} (\tau_H^{n,t,h}) }{ \Pb_{\theta_n}^{ \nu^{\pi^{n,t}}_{h} } (\tau_H^{n,t,h}) } + {\beta}^{(N)}.
    \end{align*}
\end{prop}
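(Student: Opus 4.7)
The plan is to mimic the argument of Proposition~\ref{prop:MT MLE1}, but with the key twist of halving the log--likelihood exponent, so that the moment generating function of the resulting statistic naturally produces the squared Hellinger distance. First, for each $\boldsymbol\theta\in\boldsymbol\Theta_{\mathrm u}$ covered by the upper--bracket element $\bar{\boldsymbol\theta}$ (satisfying $\Pb_{\bar\theta_n}^{\pi}(\tau_H)\geq\Pb_{\theta_n}^{\pi}(\tau_H)$ pointwise and the bracket--width constraint $\sum_{\tau_H}(\Pb_{\bar\theta_n}^{\pi}-\Pb_{\theta_n}^{\pi})\leq\eta$ for every $\pi$), I would establish the one--step MGF inequality
\begin{align*}
\Eb_{\tau_H\sim\Pb_{\theta_n^*}^{\nu}}\!\left[\sqrt{\Pb_{\bar\theta_n}^{\nu}(\tau_H)\big/\Pb_{\theta_n^*}^{\nu}(\tau_H)}\right] = \sum_{\tau_H}\sqrt{\Pb_{\bar\theta_n}^{\nu}\,\Pb_{\theta_n^*}^{\nu}} \;\leq\; 1-\tfrac12\,\mathtt{D}_{\mathtt{H}}^2(\Pb_{\theta_n}^{\nu},\Pb_{\theta_n^*}^{\nu}) + \sqrt{\eta}.
\end{align*}
This bound follows from subadditivity $\sqrt{\bar g}\leq\sqrt{\Pb_{\theta_n}^{\nu}}+\sqrt{\bar g-\Pb_{\theta_n}^{\nu}}$, Cauchy--Schwarz on the excess term, and the bracket--width constraint; together with $1+u\leq e^u$ it upgrades to the exponential form $\leq\exp(-\tfrac12\mathtt{D}_{\mathtt H}^2+\sqrt{\eta})$.

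Second, I would assemble, for each fixed $\boldsymbol\theta$, the process
\begin{align*}
M_k^{\boldsymbol\theta} \;=\; \exp\!\left(\tfrac12\!\!\sum_{t\leq k,n,h}\!\!\log\tfrac{\Pb_{\bar\theta_n}^{\nu^{\pi^{n,t}}_h}(\tau_H^{n,t,h})}{\Pb_{\theta_n^*}^{\nu^{\pi^{n,t}}_h}(\tau_H^{n,t,h})} + \tfrac12\!\!\sum_{t\leq k,n,h}\!\!\mathtt{D}_{\mathtt H}^2\!\bigl(\Pb_{\theta_n}^{\nu^{\pi^{n,t}}_h},\Pb_{\theta_n^*}^{\nu^{\pi^{n,t}}_h}\bigr)-NKH\sqrt\eta\right).
\end{align*}
Since the samples at iteration $t$ are conditionally independent across $(n,h)$ given $\Fc_{t-1}$ and the policies used at iteration $t$ are $\Fc_{t-1}$--measurable, multiplying the per--step MGF bounds yields $\Eb[M_k^{\boldsymbol\theta}\mid\Fc_{k-1}]\leq M_{k-1}^{\boldsymbol\theta}$. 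Hence $(M_k^{\boldsymbol\theta})$ is a nonnegative supermartingale with $M_0^{\boldsymbol\theta}\leq1$, and Ville's maximal inequality gives $\Pb(\sup_k M_k^{\boldsymbol\theta}\geq1/\delta)\leq\delta$.

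Third, I would lift the pointwise bound to a uniform statement over $\boldsymbol\theta\in\boldsymbol\Theta_{\mathrm u}$ by union bounding over the $\mathcal{N}_{\eta}(\boldsymbol\Theta_{\mathrm u})$ brackets, applied to one representative per bracket. Any two parameters inside the same bracket induce distributions at TV distance $\leq\eta$, so by the triangle inequality for $\mathtt{D}_{\mathtt H}$ combined with $\mathtt{D}_{\mathtt H}^2\leq\mathtt{D}_{\TV}$, the per--index Hellinger--squared distances to $\Pb_{\theta_n^*}^\nu$ differ by at most $O(\sqrt\eta)$, contributing at most $O(NKH\sqrt\eta)$ in aggregate. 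The tighter choice $\eta\leq 1/(N^2K^2H^2)$ makes both this intra--bracket slack and the $NKH\sqrt\eta$ correction inside $M_k^{\boldsymbol\theta}$ of order $1$, absorbable into the constant $c$ defining $\beta^{(N)}$. Finally, monotonicity $\Pb_{\bar\theta_n}^{\nu}\geq\Pb_{\theta_n}^{\nu}$ gives $\log\bigl(\Pb_{\theta_n^*}^{\nu}/\Pb_{\bar\theta_n}^{\nu}\bigr)\leq\log\bigl(\Pb_{\theta_n^*}^{\nu}/\Pb_{\theta_n}^{\nu}\bigr)$, and rearranging the resulting inequality yields the claimed bound.

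\textbf{Main obstacle.} The subtle step is bridging between the upper--bracket surrogate that appears naturally in the MGF computation and the actual parameter $\boldsymbol\theta$ in the statement, while uniformizing over all $\boldsymbol\theta$ within a single bracket. This is precisely what forces the tighter condition $\eta\leq 1/(N^2K^2H^2)$ here, as opposed to $\eta\leq 1/(NKH)$ in Proposition~\ref{prop:MT MLE1}: the extra factor ensures that the $O(NKH\sqrt\eta)$ slack from Hellinger continuity within a bracket remains $O(1)$ and can be hidden in the constant of $\beta^{(N)}$.
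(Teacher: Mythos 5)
Your proposal is correct and follows essentially the same route as the paper's proof: upper-bracket the model, bound the Bhattacharyya affinity of the bracket representative by the Hellinger divergence of the true parameter plus an $O(\sqrt{\eta})$ slack, exponentiate into a telescoping product whose conditional expectation is controlled, and union bound over the $\mathcal{N}_{\eta}(\ThetaB)$ brackets before using monotonicity of the upper bracket to return to $\boldsymbol{\theta}$. The only differences are cosmetic — you package the concentration as a supermartingale with Ville's inequality where the paper computes $\Eb[\exp(\cdot)]=1$ and applies Markov plus a union bound over $k$, and you carry a factor-$\tfrac12$ Hellinger convention that is absorbed into the constant $c$ in $\beta^{(N)}$.
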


\begin{proof}

By the definition of $\eta$-bracket, for any multi-task parameter $\boldsymbol{\theta}$, we can find $\overline{\boldsymbol\theta}$ within a finite set of $\eta$-brackets such that $\sum_{\tau_H}\left|\Pb_{ \theta_n}^{\pi}(\tau_H) -  \Pb_{\bar{\theta}_n}^{\pi}(\tau_H) \right|\leq\eta$. 
Then, for any $n$ and $\pi$, we have

\begin{align}
     \mathtt{D}_{\mathtt{H}}^2 & ( \Pb_{\theta_n}^{\pi} , \Pb_{\theta_n^*}^{\pi} ) \nonumber\\
    & =  1 -  \sum_{\tau_H} \sqrt{ \Pb_{\theta_n}^{\pi}(\tau_H) \Pb_{\theta_n^*}^{\pi} (\tau_H)  }   \nonumber\\
    & =   1 -  \sum_{\tau_H} \sqrt{ \Pb_{\overline{\theta}_n}^{\pi}(\tau_H) \Pb_{\theta^*}^{\pi} (\tau_H)  +  \left(\Pb_{ \theta_n }^{\pi}(\tau_H) -  \Pb_{\overline{\theta}_n }^{\pi}(\tau_H) \right)\Pb_{\theta_n^*}^{\pi} (\tau_H)}  \nonumber\\
    & \overset{\RM{1}}{\leq} 1 - \sum_{\tau_H} \sqrt{ \Pb_{\overline{\theta}_n}^{\pi}(\tau_H) \Pb_{\theta_n^*}^{\pi} (\tau_H)  } + \sum_{\tau_H}\sqrt{ \left|\Pb_{ \theta_n }^{\pi}(\tau_H) -  \Pb_{\overline{\theta}_n }^{\pi}(\tau_H) \right| \Pb_{\theta_n^*}^{\pi} (\tau_H)} \nonumber\\
    & \overset{\RM{2}}{\leq} - \log \mathop{\Eb}_{\tau_H\sim\Pb_{\theta_n^*}^{\pi}(\cdot)} \sqrt{ \frac{ \Pb_{ \overline{\theta}_n }^{\pi}(\tau_H) }{ \Pb_{\theta_n^*}^{\pi}(\tau_H) } } + \sqrt{\sum_{\tau_H}\left|\Pb_{ \theta_n }^{\pi}(\tau_H) -  \Pb_{\overline{\theta}_n}^{\pi}(\tau_H) \right| } \nonumber\\
    & \overset{\RM{3}}{\leq}  - \log \mathop{\Eb}_{\tau_H\sim\Pb_{\theta_n^*}^{\pi}(\cdot)} \sqrt{ \frac{ \Pb_{ \overline{\theta}_n }^{\pi}(\tau_H) }{ \Pb_{\theta_n^*}^{\pi}(\tau_H) } } + \sqrt{\eta}, \label{Eq: prop2-1}
\end{align}
where $\RM{1}$ follows from the fact that $\sqrt{a+b}\leq \sqrt{|a|}+\sqrt{|b|}$, $\RM{2}$ follows from the fact that $1-x \leq \log x$ for $x \geq 0$ (the first term) and the Cauchy-Schwarz inequality (the second term), and $\RM{3}$ follows from the definition of $\eta$-bracket.

Then, for any fixed $\Bar{\theta}_n$, we have
\begin{align}
    \Eb&\left[\exp\left( \frac{1}{2} \sum_{n\leq N} \sum_{t\leq k}\sum_{h\leq H} \log\frac{ \Pb_{\overline{\theta}_n}^{\nu^{\pi^{n,t}}_{h}}(\tau_H^{n,t,h}) }{\Pb_{\theta_h^*}^{\nu^{\pi^{n,t}}_{h}}(\tau_H^{n,t,h})}  - \sum_n\sum_{ \pi \in \Pi_n^k}\log \mathop{\Eb}_{\tau_H\sim\Pb_{\theta_n^*}^{\pi}(\cdot)} \sqrt{ \frac{ \Pb_{ \overline{\theta}_n}^{\pi}(\tau_H) }{ \Pb_{\theta_n^*}^{\pi}(\tau_H) } } \right)\right] \nonumber\\
    & = \frac{ \Eb\left[ \prod_{n\leq N} \prod_{ t\leq k } \prod_{h\leq H} \sqrt{ \frac{ \Pb_{ \overline{\theta}_n }^{\nu^{\pi^{n,t}}_{h}}(\tau_H) }{ \Pb_{\theta_n^*}^{\nu^{\pi^{n,t}}_{h}}(\tau_H) } } \right] }{ \Eb\left[ \prod_{n\leq N} \prod_{t\leq k} \prod_{h\leq H} \sqrt{ \frac{ \Pb_{\overline{\theta}_n}^{\nu^{\pi^{n,t}}_{h}}(\tau_H) }{ \Pb_{\theta_n^*}^{\nu^{\pi^{n,t}}_{h}}(\tau_H) } } \right] } \nonumber\\
    & = 1. \label{Eq: prop2-2}
\end{align}

Hence, by taking union bound over the finite set of $\eta$-brackets and $k\in[K]$, with probability at least $1-\delta$, we have for any $k\in[K]$
\begin{align*}
    \sum_{n\leq N} \sum_{t\leq k} \sum_{h\leq H} & \mathtt{D}_{\mathtt{H}}^2   ( \Pb_{\theta_n}^{\nu^{\pi^{n,t}}_{h}} , \Pb_{\theta_n^*}^{\nu^{\pi^{n,t}}_{h}} ) \\
    & \overset{\RM{1}}{\leq} \sum_{n\leq N} \sum_{t\leq k} \sum_{h\leq H}- \log \mathop{\Eb}_{\tau_H\sim\Pb_{\theta_n^*}^{\nu^{\pi^{n,t}}_{h}}(\cdot)} \sqrt{ \frac{ \Pb_{ \overline{\theta}_n }^{\nu^{\pi^{n,t}}_{h}}(\tau_H) }{ \Pb_{\theta_n^*}^{\nu^{\pi^{n,t}}_{h}}(\tau_H) } } + NKH\sqrt{\eta}\\
    & \overset{\RM{2}}{\leq} NKH\sqrt{\eta} + \frac{1}{2} \sum_{n\leq N} \sum_{t\leq k}\sum_{h\leq H}   \log\frac{ \Pb_{ \theta_n^*}^{\nu^{\pi^{n,t}}_{h}}(\tau_H^{n,t,h}) }{\Pb_{\overline{\theta}_n}^{\nu^{\pi^{n,t}}_{h}}(\tau_H^{n,t,h})} + \log\frac{K\mathcal{N}_{\eta}(\boldsymbol{\Theta})}{\delta} \\
    &\overset{\RM{3}}{\leq} 1 + \frac{1}{2} \sum_{n\leq N} \sum_{t\leq k}\sum_{h\leq H}   \log\frac{ \Pb_{ \theta_n^*}^{\nu^{\pi^{n,t}}_{h}}(\tau_H^{n,t,h}) }{\Pb_{\overline{\theta}_n}^{\nu^{\pi^{n,t}}_{h}}(\tau_H^{n,t,h})}  + \log\frac{K\mathcal{N}_{\eta}(\boldsymbol{\Theta})}{\delta}.
\end{align*}
where $\RM{1}$ follows from \Cref{Eq: prop2-1}, $\RM{2}$ follows from \Cref{Eq: prop2-2}, the Chernoff’s method and the union bound, and $\RM{3}$ follows from that $\eta \leq 1/
(N^2K^2H^2)$.

The proof is finished by noting that $\overline{\theta}_n$ is an optimistic measure.

\end{proof}

\section{Properties of PSRs}

\if{0}
Useful idendities
\begin{align}
    &\bar{\psi}(\tau_h) = \frac{\psi(\tau_h)}{\phi_h^{\top}\psi(\tau_h)} \\
    & \frac{\phi_h^{\top}\psi(\tau_h)}{\phi_{h-1}^{\top}\psi(\tau_{h-1})} = \frac{\phi_h^{\top}\psi(\tau_h)\pi(\tau_h)}{\phi_{h-1}^{\top}\psi(\tau_{h-1})\pi(\tau_h)} = \frac{\Pb_{\theta}^{\pi}(\tau_h)}{ \pi(a_h|o_h,\tau_{h-1}) \Pb_{\theta}^{\pi}(\tau_{h-1}) } = \Pb_{\theta}(o_h|\tau_{h-1}) \\
    & \Mbf(o_h,a_h)\bar{\psi}(\tau_{h-1}) = \frac{\psi(\tau_h)}{\phi_{h-1}^{\top}\psi(\tau_{h-1})} = \bar{\psi}(\tau_h)\frac{\phi_h^{\top}\psi(\tau_h)}{\phi_{h-1}^{\top}\psi(\tau_{h-1})} = \bar{\psi}(\tau_h) \Pb_{\theta}(o_h|\tau_{h-1})
\end{align}
\fi

First, for any model $\theta = \{\phi_h, \Mbf_h(o_h,a_h)\}$, we have the following identity
\begin{align}
    &\Mbf_h(o_h,a_h) \bar{\psi}(\tau_{h-1}) = \Pb_{\theta}(o_h|\tau_{h-1})\bar{\psi}(\tau_h).\label{eqn:Mpsi is conditional probability}
\end{align}

The following proposition is adapted from Lemma C.3 in \citet{liu2022optimistic} and Proposition 1 in \citet{huang2023provably}.
\begin{prop}[TV-distance $\leq$ Estimation error]\label{prop:TV less than Estimation error}
For any task $n \in [N]$, policy $\pi$, and any two parameters $\theta,\theta'\in \Theta$, we have
    \begin{align*}
        \mathtt{D}_{\TV} \left( \Pb_{\theta'}^{\pi} , \Pb_{\theta}^{\pi} \right) \leq    \sum_{h=1}^H \sum_{\tau_H} \left| \mbf'(\omega_h)^{\top} \left(  \Mbf_h'(o_h,a_h) - \Mbf_h(o_h,a_h) \right) \psi(\tau_{h-1}) \right|\pi(\tau_H).
    \end{align*}
\end{prop}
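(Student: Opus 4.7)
The inequality is a standard telescoping plus triangle-inequality argument on the PSR factorization of trajectory probabilities. The first step is to write, for either model,
\[
\Pb_\theta^\pi(\tau_H) \;=\; \pi(\tau_H)\cdot\phi_H^{\top}\Mbf_H(o_H,a_H)\cdots\Mbf_1(o_1,a_1)\psi_0,
\]
and similarly with primes, and to reduce the proof to bounding
$\sum_{\tau_H}\pi(\tau_H)\,|\Pb_{\theta'}(\tau_H^o\mid\tau_H^a)-\Pb_\theta(\tau_H^o\mid\tau_H^a)|$. Since $\DTV$ in this paper omits the factor of $1/2$, it suffices to bound this difference pointwise by a telescoping sum of the form appearing on the right-hand side.

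The plan is to introduce, for each $h\in\{0,1,\dots,H\}$, the hybrid scalar
\[
G_h(\tau_H)\;:=\;\mbf'(\omega_h)^{\top}\psi(\tau_h),
\]
using the primed ``future'' vector $\mbf'(\omega_h)^{\top}=\phi_H^{'\top}\Mbf_H'(o_H,a_H)\cdots\Mbf_{h+1}'(o_{h+1},a_{h+1})$ and the unprimed ``past'' vector $\psi(\tau_h)=\Mbf_h(o_h,a_h)\cdots\Mbf_1(o_1,a_1)\psi_0$. Under the self-consistent PSR convention (which fixes the leftmost vector, so effectively $\phi_H=\phi_H'$), we have $G_0=\Pb_{\theta'}(\tau_H^o\mid\tau_H^a)$ and $G_H=\Pb_\theta(\tau_H^o\mid\tau_H^a)$. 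The consecutive difference collapses to a single-factor discrepancy,
\[
G_{h-1}(\tau_H)-G_h(\tau_H)\;=\;\mbf'(\omega_h)^{\top}\bigl(\Mbf_h'(o_h,a_h)-\Mbf_h(o_h,a_h)\bigr)\psi(\tau_{h-1}),
\]
using $\mbf'(\omega_{h-1})^{\top}=\mbf'(\omega_h)^{\top}\Mbf_h'(o_h,a_h)$ and $\psi(\tau_h)=\Mbf_h(o_h,a_h)\psi(\tau_{h-1})$.

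Telescoping $G_0-G_H$, multiplying by $\pi(\tau_H)$, applying the triangle inequality inside the absolute value, and summing over $\tau_H$ then yields the stated bound. The only nontrivial point I expect to work through carefully is the boundary identification $G_H=\Pb_\theta(\tau_H^o\mid\tau_H^a)$, i.e.\ that $\phi_H$ can be taken common to both PSRs; this is a consequence of the normalization built into the self-consistent PSR parameterization stated in \Cref{subsec: dm prob} (the leftmost vector is pinned by $\sum_{(o,a)}\phi_{h+1}^{\top}\Mbf_h(o,a)=\phi_h^{\top}$ together with the unit total mass of the observation sequence). Once that is in place the remainder is purely mechanical: apply $|a+b|\le|a|+|b|$ inside the telescoped sum, exchange the finite sums over $\tau_H$ and $h$, and collect terms to match the right-hand side of the proposition.
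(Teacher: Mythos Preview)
Your telescoping-with-hybrids argument is exactly the standard route; the paper does not give its own proof of this proposition but simply cites Lemma~C.3 of \citet{liu2022optimistic} and Proposition~1 of \citet{huang2023provably}, both of which proceed in the same way you describe. The computation $G_{h-1}-G_h=\mbf'(\omega_h)^{\top}(\Mbf_h'-\Mbf_h)\psi(\tau_{h-1})$ is correct, and once the two endpoints are identified with the two trajectory probabilities the rest is indeed mechanical.

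The only point that needs adjustment is your justification of the boundary identity $G_H=\Pb_\theta(\tau_H^o\mid\tau_H^a)$. The self-consistency recursion $\sum_{(o,a)}\phi_{h+1}^{\top}\Mbf_h(o,a)=\phi_h^{\top}$ determines $\phi_h$ from $\phi_{h+1}$ and $\Mbf_h$, so it propagates \emph{downward} in $h$; it does not, by itself, pin down $\phi_H$, and ``unit total mass'' gives only a single linear constraint on $\phi_H$. The clean way to close this gap is to note that at step $H$ the future trajectory $\omega_H$ is empty, so $\Omega_H$ and hence $\mathcal{Q}_H$ contain only the empty test. This forces $d_H=1$, $[\psi(\tau_H)]_1=\Pb(\tau_H^o\mid\tau_H^a)$, and therefore $\phi_H=1$ for \emph{every} model in $\Theta$; in particular $\phi_H'=\phi_H$ automatically, and $G_H=\phi_H'^{\top}\psi(\tau_H)=\Pb_\theta(\tau_H^o\mid\tau_H^a)$ as required. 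With this correction your proof goes through verbatim.
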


\section{Proofs for Upstream learning: Proof of \Cref{thm:upstream}}

In this section, we first prove two lemmas, and then provide the proof for \Cref{thm:upstream}.

First, by the algorithm design and the construction of the confidence set, we have the following estimation guarantee.
\begin{lemma}[Estimation Guarantee in Upstream Learning]\label{lemma:Est guarantee upstream}
Let $\eta\leq 1/(N^2K^2H^2)$, ${\beta}^{(N)} = c\log\left(\mathcal{N}_{\eta}(\ThetaB)NKH/\delta\right)$ for some $c \geq 0$. With probability at least $1-\delta$, for any $k$ and any $\hat{\boldsymbol{\theta}} = (\hat{\theta}_1\ldots,\hat{\theta}_n) \in \boldsymbol{\mathcal{B}}_k$, we have
    \begin{align*}
        \sum_{n\leq N}\sum_{t\leq k-1}\sum_{h\leq H} \mathtt{D}_{\mathtt{H}}^2\left(\Pb_{\hat{\theta}_n}^{\nu^{\pi^{n,t}}_{h} } , \Pb_{\theta_n^* }^{\nu^{\pi^{n,t}}_{h} }\right) \leq 2\beta^{(N)}.
    \end{align*}

\end{lemma}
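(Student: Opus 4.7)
The plan is to chain together \Cref{prop:MT MLE1} and \Cref{prop: hellinger< log}, both of which hold simultaneously with probability at least $1-\delta$ (after a union bound and rescaling of $\delta$). The key observation is that the defining inequality of the confidence set $\boldsymbol{\mathcal{B}}_k$ can be ``sandwiched'' against the true parameter, because \Cref{prop:MT MLE1} guarantees $\boldsymbol{\theta}^{\star}$ itself lies in $\boldsymbol{\mathcal{B}}_k$ with high probability.

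First I would fix $\hat{\boldsymbol{\theta}} = (\hat{\theta}_1,\ldots,\hat{\theta}_N) \in \boldsymbol{\mathcal{B}}_k$. Since $\boldsymbol{\mathcal{B}}_k \subset \boldsymbol{\mathcal{B}}_{k}$ is defined by keeping only parameters whose aggregated log-likelihood is within $\beta^{(N)}$ of the maximum over $\boldsymbol{\Theta}_{\mathrm{u}}$, we have
\begin{align*}
\sum_{n \in [N], t \le k-1, h \in [H]} \log \Pb_{\hat{\theta}_n}^{\nu^{\pi^{n,t}}_h}(\tau_H^{n,t,h})
\;\ge\; \max_{\boldsymbol{\theta}' \in \boldsymbol{\Theta}_{\mathrm{u}}} \sum_{n,t,h} \log \Pb_{\theta'_n}^{\nu^{\pi^{n,t}}_h}(\tau_H^{n,t,h}) - \beta^{(N)}.
\end{align*}
In particular, plugging in $\boldsymbol{\theta}' = \boldsymbol{\theta}^{\star}$ (which is a feasible choice in the maximum) yields
\begin{align*}
\sum_{n,t,h} \log \frac{\Pb_{\theta^{\star}_n}^{\nu^{\pi^{n,t}}_h}(\tau_H^{n,t,h})}{\Pb_{\hat{\theta}_n}^{\nu^{\pi^{n,t}}_h}(\tau_H^{n,t,h})} \;\le\; \beta^{(N)}.
\end{align*}

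Next I would invoke \Cref{prop: hellinger< log} applied to $\boldsymbol{\theta} = \hat{\boldsymbol{\theta}}$, which upper-bounds $\sum_{n,t,h} \mathtt{D}_{\mathtt{H}}^2(\Pb_{\hat{\theta}_n}^{\nu^{\pi^{n,t}}_h}, \Pb_{\theta^{\star}_n}^{\nu^{\pi^{n,t}}_h})$ by the same log-likelihood-ratio sum plus an additive $\beta^{(N)}$ term. Combining the two displayed inequalities gives the claimed bound of $2\beta^{(N)}$, and since all constants are absorbed into $c$, this matches the statement. A final union bound over $k \in [K]$ (already implicit in the two supporting propositions) delivers uniform control across iterations.

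I do not expect a significant obstacle here: both supporting propositions are already stated with the right bracketing-number-based $\beta^{(N)}$ and the right uniform-in-$k$ quantifiers, so the entire argument is essentially a two-line composition. The only care needed is ensuring the bracketing resolution $\eta$ and the constant $c$ are compatible between \Cref{prop:MT MLE1} and \Cref{prop: hellinger< log}; taking $\eta \le 1/(N^2K^2H^2)$ (the stricter of the two conditions) and choosing $c$ to dominate both constants handles this uniformly.
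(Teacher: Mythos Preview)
Your proposal is correct and follows essentially the same approach as the paper: the paper's own proof is a one-liner stating that the result follows directly by combining \Cref{prop:MT MLE1}, \Cref{prop: hellinger< log}, and the optimality of the confidence set $\boldsymbol{\mathcal{B}}_k$, which is exactly the chaining you spell out. One minor remark: you invoke \Cref{prop:MT MLE1} to argue $\boldsymbol{\theta}^{\star}\in\boldsymbol{\mathcal{B}}_k$, but for the displayed log-ratio bound you only need $\boldsymbol{\theta}^{\star}\in\boldsymbol{\Theta}_{\mathrm{u}}$ (a feasible point in the maximum), which holds by realizability---\Cref{prop:MT MLE1} is otherwise not strictly required for this particular inequality.
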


\begin{proof}
    The proof follows directly by combining \Cref{prop:MT MLE1} and \Cref{prop: hellinger< log}, and the optimality of the confidence set $\boldsymbol{\mathcal{B}}_k$. 
\end{proof}

Then, we show that estimation error can be upper bounded by the norm of prediction features.

By Lemma G.3 in \cite{liu2022optimistic}, for any task $n$, and step $h$, we can find a projection $\mathbf{A}_h^n \in \mathbb{R}^{d_{h-1}\times r}$ such that 

\begin{equation}
    (i): \psi^{n,*}(\tau_{h-1}) = \mathbf{A}_h^n (\mathbf{A}_h^n)^{\dagger} \psi^{n,*}(\tau_{h-1}), \quad  (ii): \|\mathbf{A}_h^n\|_1 \leq 1. \label{eqn:projector}
\end{equation}

\begin{lemma}\label{lemma:Up TV<feature norm}

Let $\mathbf{A}_h^n \in \mathbb{R}^{d_{h-1}\times r}$ be the projector satisfying \Cref{eqn:projector}.  
Fix $k\in[K]$. For any $\hat{\boldsymbol{\theta}}^k = (\hat{\theta}^{k}_1\ldots,\hat{\theta}^{k}_n) \in\boldsymbol{\mathcal{B}}_k$ and any multi-task policy $\boldsymbol{\pi} = (\pi_1,\ldots,\pi_n)$, we have
\begin{align*}
    \sum_{n\leq N} \mathtt{D}_{\TV}\left(\Pb_{\hat{\theta}^{k}_n}^{\pi_n}, \Pb_{\theta_n^*}^{\pi_n} \right) \leq \sqrt{\sum_{n\leq N} \sum_{h\leq H} \mathop{\Eb}_{\tau_{h-1}\sim\Pb_{\theta_n^*}^{\pi_n } } \left[ \left\| (\mathbf{A}_h^n)^{\dagger} \bar{\psi}^{n,*}(\tau_{h-1}) \right\|^2_{(U_{k,h}^n)^{-1}} \right] },
\end{align*}
where 
\begin{align*}
    U_{k,h}^n = \lambda I + (\mathbf{A}_h^n)^\dagger\sum_{t<k}\mathop{\Eb}_{\tau_{h-1}\sim \Pb_{\theta_n^*}^{\pi^{n,t}}} \left[ \bar{\psi}^{n,*}(\tau_{h-1})\bar{\psi}^{n,*}(\tau_{h-1})^{\top} \right] ((\mathbf{A}_h^n)^\dagger)^{\top}.
\end{align*}
 
\end{lemma}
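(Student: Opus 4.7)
My plan is to establish the bound task-by-task, leveraging \Cref{prop:TV less than Estimation error} to convert each TV distance into per-step prediction errors, inserting the projection identity from \Cref{eqn:projector} to introduce the quantity $(\mathbf{A}_h^n)^\dagger \bar{\psi}^{n,*}(\tau_{h-1})$, and finally applying Cauchy--Schwarz twice to reach the claimed form.

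First, I would apply \Cref{prop:TV less than Estimation error} with $\theta' = \hat{\theta}_n^k$ and $\theta = \theta_n^*$ for each task $n$, giving
\begin{align*}
\mathtt{D}_{\TV}(\Pb_{\hat{\theta}_n^k}^{\pi_n}, \Pb_{\theta_n^*}^{\pi_n}) \leq \sum_{h=1}^H \sum_{\tau_H} \left| \hat{\mbf}^{k,n}(\omega_h)^\top (\hat{\Mbf}^{k,n}_h(o_h,a_h) - \Mbf^{n,*}_h(o_h,a_h)) \psi^{n,*}(\tau_{h-1}) \right| \pi_n(\tau_H).
\end{align*}
Using the normalization $\psi^{n,*}(\tau_{h-1}) = \Pb_{\theta_n^*}(\tau_{h-1}^o \mid \tau_{h-1}^a) \cdot \bar{\psi}^{n,*}(\tau_{h-1})$ together with $\Pb_{\theta_n^*}(\tau_{h-1}^o \mid \tau_{h-1}^a) \cdot \pi_n(\tau_{h-1}) = \Pb_{\theta_n^*}^{\pi_n}(\tau_{h-1})$, the weighted sum over $\tau_{h-1}$ becomes the desired expectation $\mathbb{E}_{\tau_{h-1} \sim \Pb_{\theta_n^*}^{\pi_n}}[\cdot]$. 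Inserting the projection identity $\bar{\psi}^{n,*}(\tau_{h-1}) = \mathbf{A}_h^n (\mathbf{A}_h^n)^\dagger \bar{\psi}^{n,*}(\tau_{h-1})$, derived from \Cref{eqn:projector} after normalization, then factorizes each integrand as an inner product.

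Second, I would apply Cauchy--Schwarz in the $U_{k,h}^n$-weighted inner product: writing $u_h := \hat{\mbf}^{k,n}(\omega_h)^\top (\hat{\Mbf}_h^{k,n} - \Mbf_h^{n,*})\mathbf{A}_h^n$ and $v_h(\tau_{h-1}) := (\mathbf{A}_h^n)^\dagger \bar{\psi}^{n,*}(\tau_{h-1})$, we get $|u_h v_h| \leq \|u_h\|_{U_{k,h}^n}\|v_h\|_{(U_{k,h}^n)^{-1}}$. Summing over the remaining variables $(o_h,a_h,\omega_h)$ with their policy weights and applying Cauchy--Schwarz over this sum --- using $\|\mathbf{A}_h^n\|_1 \leq 1$ from \Cref{eqn:projector} together with \Cref{assmp:well-condition} to control the $\|u_h\|_{U_{k,h}^n}$ factor --- yields a per-step bound proportional to $\sqrt{\mathbb{E}_{\tau_{h-1}}\|v_h(\tau_{h-1})\|^2_{(U_{k,h}^n)^{-1}}}$. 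A final Cauchy--Schwarz over the double sum in $n$ and $h$ then converts the sum of square roots into the square root of the sum, matching the form in the statement.

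The main obstacle is controlling $\|u_h\|_{U_{k,h}^n}$ uniformly in $\hat{\theta}_n^k \in \boldsymbol{\mathcal{B}}_k$ and the exploration policy. Expanding the quadratic form, the contribution of each past policy $\pi^{n,t}$ is a weighted quadratic of $(\hat{\Mbf} - \Mbf^*)\mathbf{A}_h^n \bar{\psi}^{n,*}$ terms, which by the well-conditioned assumption applied to the future prediction vectors and by the $\ell_1$ normalization of $\mathbf{A}_h^n$ can be converted into a policy-weighted sum of prediction errors. A secondary delicate point is the bookkeeping when separating $\bar{\psi}^{n,*}(\tau_{h-1})$ from its normalizer $\Pb_{\theta_n^*}(\tau_{h-1}^o \mid \tau_{h-1}^a)$, so that the expectation in the final bound is precisely under $\Pb_{\theta_n^*}^{\pi_n}$ rather than under a mismatched distribution induced by $\hat{\theta}_n^k$.
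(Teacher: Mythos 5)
Your skeleton coincides with the paper's: apply \Cref{prop:TV less than Estimation error}, renormalize $\psi^{n,*}$ into $\bar{\psi}^{n,*}$ so the outer sum becomes an expectation under $\Pb_{\theta_n^*}^{\pi_n}$, insert $\mathbf{A}_h^n(\mathbf{A}_h^n)^{\dagger}$ from \Cref{eqn:projector}, apply Cauchy--Schwarz in the $U_{k,h}^n$-weighted inner product (the paper aggregates the future directions into a single sign-weighted vector $\sum_j w_{j|i}^n\mathtt{sgn}((w_{j|i}^n)^{\top}x_i^n)$ before doing so), and finish with a Cauchy--Schwarz over $(n,h)$. Up to that point your plan is sound.

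The gap is in how you propose to control $\|u_h\|_{U_{k,h}^n}$, which is the crux of the lemma. \Cref{assmp:well-condition} together with $\|\mathbf{A}_h^n\|_1\le 1$ handles only the regularization part $\lambda\|u_h\|_2^2$ (the term $I_1$ in the paper, of size $O(\sqrt{\lambda r}Q_A/\gamma^2)$). For the data-dependent part $\sum_{t<k}\Eb_{\tau_{h-1}\sim\Pb_{\theta_n^*}^{\pi^{n,t}}}\bigl[(u_h^{\top}(\mathbf{A}_h^n)^{\dagger}\bar{\psi}^{n,*}(\tau_{h-1}))^2\bigr]$, a uniform bound on each summand via the well-conditioning and $\ell_1$ normalization only yields $O(k\,Q_A^2/\gamma^2)$, which grows linearly in $k$ and destroys the $1/\sqrt{K}$ rate; ``converting it into a policy-weighted sum of prediction errors'' is not enough, because you must then bound those accumulated prediction errors by a quantity independent of $k$. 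The paper does this by (i) telescoping via the identity $\Mbf_h(o_h,a_h)\bar{\psi}(\tau_{h-1})=\Pb_{\theta}(o_h|\tau_{h-1})\bar{\psi}(\tau_h)$ (\Cref{eqn:Mpsi is conditional probability}) to split the error into the feature error $\|\bar{\psi}^{n,*}-\bar{\hat{\psi}}^{n,k}\|_1$ and a one-step transition error, (ii) recognizing the coordinates of $\bar{\psi}$ as conditional probabilities of core tests so these $\ell_1$ errors become TV distances over core-test futures (factor $Q_A^2$), (iii) importance-weighting from the executed policy to the algorithm's exploration policies $\nu_{h}^{\pi^{n,t}}$ and converting TV to Hellinger via \Cref{lemma:TV and hellinger} (factor $|\mathcal{A}|$), and (iv) invoking the confidence-set estimation guarantee \Cref{lemma:Est guarantee upstream} to bound $\sum_{t,h,n}\mathtt{D}_{\mathtt{H}}^2$ by $2\beta^{(N)}$. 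Step (iv) is the only place membership in $\boldsymbol{\mathcal{B}}_k$ is actually used, and steps (ii)--(iv) are where the $\tfrac{Q_A\sqrt{|\mathcal{A}|\beta^{(N)}}}{\gamma}$ prefactor (present in the paper's derivation though suppressed in the lemma statement) comes from; your proposal never supplies this mechanism, so as written the argument does not close.
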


\begin{proof}

By \Cref{prop:TV less than Estimation error}, we have
\begin{align*}
    \mathtt{D}_{\TV} \left(\Pb_{\hat{\theta}^k_n}^{\pi_n}, \Pb_{\theta_n^*}^{\pi_n} \right) \leq \sum_{h=1}^H\sum_{\tau_H}\left|\hat{\mbf}^{n,k}(\omega_h)^{\top}\left(\hat{\Mbf}_h^{n,k}(o_h,a_h) - \Mbf^{n,*}_h(o_h,a_h) \right) \psi^{n,*}(\tau_{h-1})\right|\pi_n(\tau_H).
\end{align*}

For ease of presentation, we fix a task index $n$. Index $\tau_{h-1}$ by $i$, $\omega_{h-1}$ by $j$. Denote $(\mathbf{A}_h^n)^{\dagger}\bar{\psi}^{n,*}(\tau_{h-1})$ by $x_i^n$, $\hat{\mbf}^{n,k}(\omega_h)^{\top} \left(\hat{\Mbf}_h^{n,k}(o_h,a_h) - \Mbf_h^{n,*}(o_h,a_h)\right)\mathbf{A}_h^n \pi_n(\omega_{h-1}|\tau_{h-1})$ by $(w_{j|i}^{n})^{\top}$.

Then, we have
\begin{align*}
    &\sum_{\tau_H}\left|\hat{\mbf}^{n,k}(\omega_h)^{\top}\left(\hat{\Mbf}_h^{n,k}(o_h,a_h) - \Mbf^{n,*}(o_h,a_h) \right) \psi^{n,*}(\tau_{h-1})\right|\pi_n(\tau_H)\\
     &\quad  \overset{\RM{1} }= \sum_{\tau_H}\left|\hat{\mbf}^{n,k}(\omega_h)^{\top}\left(\hat{\Mbf}_h^{n,k}(o_h,a_h) - \Mbf^{n,*}(o_h,a_h) \right) \bar{\psi}^{n,*}(\tau_{h-1}) \pi_n(\omega_{h-1}|\tau_{h-1})\right| \Pb_{\theta_n^*}^{\pi_n}(\tau_{h-1})\\
     &\quad = \sum_{i}\Pb_{\theta_n^*}^{\pi_n}(i) \sum_{j}\left|(w_{j|i}^n)^{\top}x_i^n\right|  \\
     &\quad = \Eb_{i\sim \Pb_{\theta_n^*}^{\pi_n}} \left[ (x_i^n)^{\top} \left(\sum_{j}w_{j|i}^n\mathtt{sgn}((w_{j|i}^n)^{\top}x_i^n)\right) \right]\\
     &\quad \overset{\RM{2} }\leq \Eb_{i\sim\Pb_{\theta_n^*}^{\pi_n}} \left[ \left\|x_i^n\right\|_{(U_{k,h-1}^n)^{-1}} \left\|\sum_j w_{j|i}^n\mathtt{sgn}((w_{j|i}^n)^{\top}x_i^n)\right\|_{U_{k,h-1}^n} \right],
\end{align*}
where $\RM{1}$ follows from the property of the projection $\mathbf{A}_h^n$ and definition of the prediction feature $\bar{\psi}^{n,*}$, and $\RM{2}$ is due to the Cauchy's inequality.

Fix an index $i=i_0$. We aim to analyze $\big\|\sum_j w_{j|i_0}^n\mathtt{sgn}((w_{j|i_0}^n)^{\top}x_{i_0}^n)\big\|_{U_{k,h-1}^n}$. We have
\begin{align*}
    &\left\|\sum_j w_{j|i_0}^n\mathtt{sgn}((w_{j|i_0}^n)^{\top}x_{i_0}^n)\right\|^2_{U_{k,h-1}^n} \\
    &\quad = \underbrace{\lambda \left\| \sum_{j}\mathtt{sgn}((w_{j|i_0}^n)^{\top}x_{i_0}^n) w_{j|i_0}^n  \right\|_2^2}_{I_1}  + \underbrace{\sum_{t<k}\mathop{\Eb}_{ i \sim \Pb_{\theta_n^*}^{\nu^{\pi^{n,t}}_{h}}} \left(\sum_{j}\mathtt{sgn}((w_{j|i_0}^n)^{\top}x_{i_0}^n)(w_{j|i_0}^n)^{\top}x_i^n  \right)^2}_{I_2}.
\end{align*}

For the first term $I_1$, we have
\begin{align*}
    \sqrt{I_1} & = \sqrt{\lambda}\max_{x\in\Rb^r: \|x\|_2=1} \left| \sum_j \mathtt{sgn}((w_{j|i_0}^n)^{\top}x_{i_0}^n) (w_{j|i_0}^n)^{\top} x  \right|\\
    & \leq  \sqrt{\lambda}   \max_{x\in\Rb^r: \|x\|_2=1} \sum_{\omega_{h-1}} \pi_n(\omega_{h-1}|i_0 ) \left|\hat{\mbf}^{n,k}(\omega_h)^{\top} \left(\hat{\Mbf}_h^{n,k}(o_h,a_h) - \Mbf_h^{n,*} (o_h,a_h)\right) \mathbf{A}_h^n x \right|  \\
    & \leq   \sqrt{\lambda}    \max_{x\in\Rb^r: \|x\|_2=1} \sum_{\omega_{h-1}} \pi_n(\omega_{h-1}|i_0 ) \left|\hat{\mbf}^{n,k}(\omega_{h-1})^{\top} \mathbf{A}_h^n  x  \right| \\
    &\quad +  \sqrt{\lambda}    \max_{x\in\Rb^r: \|x\|_2=1} \sum_{\omega_{h-1}} \pi_n(\omega_{h-1}| i_0 ) \left|\hat{\mbf}^{n,k}(\omega_h)^{\top}   \Mbf_h^{n,*}(o_h,a_h)  \mathbf{A}_h^n x \right|  \\
    & \overset{\RM{1}}{\leq}  \frac{ \sqrt{\lambda}}{\gamma}  \max_{x\in\Rb^r: \|x\|_2=1} \| \mathbf{A}_h^n x\|_1   + \frac{\sqrt{\lambda}}{\gamma}  \max_{x\in\Rb^r: \|x\|_2=1}   \sum_{o_h,a_h} \pi_n(a_h|o_h, i_0) \left\|\Mbf^{n,*}(o_h,a_h) \mathbf{A}_h^n x \right\|_1  \\
    & \overset{\RM2}{\leq}  \frac{2\sqrt{\lambda r} Q_A}{\gamma^2},
\end{align*}
where 
$\RM{1}$ follows from \Cref{assmp:well-condition}, and $\RM{2}$ follows from the property of $\mathbf{A}_h^n$ stated before \Cref{lemma:Up TV<feature norm}.

For the second term $I_2$, we have
\begin{align*}
    I_2&\leq \sum_{t<k}\mathop{\Eb}_{\tau_{h-1}\sim \Pb_{\theta_n^*}^{\pi^{n,t}}} \left(\sum_{\omega_{h-1}} \pi_n(\omega_{h-1}|i_0) \left|\hat{\mbf}^{n,k}(\omega_h)^{\top} \left(\hat{\Mbf}_h^{n,k}(o_h,a_h) - \Mbf_h^{n,*}(o_h,a_h)\right) \bar{\psi}^{n,*}(\tau_{h-1}) \right|  \right)^2 \\
    & \leq \sum_{t<k}\mathop{\Eb}_{\tau_{h-1}\sim \Pb_{\theta_n^*}^{\pi^{n,t}}} \left(\sum_{\omega_{h-1}} \pi_n(\omega_{h-1}|i_0) \left|\hat{\mbf}^{n,k}(\omega_h)^{\top}  \hat{\Mbf}^{n,k}(o_h,a_h) \left( \bar{\psi}^{n,*}(\tau_{h-1}) - \bar{\hat{\psi}}^{n,k}(\tau_{h-1})\right)\right|  \right.\\
    &  \quad \quad \quad +   \left. \sum_{\omega_{h-1}} \pi_n(\omega_{h-1}|i_0) \left|\hat{\mbf}^{n,k}(\omega_h)^{\top}  \left(\hat{\Mbf}^{n,k}(o_h,a_h)\bar{\hat{\psi}}^{n,k}(\tau_{h-1}) - \Mbf^{n,*}(o_h,a_h) \bar{\psi}^{n,*}(\tau_{h-1}) \right) \right|  \right)^2 \\
    & \overset{(a)}\leq \frac{1}{\gamma^2} \sum_{t<k}\mathop{\Eb}_{\tau_{h-1}\sim \Pb_{\theta_n^*}^{\pi^{n,t}}} \left[ \bigg( \left\|       \bar{\psi}^{n,*}(\tau_{h-1}) - \bar{\hat{\psi}}^{n,k}(\tau_{h-1})   \right\|_1 \right.   \\
    &\quad\quad\quad  + \left. \left. \sum_{o_h,a_h} \pi_n(a_h|o_h,i_0) \left\|    \Pb_{\hat{\theta}_n^{k}}(o_h|\tau_{h-1})\bar{\hat{\psi}}^{n,k}(\tau_{h}) - \Pb_{\theta_n^*}(o_h|\tau_{h-1})\bar{\psi}^{n,*}(\tau_{h}) \right\|_1  \right)^2 \right] \\
    & \leq \frac{2}{\gamma^2} \sum_{t<k} \underbrace{\mathop{\Eb}_{\tau_{h-1}\sim \Pb_{\theta_n^*}^{\pi^{n,t}}} \left[  \left\|       \bar{\psi}^{n,*}(\tau_{h-1}) - \bar{\hat{\psi}}^{n,k}(\tau_{h-1})   \right\|_1^2 \right] }_{I_{21}}   \\
    &\quad  + \frac{2}{\gamma^2} \sum_{t<k} \underbrace{ \mathop{\Eb}_{\tau_{h-1}\sim \Pb_{\theta_n^*}^{\pi^{n,t}}} \left[ \left( \sum_{o_h,a_h} \pi_n(a_h|o_h,i_0) \left\|    \Pb_{\hat{\theta}_n^{k}}(o_h|\tau_{h-1})\bar{\hat{\psi}}^{n,k}(\tau_{h}) - \Pb_{\theta_n^*}(o_h|\tau_{h-1})\bar{\psi}^{n,*}(\tau_{h}) \right\|_1  \right)^2 \right] }_{ I_{22} },
\end{align*}
where $(a)$ is due to \Cref{eqn:Mpsi is conditional probability} and \Cref{assmp:well-condition}.

Recall that the $\ell$-th coordinate of a prediction feature $\bar{\psi}(\tau_{h-1})$ is the conditional probability of core test $\mathbf{o}_{h-1}^{\ell}$. Hence, for the term $I_{21}$, we have
\begin{align*}
    I_{21} &= \mathop{\Eb}_{\tau_{h-1}\sim \Pb_{\theta_n^*}^{\pi^{n,t}}} \left[ \left(  \sum_{\ell=1}^{d_{h-1}^n} \left| \Pb_{\hat{\theta}_n^k}(\mathbf{o}_{h-1}^{n,\ell} | \tau_{h-1}, \mathbf{a}_{h-1}^{n,\ell} )  - \Pb_{\theta_n^*}(\mathbf{o}_{h-1}^{n,\ell} | \tau_{h-1}, \mathbf{a}_{h-1}^{n,\ell} )\right| \right)^2 \right] \\
    & \leq Q_A^2 \mathop{\Eb}_{\tau_{h-1}\sim \Pb_{\theta_n^*}^{\pi^{n,t}}} \left[ \left(  \Eb_{\mathbf{a} \sim \mathtt{u}_{\mathcal{Q}_{h-1}^{n,A} } } \sum_{ \mathbf{o}_{h-1} }\left| \Pb_{\hat{\theta}_n^k}(\mathbf{o}_{h-1} | \tau_{h-1}, \mathbf{a} )  - \Pb_{\theta_n^*}(\mathbf{o}_{h-1}  | \tau_{h-1}, \mathbf{a} )\right| \right)^2 \right]\\
    &= Q_A^2 \mathop{\Eb}_{\tau_{h-1}\sim \Pb_{\theta_n^*}^{\pi^{n,t}}}  \mathtt{D}_{\TV}^2 \left( \Pb_{\hat{\theta}_n^k}^{\mathtt{u}_{\mathcal{Q}_{h-1}^{n,A}}} (\omega_{h-1} | \tau_{h-1} ) , \Pb_{\theta_n^*}^{\mathtt{u}_{\mathcal{Q}_{h-1}^{n,A}}} (\omega_{h-1} | \tau_{h-1} )\right) \\
    &\overset{(a)}\leq Q_A^2 |\mathcal{A}| \mathop{\Eb}_{\tau_{h-2},o_{h-1}\sim \Pb_{\theta_n^*}^{\pi^{n,t}} } \mathop{\Eb}_{a_{h-1}\sim\mathtt{u}_{\mathcal{A}}} \mathtt{D}_{\mathtt{H}}^2 \left( \Pb_{\hat{\theta}_n^k}^{\mathtt{u}_{\mathcal{Q}_{h-1}^{n,A}}} (\omega_{h-1} | \tau_{h-1} ) , \Pb_{\theta_n^*}^{\mathtt{u}_{\mathcal{Q}_{h-1}^{n,A}}} (\omega_{h-1} | \tau_{h-1} )\right) \\
    &\overset{(b)}\leq Q_A^2 |\mathcal{A}| \mathtt{D}_{\mathtt{H}}^2 \left( \Pb_{\hat{\theta}_n^k}^{\nu_{h-1}^{\pi^{n,t}} } ( \tau_{H} ) , \Pb_{\theta_n^*}^{\nu_{h-1}^{\pi^{n,t}}} ( \tau_{H} )\right),
\end{align*}
where $(a)$ and $(b)$ follow from \Cref{lemma:TV and hellinger}.

In addition, we can bound $I_{22}$ as follows.

\begin{align*}
    I_{22} &\leq 2   \mathop{\Eb}_{\tau_{h-1}\sim \Pb_{\theta_n^*}^{\pi^{n,t}}} \left(\sum_{o_h,a_h} \pi_n(a_h|o_h,\tau_{h-1})   \left| \Pb_{\hat{\theta}_n^k}(o_h|\tau_{h-1})  - \Pb_{\theta_n^*}( o_h | \tau_{h-1}  ) \right| \|\bar{\hat{\psi}}^{n,k}(\tau_h)\|_1  \right)^2 \\
    &\quad + 2  \mathop{\Eb}_{\tau_{h-1}\sim \Pb_{\theta_n^*}^{\pi^{n,t}}} \left(\sum_{o_h,a_h} \pi_n(a_h|o_h,\tau_{h-1})   \Pb_{\theta_n^*}( o_h | \tau_{h-1}  )  \left\|\bar{\hat{\psi}}^{n,k}(\tau_h) - \bar{\psi}^{n,*}(\tau_{h}) \right\|_1   \right)^2 \\
    &\overset{\RM{1}}\leq 2 Q_A^2 \mathop{\Eb}_{\tau_{h-1}\sim \Pb_{\theta_n^*}^{\pi^{n,t}}} \mathtt{D}_{\TV}^2 \left(\Pb_{\hat{\theta}_n^{k}}(o_h|\tau_{h-1}), \Pb_{\theta^*}(o_h|\tau_{h-1})  \right) \\
    &\quad + 2 Q_A^2 \mathop{\Eb}_{\tau_{h-1}\sim \Pb_{\theta}^{\pi^{n,t}}} \mathop{\Eb}_{o_h\sim \Pb_{\theta_n^*}(\cdot|\tau_{h-1})} \mathop{\Eb}_{a_h\sim\pi_n}\mathtt{D}_{\TV}^2 \left( \Pb_{\hat{\theta}_n^k}^{\mathtt{u}_{\mathcal{Q}_h^{n,A}}}( \omega_{h} | \tau_{h}  ) , \Pb_{\theta_n^*}^{\mathtt{u}_{\mathcal{Q}_h^{n,A}}} ( \omega_{h} | \tau_{h} ) \right)  \\
    &\overset{\RM{2}}\leq 2Q_A^2 |\mathcal{A}| \mathop{\Eb}_{\tau_{h-2}, o_{h-1}\sim \Pb_{\theta_n^*}^{\pi^{n,t}}} \mathop{\Eb}_{a_{h-1}\sim\mathtt{u}_{\mathcal{A}}} \mathtt{D}_{\mathtt{H} }^2 \left(\Pb_{\hat{\theta}_n^{k}}(o_h|\tau_{h-1}), \Pb_{\theta^*}(o_h|\tau_{h-1})  \right) \\
    &\quad + 2 Q_A^2|\mathcal{A}| \mathop{\Eb}_{\tau_{h-1}\sim \Pb_{\theta}^{\pi^{n,t}}} \mathop{\Eb}_{o_h\sim \Pb_{\theta_n^*}(\cdot|\tau_{h-1})} \mathop{\Eb}_{a_h\sim \mathtt{u}_{\mathcal{A}} }\mathtt{D}_{\mathtt{H}}^2 \left( \Pb_{\hat{\theta}_n^k}^{\mathtt{u}_{\mathcal{Q}_h^{n,A}}}( \omega_{h} | \tau_{h}  ) , \Pb_{\theta_n^*}^{\mathtt{u}_{\mathcal{Q}_h^{n,A}}} ( \omega_{h} | \tau_{h} ) \right)  \\
    &\overset{\RM{3}}\leq 2Q_A^2|\mathcal{A}| \left( \mathtt{D}_{\mathtt{H}}^2 \left( \Pb_{\hat{\theta}_n^k}^{\nu_{h-1}^{\pi^{n,t}}}(\tau_H), \Pb_{\theta_n^*}^{\nu_{h-1}^{\pi^{n,t}}}(\tau_H) \right) + \mathtt{D}_{\mathtt{H}}^2 \left( \Pb_{\hat{\theta}_n^k}^{\nu_{h}^{\pi^{n,t}}}(\tau_H), \Pb_{\theta_n^*}^{\nu_{h}^{\pi^{n,t}}}(\tau_H) \right) \right),
\end{align*}
where $\RM{1}$ follows from that the coordinate of $\bar{\psi}$ takes the value on the conditional probability over core test, $\RM{2}$ and $\RM{3}$ follow from \Cref{lemma:TV and hellinger}.

Substituting the upper bounds of $I_{21}$ and $I_{22}$ into $I_2$, we obtain that
\begin{align*}
    I_2 \leq \frac{6Q_A^2|\mathcal{A}|}{\gamma^2} \sum_{t<k}\left( \mathtt{D}_{\mathtt{H}}^2 \left( \Pb_{\hat{\theta}_n^k}^{\nu_{h-1}^{\pi^{n,t}}}(\tau_H), \Pb_{\theta_n^*}^{\nu_{h-1}^{\pi^{n,t}}}(\tau_H) \right) + \mathtt{D}_{\mathtt{H}}^2 \left( \Pb_{\hat{\theta}_n^k}^{\nu_{h}^{\pi^{n,t}}}(\tau_H), \Pb_{\theta_n^*}^{\nu_{h}^{\pi^{n,t}}}(\tau_H) \right) \right).
\end{align*}

Denote $\mathtt{D}_{\mathtt{H}}^2 \left( \Pb_{\hat{\theta}_n^k}^{\nu_{h}^{\pi^{n,t}}}(\tau_H), \Pb_{\theta_n^*}^{\nu_{h}^{\pi^{n,t}}}(\tau_H) \right) $ by $\zeta_{t,h}^n$. Therefore, 
\begin{align*}
    \mathtt{D}_{\TV} \left( \Pb_{\hat{\theta}_n^k}^{\pi_n}, \Pb_{\theta^*}^{\pi_n}  \right) \leq \sum_{h} \Eb_{\tau_{h-1}\sim \Pb_{\theta_n^*}^{\pi_n}} \left[ \sqrt{C_{\lambda} +\sum_{t<k} \zeta_{t,h-1}^n  + \sum_{t<k}\zeta_{t,h}^n } \left\|(\mathbf{A}_h^n)^{\dagger} \bar{\psi}^{n,*}(\tau_{h-1}) \right\|_{(U_{k,h}^n)^{-1}} \right].
\end{align*}

Summing over $n$, we have
\begin{align*}
    \sum_{n} & \mathtt{D}_{\TV} \left( \Pb_{\hat{\theta}_n^k}^{\pi_n}, \Pb_{\theta^*}^{\pi_n}  \right) \\
    &\leq \sum_{n}\sum_h \sqrt{ C_{\lambda} + \sum_{t<k} \zeta_{t,h-1}^n  + \sum_{t<k}\zeta_{t,h}^n } \Eb_{\tau_{h-1}\sim \Pb_{\theta_n^*}^{\pi_n}} \left[ \left\|(\mathbf{A}_h^n)^{\dagger} \bar{\psi}^{n,*}(\tau_{h-1}) \right\|_{(U_{k,h}^n)^{-1}} \right] \\
    &\leq \sqrt{NHC_{\lambda} + \sum_n\sum_h\sum_{t<k} (\zeta_{t,h-1}^n + \zeta_{t,h}^n)  } \sqrt{ \sum_{n}\sum_h \left( \Eb_{\tau_{h-1}\sim \Pb_{\theta_n^*}^{\pi_n}} \left[ \left\|(\mathbf{A}_h^n)^{\dagger} \bar{\psi}^{n,*}(\tau_{h-1}) \right\|_{(U_{k,h}^n)^{-1}} \right] \right)^2 } \\
    &\overset{\RM{1}}\leq \frac{ Q_A\sqrt{|\mathcal{A}|\beta^{(N)}} }{\gamma}   \sqrt{ \sum_{n}\sum_h \left( \Eb_{\tau_{h-1}\sim \Pb_{\theta_n^*}^{\pi_n}} \left[ \left\|(\mathbf{A}_h^n)^{\dagger} \bar{\psi}^{n,*}(\tau_{h-1}) \right\|_{(U_{k,h}^n)^{-1}} \right] \right)^2 },
\end{align*}
where $\RM{1}$ is due to the estimation guarantee \Cref{lemma:Est guarantee upstream}.


\end{proof}

\begin{theorem}[Restatement of \Cref{thm:upstream}]
   For any fixed $\delta \geq 0$, let $\boldsymbol{\Theta}$ be the multi-task parameter space,  $\beta^{(N)} = c_1(\log\frac{KHN}{\delta} + \log\mathcal{N}_{\eta}(\boldsymbol{\Theta}))$, where $c_1 \geq 0$ and $\eta\leq \frac{1}{KHN}$. Then, under \Cref{assmp:well-condition}, with probability at least $1-\delta$, UMT-PSR finds a multi-task model $\overline{\boldsymbol{\theta}} = (\bar{\theta}_1,\ldots,\bar{\theta}_N)$ such that
    \begin{align}
       \sum_{n=1}^{N}  \max_{\pi^n}\mathtt{D}_{\TV}&\left(\Pb_{ \bar{\theta}_n}^{\pi^n}, \Pb_{\theta_n^*}^{\pi^n}\right) \leq \tilde{O}\left(  \frac{Q_A}{\gamma}\sqrt{ \frac{rH|\mathcal{A}|N \beta^{(N)} }{ K} }  \right).
    \end{align}
    In addition, if $K=\frac{c_2r|\mathcal{A}|Q_A^2 H \beta^{(N)} }{ N\gamma^2\epsilon^2}$ for some $c_2$, UMT-PSR produces a multi-task policy $\overline{\boldsymbol{\pi}} = (\bar{\pi}^1,\ldots,\bar{\pi}^N)$ such that the average sub-optimality gap is at most $\epsilon$, i.e.
    \begin{align}
        \frac{1}{N} \sum_{n=1}^N \left( \max_{\pi} V_{\theta_n^*, R_n}^{\pi} - V_{\theta^*, R_n}^{\bar{\pi}^n} \right) \leq \epsilon.
    \end{align}
\end{theorem}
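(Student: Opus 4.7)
My plan is to combine three ingredients already proved in the appendix: \emph{(i)} Propositions~1 and~2, which---via the MLE analysis driven by $\mathcal{N}_\eta(\boldsymbol{\Theta}_{\mathrm{u}})$---place $\boldsymbol\theta^\ast$ in every confidence set $\boldsymbol{\mathcal{B}}_k$ with high probability and bound the cumulative squared Hellinger error of any $\boldsymbol\theta\in\boldsymbol{\mathcal{B}}_k$ by $O(\beta^{(N)})$; \emph{(ii)} Lemma~2 of the appendix, which reduces each per-task TV distance to an expected prediction-feature norm weighted by $(U_{k,h}^n)^{-1}$; and \emph{(iii)} the optimistic pairwise-additive planning rule in Line~3, together with a multi-task elliptical-potential argument. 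Combining these I would show that for any output $\bar{\boldsymbol\theta}\in\boldsymbol{\mathcal{B}}_{K+1}$ the summed TV error shrinks as $\tilde O(1/\sqrt{K})$; the sub-optimality conclusion would then follow from a simulation lemma and the per-task greediness of $\bar\pi^n$.

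\textbf{Estimation bound.} On the good event both $\bar{\boldsymbol\theta}$ and $\boldsymbol\theta^\ast$ lie in $\boldsymbol{\mathcal{B}}_k$ for every $k$. I would first note that, since the per-task max over $\pi^n$ is separable across $n$, for every $k\in[K]$
\begin{align*}
\sum_{n=1}^N\max_{\pi^n\in\Pi}\mathtt{D}_{\TV}\bigl(\Pb^{\pi^n}_{\bar\theta_n},\Pb^{\pi^n}_{\theta^\ast_n}\bigr)
=\max_{\boldsymbol\pi\in\Pi^N}\mathtt{D}_{\boldsymbol\pi}(\bar{\boldsymbol\theta},\boldsymbol\theta^\ast)
\leq \max_{\boldsymbol\pi,\,\boldsymbol\theta,\boldsymbol\theta'\in\boldsymbol{\mathcal{B}}_k}\mathtt{D}_{\boldsymbol\pi}(\boldsymbol\theta,\boldsymbol\theta')
\leq 2\max_{\boldsymbol\theta\in\boldsymbol{\mathcal{B}}_k}\sum_{n=1}^N\mathtt{D}_{\TV}\bigl(\Pb^{\pi^{n,k}}_{\theta_n},\Pb^{\pi^{n,k}}_{\theta^\ast_n}\bigr),
\end{align*}
where the third step uses the optimistic choice of $\boldsymbol\pi^k$ and the last is the triangle inequality. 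Applying Lemma~2 to the right-hand side, summing over $k=1,\dots,K$, and invoking Cauchy--Schwarz yields
\begin{align*}
K\sum_{n=1}^N\max_{\pi^n}\mathtt{D}_{\TV}\bigl(\Pb^{\pi^n}_{\bar\theta_n},\Pb^{\pi^n}_{\theta^\ast_n}\bigr)
\leq \frac{2Q_A\sqrt{|\mathcal{A}|\beta^{(N)}}}{\gamma}\sqrt{K}\sqrt{\sum_{k=1}^K\sum_{n,h}\mathop{\Eb}_{\tau_{h-1}\sim\Pb^{\pi^{n,k}}_{\theta^\ast_n}}\!\Bigl[\bigl\|(\mathbf{A}_h^n)^\dagger\bar\psi^{n,\ast}(\tau_{h-1})\bigr\|^2_{(U_{k,h}^n)^{-1}}\Bigr]}.
\end{align*}
A standard elliptical-potential bound applied separately to each of the $NH$ pairs $(n,h)$ would give $\sum_k\mathop{\Eb}_{\pi^{n,k}}[\|\cdot\|^2_{(U_{k,h}^n)^{-1}}]=\tilde O(r)$, so the inner double sum is $\tilde O(NHr)$. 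Dividing both sides by $K$ recovers the claimed estimation bound $\tilde O\bigl((Q_A/\gamma)\sqrt{rH|\mathcal{A}|N\beta^{(N)}/K}\bigr)$.

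\textbf{Sub-optimality gap and main obstacle.} Because $R_n\in[0,1]$, the simulation lemma yields $|V^{\pi}_{\theta_n^\ast,R_n}-V^{\pi}_{\bar\theta_n,R_n}|\leq \mathtt{D}_{\TV}(\Pb^\pi_{\theta^\ast_n},\Pb^\pi_{\bar\theta_n})$ for each $n$. Decomposing the per-task sub-optimality of $\bar\pi^n$ as a telescoping sum through $V^{\cdot}_{\bar\theta_n,R_n}$ and noting that $\bar\pi^n=\arg\max_\pi V^\pi_{\bar\theta_n,R_n}$ (a direct consequence of the separability of the greedy multi-task policy defined in the algorithm), the intermediate optimization gap is non-positive; hence the average sub-optimality is at most $(2/N)\sum_n\max_\pi\mathtt{D}_{\TV}(\Pb^\pi_{\bar\theta_n},\Pb^\pi_{\theta^\ast_n})$, and substituting the estimation bound and solving for $K$ delivers the stated complexity. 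The main obstacle I foresee is to secure the $\sqrt{N}$ (rather than $N$) scaling in the estimation error: the pairwise-additive planning together with the Cauchy--Schwarz step across $k$ is exactly what accomplishes this. A subtler point is that Lemma~2 itself leans on the cumulative Hellinger bound $O(\beta^{(N)})$---so the TV-to-feature-norm step and the MLE analysis are intertwined, and it is this single joint quantity, driven by $\log\mathcal{N}_\eta(\boldsymbol{\Theta}_{\mathrm{u}})$ rather than a sum of per-task log-bracketing numbers, that ultimately lets the sample complexity benefit from task similarity.
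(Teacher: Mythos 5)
Your proposal is correct and follows essentially the same route as the paper's proof: on the good event you pass from the output model to the optimistic pairwise-additive planning objective at each iteration $k$, apply the triangle inequality through $\boldsymbol\theta^\ast$, invoke the TV-to-feature-norm lemma driven by the joint Hellinger/MLE bound with $\log\mathcal{N}_\eta(\boldsymbol\Theta_{\mathrm{u}})$, and finish with Cauchy--Schwarz over $k$ and the elliptical potential lemma, with the sub-optimality gap following from the simulation lemma and the greediness of $\bar\pi^n$. The only cosmetic difference is that you make the reduction $\sum_n\max_{\pi^n}\mathtt{D}_{\TV}=\max_{\boldsymbol\pi}\mathtt{D}_{\boldsymbol\pi}$ explicit, which the paper leaves implicit.
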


\begin{proof}
Note that $\bar{\boldsymbol{\theta}} \in \boldsymbol{\mathcal{B}}_k$ for all $k\in[K+1]$. Therefore, by \Cref{lemma:Up TV<feature norm}, we have
\begin{align*}
    K \sum_n & \mathtt{D}_{\TV} \left(\Pb_{\bar{\theta}_n}^{\pi_n }, \Pb_{\theta_n^*}^{\pi_n }\right) \\
    &\overset{\RM{1}}\leq \sum_{k\leq K} \sum_{n\leq N} \max_{\hat{\theta}_n^k, \tilde{\theta}_n^k \in \boldsymbol{\mathcal{B}}_k } \mathtt{D}_{\TV} \left(\Pb_{\hat{\theta}_n^{k} }^{\pi^{n,k} }, \Pb_{ \tilde{\theta}_n^k }^{\pi^{n,k} }\right) \\
    &\leq 2\sum_{k}\sum_{n} \max_{\hat{\theta}_n^k   \in \boldsymbol{\mathcal{B}}_k } \mathtt{D}_{\TV} \left(\Pb_{\hat{\theta}_n^{k} }^{\pi^{n,k} }, \Pb_{  \theta_n* }^{\pi^{n,k} }\right) \\
    & \leq \sum_{k\leq K} \frac{8Q_A\sqrt{|\mathcal{A}| \beta^{(N)} }}{\gamma} \sqrt{ \sum_{n\leq N} \sum_{h\leq H} \Eb_{\tau_{h}\sim\Pb_{\theta_n^*}^{\pi^{n,k}}} \left[  \left\| (\mathbf{A}_h^n)^{\dagger} \bar{\psi}^{n,*}(\tau_{h-1})\right\|_{(U_{k,h}^{n})^{-1}}^2 \right] } \\
    &\overset{\RM{2}}\leq   \frac{8Q_A\sqrt{ K |\mathcal{A}| \beta^{(N)} }}{\gamma} \sqrt{ \sum_{k\leq K} \sum_{n\leq N} \sum_{h\leq H} \Eb_{\tau_{h}\sim\Pb_{\theta_n^*}^{\pi^{n,k}}} \left[ \left\| (\mathbf{A}_h^n)^{\dagger} \bar{\psi}^{n,*}(\tau_{h-1})\right\|_{(U_{k,h}^{n})^{-1}}^2 \right] } \\
    &\overset{\RM{3}}\leq \frac{8Q_A\sqrt{ K |\mathcal{A}| \boldsymbol{\beta} }}{\gamma} \sqrt{rNH\log(1 + r K/\lambda ) },
\end{align*}
where $\RM{1}$ follows from the fact that $\bar{\theta}_n\in\boldsymbol{\mathcal{B}}_k$ for all $k\in[K]$, $\RM{2}$ is due to the Cauchy's inequality, and $\RM{3}$ follows from \Cref{lemma:elliptical potential lemma}.

Hence,
\begin{align*}
    \sum_{n} \mathtt{D}_{\TV} \left(\Pb_{\bar{\theta}_n}^{\pi_n }, \Pb_{\theta_n^*}^{\pi_n }\right) \leq \tilde{O} \left( \frac{ Q_A \sqrt{r|\mathcal{A}|NH \beta^{(N)}\log(1+rK/\lambda)} }{\gamma\sqrt{K} } \right) .
\end{align*}

\end{proof}

\section{Downstream learning: Proof of \Cref{thm:downstream l2}}\label{sec:prove downstream}

In this section, we first provide the full algorithm of OMLE. Then, we prove a new estimation guarantee under the presence of approximation error of $\hat{\Theta}_0^{\mathrm{u}}$. Finally, we provide the proof of \Cref{thm:downstream l2}.

\subsection{Optimistic model-based algorithm}

In this section, we provide the full algorithm of OMLE~\citep{liu2022optimistic,chen2022partially} given a model class $\hat{\Theta}$ and an estimation margin $\beta_0$, for the completeness of the paper.

First, OMLE seeks a exploration policy $\pi^k$ that maximizes the largest total variation distance between any two model parameters $\theta$ and $\theta'$ within a confidence set $\boldsymbol{\mathcal{B}}_k$. Then, OMLE uses policies adapted from $\pi^k$ to collect data. Finally, using the collected sample trajectories, OMLE constructs a confidence set which includes the true model parameter. The pseudo code is provided in \Cref{alg:DMT-PSR}.

\begin{algorithm}[h]
\caption{Downstream multi-task PSR (OMLE)}\label{alg:DMT-PSR}
\begin{algorithmic}[1]
\State {\bf Input:} $\boldsymbol{\mathcal{B}}_1= \hat{\Theta}_0^{\mathrm{u}}$, estimation margin $\beta_0$.

\For{$k=1,\ldots, K_{\mathrm{Down}}$}
\State \[\pi^k = \arg\max_{ \pi \in \Pi} \max_{ \theta, \theta' \in\boldsymbol{\mathcal{B}}_k  }  \mathtt{D}_{\TV}\left(\Pb_{\theta}^{\pi}, \Pb_{ \theta'}^{\pi}\right)\]

\For{$h\in[H]$}
\State Use $\nu_h^{\pi^{k}} $ to collect data $\tau_H^{k,h} $.
 
\EndFor

\State Construct confidence set $\boldsymbol{\mathcal{B}}_{k+1}:$

\begin{align*}
\textstyle
    \boldsymbol{\mathcal{B}}_{k+1} = \bigg\{ {\theta} \in \hat{\Theta}:  &\sum_{t< k}\sum_h \log \Pb_{\theta}^{\nu_h^{\pi^t}}(\tau_H^{t,h})  \geq \max_{ \theta'\in\hat{\Theta} } \sum_{t< k}\sum_h \log \Pb_{\theta}^{\nu_h^{\pi^t}}(\tau_H^{t,h})- \beta_0 \bigg\}\cap\boldsymbol{\mathcal{B}}_k
\end{align*}

\EndFor
\State {\bf Output:} Any $\bar{\theta}_0 \in \boldsymbol{\mathcal{B}}_{K_{\mathrm{Down}}+1}$, and a greedy policy $\bar{\pi}_0 =\arg\max_{{\pi}} V_{\bar{\theta}_0, R_0}^{\pi} $.

\end{algorithmic}

\end{algorithm}

\subsection{Estimation Guarantee of OMLE}

Recall that $\epsilon_0 = \mathrm{e}_{\alpha}(\hat{\Theta}_0^{\mathrm{u}}) = \min_{ \theta_0 \in \hat{\Theta}_0^{\mathrm{u}} } \max_{\pi} \mathtt{D}_{\mathrm{R},\alpha}(\Pb_{\theta_0^*}^{\pi}, \Pb_{\theta_0}^{\pi})$ is the approximation error of the model class $\hat{\Theta}_0^{\mathrm{u}}$. In this section, let $\theta_0^{\epsilon_0}  = \arg\min_{ \theta_0 \in \hat{\Theta}_0^{\mathrm{u}} } \max_{\pi} \mathtt{D}_{\mathrm{R},\alpha}(\Pb_{\theta_0^*}^{\pi}, \Pb_{\theta_0}^{\pi})$.

The following lemma is from Proposition B.1 in \citet{liu2022optimistic}.
\begin{lemma}\label{lemma: prop B.1}
    Let $\eta\leq \frac{1}{KH}$. With probability at least $1-\delta$, for any $\theta_0 \in \hat{\Theta}_0^{\mathrm{u}}$, we have 
    \begin{align*}
        \sum_{t<k}\sum_h \log \frac{ \Pb_{ \theta_0 }^{\nu_h^{\pi^t}}(\tau_H^{t,h}) }{\Pb_{\theta_0^{*}}^{\nu_h^{\pi^t}}(\tau_H^{t,h}) } \leq \log(\mathcal{N}_{\eta}(\hat{\Theta}_0^{\mathrm{u}})) + \log\frac{eK}{\delta}.
    \end{align*}
\end{lemma}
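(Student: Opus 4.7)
The plan is to deploy the standard bracketing Chernoff argument that underlies maximum-likelihood concentration with adaptive data collection. First I would fix a minimum $\eta$-bracket cover $\{[\underline{\theta}^{(j)},\overline{\theta}^{(j)}]\}_{j=1}^{\mathcal{N}_\eta(\hat{\Theta}_0^{\mathrm{u}})}$ of $\hat{\Theta}_0^{\mathrm{u}}$ with respect to the policy-weighted $\ell_\infty$ norm. For any $\theta_0 \in \hat{\Theta}_0^{\mathrm{u}}$ there exists a bracket index $j(\theta_0)$ whose upper function satisfies $\Pb_{\theta_0}^{\pi}(\tau_H)\le \Pb_{\overline{\theta}^{(j)}}^{\pi}(\tau_H)$ pointwise in $(\tau_H,\pi)$, so replacing the numerator enlarges the log-likelihood ratio and reduces the task to proving the bound uniformly over the finite collection $\{\overline{\theta}^{(j)}\}_j$, with $\theta_0$ then handled for free.

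Next I would bound the moment generating function of $S_k^{(j)}:=\sum_{t<k}\sum_h \log \bigl(\Pb_{\overline{\theta}^{(j)}}^{\nu_h^{\pi^t}}(\tau_H^{t,h})/\Pb_{\theta_0^*}^{\nu_h^{\pi^t}}(\tau_H^{t,h})\bigr)$ for a fixed $j$ by a standard peeling via the tower property. Let $\mathcal{F}_{t,h}$ be the $\sigma$-algebra generated by all data collected strictly before episode $t$ and step $h$; since the planning policy $\pi^t$ is $\mathcal{F}_{t,1}$-measurable and $\nu_h^{\pi^t}$ is thus $\mathcal{F}_{t,h}$-measurable, conditional on $\mathcal{F}_{t,h}$ the trajectory $\tau_H^{t,h}$ is drawn from $\Pb_{\theta_0^*}^{\nu_h^{\pi^t}}$ and we get
\[
\mathbb{E}\!\left[\frac{\Pb_{\overline{\theta}^{(j)}}^{\nu_h^{\pi^t}}(\tau_H^{t,h})}{\Pb_{\theta_0^*}^{\nu_h^{\pi^t}}(\tau_H^{t,h})}\,\Big|\,\mathcal{F}_{t,h}\right] = \sum_{\tau_H} \Pb_{\overline{\theta}^{(j)}}^{\nu_h^{\pi^t}}(\tau_H) \le 1+\eta,
\]
where the inequality uses the bracket width property (the lower function in each bracket is a sub-probability and the upper function exceeds it by total mass at most $\eta$ under any policy). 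Unrolling this conditional estimate backward in $(t,h)$ gives $\mathbb{E}[e^{S_k^{(j)}}]\le (1+\eta)^{KH}\le e$ whenever $\eta\le 1/(KH)$, since there are at most $KH$ terms in the product.

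Markov's inequality applied to $e^{S_k^{(j)}}$ then yields $\Pr\bigl(S_k^{(j)} \ge \log(1/\delta')\bigr)\le e\,\delta'$ for any $\delta'>0$; choosing $\delta' = \delta/(K\,\mathcal{N}_\eta(\hat{\Theta}_0^{\mathrm{u}}))$ and taking a union bound over the $K$ values of $k\in[K]$ and the $\mathcal{N}_\eta(\hat{\Theta}_0^{\mathrm{u}})$ brackets gives, with probability at least $1-\delta$, $S_k^{(j)}\le \log\mathcal{N}_\eta(\hat{\Theta}_0^{\mathrm{u}}) + \log(eK/\delta)$ simultaneously for all $j$ and all $k$. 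Combined with the pointwise domination $\log \Pb_{\theta_0}^{\nu_h^{\pi^t}}(\tau_H^{t,h}) \le \log \Pb_{\overline{\theta}^{(j(\theta_0))}}^{\nu_h^{\pi^t}}(\tau_H^{t,h})$ from the first step, this delivers the claimed inequality for every $\theta_0\in\hat{\Theta}_0^{\mathrm{u}}$. The only delicate point is the adaptivity of $\pi^t$: because $\pi^t$ is chosen from the history, one must peel the product one factor at a time using the filtration $\mathcal{F}_{t,h}$ rather than invoking product-measure independence, but the bracket step then goes through verbatim.
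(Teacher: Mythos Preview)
Your proposal is correct and follows essentially the same approach as the paper: the paper does not prove this lemma directly (it cites Proposition~B.1 of \citet{liu2022optimistic}), but its proof of the multi-task analogue, Proposition~\ref{prop:MT MLE1}, carries out exactly the bracketing/Chernoff/tower-property argument you describe, specialized here to $N=1$. Your handling of the adaptive policies via the filtration, the bound $\sum_{\tau_H}\Pb_{\overline{\theta}}^{\pi}(\tau_H)\le 1+\eta$, the MGF bound $(1+\eta)^{KH}\le e$, and the final union bound over $k$ and the brackets all match the paper's treatment.
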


Then, we show that the log-likelihood of model $\theta_0^{\epsilon_0}$ is sufficiently large.

\begin{lemma}\label{lemma:theta^c close enough}
    With probability at least $1-\delta$, we have
    \begin{align*}
         \sum_{t<k}\sum_h \log \frac{ \Pb_{ \theta_0^* }^{\nu_h^{\pi^t}}(\tau_H^{t,h} ) }{\Pb_{\theta_0^{\epsilon_0}}^{\nu_h^{\pi^t}}(\tau_H^{t,h}) } \leq \epsilon_0 KH + \frac{ \mathbf{1}_{\{\epsilon_0\neq 0\}} }{\alpha-1}\log\frac{K}{\delta}.
    \end{align*}
\end{lemma}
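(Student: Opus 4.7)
The plan is to apply a standard Chernoff-type argument with the twist that the moment generating function is controlled via the $\alpha$-R\'enyi divergence rather than the KL divergence. First, I would split into the degenerate case $\epsilon_0 = 0$ and the generic case $\epsilon_0 > 0$. If $\epsilon_0 = 0$, then $\mathtt{D}_{\mathrm{R},\alpha}(\Pb_{\theta_0^*}^\pi, \Pb_{\theta_0^{\epsilon_0}}^\pi) = 0$ for every $\pi$, hence $\Pb_{\theta_0^*}^\pi \equiv \Pb_{\theta_0^{\epsilon_0}}^\pi$, every log-ratio is zero almost surely, and the claimed bound holds with the indicator $\mathbf{1}_{\{\epsilon_0\neq 0\}}$ vanishing.

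For $\epsilon_0 > 0$, I would form the exponential supermartingale based on $X_{t,h} := \log\bigl(\Pb_{\theta_0^*}^{\nu_h^{\pi^t}}(\tau_H^{t,h})/\Pb_{\theta_0^{\epsilon_0}}^{\nu_h^{\pi^t}}(\tau_H^{t,h})\bigr)$ with tilt parameter $\alpha - 1 > 0$. The key identity is that, by the definition of R\'enyi divergence,
\begin{align*}
\Eb_{\tau_H \sim \Pb_{\theta_0^*}^{\nu_h^{\pi^t}}}\!\!\left[\exp\!\bigl((\alpha-1)X_{t,h}\bigr)\right] = \exp\!\bigl((\alpha-1)\,\mathtt{D}_{\mathrm{R},\alpha}(\Pb_{\theta_0^*}^{\nu_h^{\pi^t}},\Pb_{\theta_0^{\epsilon_0}}^{\nu_h^{\pi^t}})\bigr) \le e^{(\alpha-1)\epsilon_0},
\end{align*}
where the inequality uses the defining optimality of $\theta_0^{\epsilon_0}$. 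Next, I would exploit that $\pi^t$ is predictable with respect to the filtration generated by iterations $1,\dots,t-1$, and that within each iteration $t$ the $H$ trajectories $\{\tau_H^{t,h}\}_{h\le H}$ are conditionally independent given $\pi^t$. Peeling off expectations via the tower property, one iteration at a time and one step $h$ at a time, yields
\begin{align*}
\Eb\!\left[\exp\!\Bigl((\alpha-1)\sum_{t<k}\sum_{h\le H} X_{t,h}\Bigr)\right] \le \exp\!\bigl((\alpha-1)\epsilon_0 KH\bigr).
\end{align*}

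Then applying Markov's inequality with threshold $\epsilon_0 KH + \tfrac{1}{\alpha-1}\log(K/\delta)$ gives failure probability at most $\delta/K$ for a fixed $k$, and a union bound over $k \in [K]$ yields the stated high-probability bound with $\tfrac{\mathbf{1}_{\{\epsilon_0\neq 0\}}}{\alpha-1}\log(K/\delta)$. The main subtlety to handle carefully is the dependence structure of the data: I must verify that the policy used at step $(t,h)$ depends only on data collected strictly before iteration $t$, so that the R\'enyi-divergence bound can be applied conditionally on the past; everything else is a routine Chernoff computation.
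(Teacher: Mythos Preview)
Your proposal is correct and follows essentially the same approach as the paper: split off the trivial case $\epsilon_0=0$, then apply a Chernoff/Markov argument with tilt parameter $\alpha-1$, bound each conditional moment-generating factor via the R\'enyi-divergence definition by $e^{(\alpha-1)\epsilon_0}$, peel off the product using the tower property, and finish with a union bound over $k\in[K]$. Your version is slightly more explicit about the filtration/predictability structure of $\pi^t$, but otherwise the argument is identical.
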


\begin{proof}

By the definition of $\theta_0^{\epsilon_0}$, we have

\begin{align} \frac{1}{\alpha-1} \log \Eb_{\Pb_{\theta^*}^{\pi}} \left[ \left(\frac{ \Pb_{\theta^*}^{\pi'}(\tau_H) }{\Pb_{\theta^{\epsilon_0}}^{\pi'}(\tau_H) }\right)^{\alpha-1}\right] \leq \epsilon_0. \label{eqn:renyi}
\end{align}

If $\epsilon_0 = 0$, then the proof is trivial. In the following, we mainly consider the case when $\epsilon_0>0$.

By the Markov's inequality, for any $x\in\mathbb{R}$, we have
\begin{align*}
    \Pb &\left( \sum_{t<k}\sum_h \log \frac{\Pb_{\theta^*}^{\pi^{t,h}}(\tau_H^{t,h}) }{ \Pb_{\theta^{\epsilon_0}}^{\pi^{t,h}}(\tau_H^{t,h}) }  \geq x \right)\\
    & = \Pb\left( \prod_{t<k}\prod_h \left(\frac{\Pb_{\theta^*}^{\pi^{t,h}}(\tau_H^{t,h}) }{ \Pb_{\theta^{\epsilon_0}}^{\pi^{t,h}}(\tau_H^{t,h}) } \right)^{\alpha-1} \geq e^{(\alpha-1)x} \right)\\
    &\leq e^{-(\alpha-1)x} \Eb\left[ \prod_{t<k}\prod_h \left(\frac{\Pb_{\theta^*}^{\pi^{t,h}}(\tau_H^{t,h}) }{ \Pb_{\theta^{\epsilon_0}}^{\pi^{t,h}}(\tau_H^{t,h}) } \right)^{\alpha-1} \Eb \left[ \left(\frac{\Pb_{\theta^*}^{\pi^{k,h}}(\tau_H^{k,h}) }{ \Pb_{\theta^{\epsilon_0}}^{\pi^{k,h}}(\tau_H^{k,h}) } \right)^{\alpha-1} \bigg | \pi^{k,h} \right] \right] \\
    &\overset{\RM1}\leq e^{-(\alpha-1)x} e^{(\alpha-1)KH\epsilon_0}\\
    & = e^{-(\alpha-1)(x-KH\epsilon_0)},
\end{align*}
where $\RM1$ follows from \Cref{eqn:renyi}.

By choosing $x = \epsilon_0 KH + \frac{1}{\alpha-1}\log(K/\delta)$ and taking union bound over $k$, we conclude that, with probability at least $1-\delta$, 
\[  \sum_{t<k}\sum_h \log \frac{\Pb_{\theta^*}^{\pi^{t,h}}(\tau_H^{t,h}) }{ \Pb_{\theta^{\epsilon_0}}^{\pi^{t,h}}(\tau_H^{t,h}) }  \leq \epsilon_0 KH + \frac{1}{\alpha-1}\log\frac{K}{\delta} .\]

\end{proof}

Combining \Cref{lemma: prop B.1} and \Cref{lemma:theta^c close enough}, we immediately obtain that with probability at least $1-\delta/2$, the following bound holds.
\begin{align*}
    \sum_{t<k}\sum_h \log \Pb_{\theta_0^{\epsilon_0}}^{\nu_h^{\pi^t}}(\tau_H^{t,h}) &\geq \max_{\theta_0\in\hat{\Theta}_0^{\mathrm{u}}} \sum_{t<k}\sum_h \log \Pb_{\theta_0 }^{\nu_h^{\pi^t}}(\tau_H^{t,h}) \\
    &\quad- \left( \log\mathcal{N}_{\eta}(\hat{\Theta}_0^{\mathrm{u}}) + \log\frac{4eK}{\delta} + \epsilon_0KH + \frac{\mathbf{1}_{\{\epsilon_0\neq 0\}} }{\alpha-1} \log\frac{4K}{\delta}\right),
\end{align*}
where $\eta\leq \frac{1}{KH}$.

Setting $\beta_0 =  \log\mathcal{N}_{\eta}(\hat{\Theta}_0^{\mathrm{u}}) + \log\frac{4eK}{\delta} + \epsilon_0KH + \frac{\mathbf{1}_{\{\epsilon_0\neq 0\}} }{\alpha-1} \log\frac{4K}{\delta} $, we conclude that $\theta_0^{\epsilon_0}\in\boldsymbol{\mathcal{B}}_k$ for all $k\in[K]$. Based on this fact, we have the following estimation guarantee.

\begin{lemma}
    With probability at least $1-\delta$, for any $k\in[K]$ and $\theta_0 \in \boldsymbol{\mathcal{B}}_k$, we have
    \begin{align*}
        \sum_{t<k}\sum_h \mathtt{D}_{\TV}^2 \left( \Pb_{\theta_0}^{\nu_h^{\pi^t}}, \Pb_{\theta_0^*}^{\nu_h^{\pi^t}} \right) \leq 2\beta_0.
    \end{align*}
\end{lemma}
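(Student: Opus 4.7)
The plan is to reduce total variation squared to squared Hellinger distance via the standard inequality $\mathtt{D}_{\TV}^2 \leq 2\mathtt{D}_{\mathtt{H}}^2$, then bound the sum of squared Hellinger distances by a log-likelihood ratio (Proposition 2 style), and finally control that ratio by passing through the proxy model $\theta_0^{\epsilon_0}$. Concretely, I would first import the single-task analogue of Proposition~\ref{prop: hellinger< log} applied to $\hat{\Theta}_0^{\mathrm{u}}$: for any $\theta_0\in\hat{\Theta}_0^{\mathrm{u}}$,
\begin{align*}
\sum_{t<k}\sum_{h}\mathtt{D}_{\mathtt{H}}^2\!\bigl(\Pb_{\theta_0}^{\nu_h^{\pi^t}},\Pb_{\theta_0^*}^{\nu_h^{\pi^t}}\bigr) \leq \sum_{t<k}\sum_{h}\log\frac{\Pb_{\theta_0^*}^{\nu_h^{\pi^t}}(\tau_H^{t,h})}{\Pb_{\theta_0}^{\nu_h^{\pi^t}}(\tau_H^{t,h})} + c\,\log\!\bigl(\mathcal{N}_{\eta}(\hat{\Theta}_0^{\mathrm{u}})KH/\delta\bigr),
\end{align*}
which holds with probability at least $1-\delta/2$ on a single event uniform over $k$ and $\theta_0$. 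Note that this argument does not require $\theta_0^*\in\hat{\Theta}_0^{\mathrm{u}}$; the true model only plays the role of the data-generating distribution, while the bracketing cover is needed only for $\theta_0$.

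Next I would split the log-likelihood ratio through the anchor model $\theta_0^{\epsilon_0}=\arg\min_{\theta_0\in\hat{\Theta}_0^{\mathrm{u}}}\max_\pi \mathtt{D}_{\mathrm{R},\alpha}(\Pb_{\theta_0^*}^{\pi},\Pb_{\theta_0}^{\pi})$ as
\begin{align*}
\sum_{t<k}\sum_{h}\log\frac{\Pb_{\theta_0^*}^{\nu_h^{\pi^t}}(\tau_H^{t,h})}{\Pb_{\theta_0}^{\nu_h^{\pi^t}}(\tau_H^{t,h})} = \underbrace{\sum_{t<k}\sum_{h}\log\frac{\Pb_{\theta_0^*}^{\nu_h^{\pi^t}}(\tau_H^{t,h})}{\Pb_{\theta_0^{\epsilon_0}}^{\nu_h^{\pi^t}}(\tau_H^{t,h})}}_{(\mathrm{I})} + \underbrace{\sum_{t<k}\sum_{h}\log\frac{\Pb_{\theta_0^{\epsilon_0}}^{\nu_h^{\pi^t}}(\tau_H^{t,h})}{\Pb_{\theta_0}^{\nu_h^{\pi^t}}(\tau_H^{t,h})}}_{(\mathrm{II})}.
\end{align*}
Term (I) is controlled by Lemma~\ref{lemma:theta^c close enough} (on another event of probability at least $1-\delta/2$): $(\mathrm{I})\leq \epsilon_0 KH + \frac{\mathbf{1}_{\{\epsilon_0\neq 0\}}}{\alpha-1}\log(K/\delta)$. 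Term (II) is controlled by the membership $\theta_0\in\boldsymbol{\mathcal{B}}_k$: since $\theta_0^{\epsilon_0}\in\hat{\Theta}_0^{\mathrm{u}}$, the confidence set constraint gives $\sum_{t<k}\sum_h\log\Pb_{\theta_0}^{\nu_h^{\pi^t}}(\tau_H^{t,h})\geq \sum_{t<k}\sum_h\log\Pb_{\theta_0^{\epsilon_0}}^{\nu_h^{\pi^t}}(\tau_H^{t,h}) - \beta_0$, hence $(\mathrm{II})\leq \beta_0$.

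Combining, on the intersection of the two good events (probability at least $1-\delta$),
\begin{align*}
\sum_{t<k}\sum_{h}\mathtt{D}_{\mathtt{H}}^2\!\bigl(\Pb_{\theta_0}^{\nu_h^{\pi^t}},\Pb_{\theta_0^*}^{\nu_h^{\pi^t}}\bigr) \leq \beta_0 + \epsilon_0 KH + \tfrac{\mathbf{1}_{\{\epsilon_0\neq 0\}}}{\alpha-1}\log(K/\delta) + c\log\!\bigl(\mathcal{N}_\eta(\hat{\Theta}_0^{\mathrm{u}})KH/\delta\bigr) \leq \beta_0,
\end{align*}
after absorbing the complexity and approximation terms into the constant $c_0$ defining $\beta_0$. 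The final bound then follows from $\mathtt{D}_{\TV}^2\leq 2\mathtt{D}_{\mathtt{H}}^2$. The main obstacle, and the point requiring care, is the bookkeeping of constants so that the two probability-$1-\delta/2$ events, the $\epsilon_0 KH$ contribution from the R\'enyi-based control of (I), and the bracketing-number term from the Hellinger-to-log-likelihood step are all simultaneously absorbed into $\beta_0$ as defined, so that the final RHS is cleanly $2\beta_0$ rather than a sum of disparate pieces; this is purely a constant-chasing exercise but is the place where the non-realizable analysis differs most noticeably from the realizable upstream case.
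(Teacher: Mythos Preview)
Your proposal is correct and mirrors the paper's proof exactly: invoke the single-task analogue of Proposition~\ref{prop: hellinger< log}, split the log-likelihood ratio through the anchor $\theta_0^{\epsilon_0}$, bound (I) via Lemma~\ref{lemma:theta^c close enough} and (II) via the confidence-set constraint, and absorb the residual terms into $\beta_0$. The only slip is in your final display, where $\beta_0 + (\text{remaining terms})$ should be bounded by $2\beta_0$ rather than $\beta_0$; incidentally, the paper's own proof stops at $\sum \mathtt{D}_{\mathtt{H}}^2 \le 2\beta_0$ without ever converting to $\mathtt{D}_{\TV}^2$ (the $\mathtt{D}_{\TV}^2$ in the lemma statement is evidently a typo for $\mathtt{D}_{\mathtt{H}}^2$, consistent with how the bound is used downstream), so your extra $\mathtt{D}_{\TV}^2\le 2\mathtt{D}_{\mathtt{H}}^2$ step is not actually needed.
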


\begin{proof}

We follow the same argument as in \Cref{prop: hellinger< log}, except setting $N=1$ and $\boldsymbol{\Theta}_{\mathrm{u}} = \hat{\Theta}_0^{\mathrm{u}}$. Then, we obtain that, with probability at least $1-\delta/2$, the following inequality holds.

\begin{align*}
    \sum_{t<k} &\sum_h \mathtt{D}_{\mathtt{H}}^2 \left( \Pb_{\theta_0}^{\nu_h^{\pi^t}}, \Pb_{\theta_0^*}^{\nu_h^{\pi^t}} \right) \leq \sum_{t\leq k} \sum_h \log\frac{ \Pb_{\theta_0^*}^{\nu^{\pi^{t}}_{h}} (\tau_H^{t,h}) }{ \Pb_{\theta_0}^{ \nu^{\pi^{t}}_{h} } (\tau_H^{t,h}) } + \log \frac{K\mathcal{N}_{\eta}(\hat{\Theta}_0^{\mathrm{u}})}{\delta}.
\end{align*}

Since $\theta_0, \theta_0^{\epsilon_0}\in\boldsymbol{\mathcal{B}}_k$, by the optimality of $\boldsymbol{\mathcal{B}}_k$, we further have
\begin{align*}
    \sum_{t<k} &\sum_h \mathtt{D}_{\mathtt{H}}^2 \left( \Pb_{\theta_0}^{\nu_h^{\pi^t}}, \Pb_{\theta_0^*}^{\nu_h^{\pi^t}} \right) \\
    &\leq  \sum_{t\leq k} \sum_h \log\frac{ \Pb_{\theta_0^{*}}^{\nu^{\pi^{t}}_{h}} (\tau_H^{t,h}) }{ \Pb_{\theta_0^{\epsilon_0}}^{ \nu^{\pi^{t}}_{h} } (\tau_H^{t,h}) } +  \beta_0 +  \log \frac{K\mathcal{N}_{\eta}(\hat{\Theta}_0^{\mathrm{u}})}{\delta} \\
    &\overset{\RM{1}}\leq \epsilon_0KH + \frac{\mathbf{1}_{\{\epsilon_0\neq 0\}} }{\alpha-1}\log\frac{K}{\delta} 
    + \beta_0 + \log \frac{K\mathcal{N}_{\eta}(\hat{\Theta}_0^{\mathrm{u}})}{\delta}\\
    &\leq 2\beta_0,
\end{align*}
where $\RM{1}$ is due to \Cref{lemma:theta^c close enough}.

\end{proof}

\subsection{Proof of \Cref{thm:downstream l2}}

\begin{theorem}[Restatement of \Cref{thm:downstream l2}] 
    Fix $\alpha>1$. Let $\epsilon_0 = \mathrm{e}_{\alpha}(\hat{\Theta}_0^{\mathrm{u}})$, $\beta_0 = O(\log\frac{KH}{\delta} + \log\mathcal{N}_{\eta}(\hat{\Theta}_0^{\mathrm{u}}) + \epsilon_0 KH + \frac{\mathbf{1}_{\{\epsilon_0\neq0\}}}{\alpha -1})$, where $\eta\leq\frac{1}{KH}$. Under \Cref{assmp:well-condition}, with probability at least $1-\delta$, the output of \Cref{alg:DMT-PSR} satisfies that
    \begin{align}
         \textstyle \max_{\pi \in \Pi}\mathtt{D}_{\TV}\left(\Pb_{ \bar{\theta}_0}^{\pi}, \Pb_{\theta_0^*}^{\pi}\right) \leq \tilde{O}\left(  \frac{Q_A}{\gamma}\sqrt{ \frac{r|\mathcal{A}|H  \beta_0 }{K} }  +\sqrt{\epsilon_0} \right).
    \end{align}
\end{theorem}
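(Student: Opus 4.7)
The argument is the single-task ($N=1$) specialization of the upstream pipeline (\Cref{thm:upstream}) combined with one new ingredient to handle the fact that the true downstream model $\theta_0^\ast$ may lie outside $\hat{\Theta}_0^{\mathrm{u}}$. The key proxy is the best R\'enyi-approximation $\theta_0^{\epsilon_0}\in\arg\min_{\theta_0\in\hat{\Theta}_0^{\mathrm{u}}}\max_\pi \mathtt{D}_{\mathtt{R},\alpha}(\Pb_{\theta_0^\ast}^\pi,\Pb_{\theta_0}^\pi)$: we show that $\theta_0^{\epsilon_0}$ stays in every $\boldsymbol{\mathcal{B}}_k$ with high probability, which is enough to make the standard OMLE Hellinger guarantee go through at the inflated threshold $\beta_0$, after which the PSR feature-norm reduction and the elliptical-potential step proceed as in the upstream proof.

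\textbf{Step 1: inclusion $\theta_0^{\epsilon_0}\in\boldsymbol{\mathcal{B}}_k$.} Two tail bounds are combined. \Cref{lemma: prop B.1} gives an $O(\log\mathcal{N}_\eta(\hat{\Theta}_0^{\mathrm{u}}) + \log(K/\delta))$ bracketing-based bound on the log-likelihood ratio $\sum_{t<k,h}\log\Pb_{\theta_0}^{\nu^{\pi^t}_h}/\Pb_{\theta_0^\ast}^{\nu^{\pi^t}_h}$ valid uniformly over $\theta_0\in\hat{\Theta}_0^{\mathrm{u}}$. \Cref{lemma:theta^c close enough} uses the R\'enyi definition together with a Chernoff/Markov argument to bound $\sum_{t<k,h}\log\Pb_{\theta_0^\ast}^{\nu^{\pi^t}_h}/\Pb_{\theta_0^{\epsilon_0}}^{\nu^{\pi^t}_h}$ by $\epsilon_0 KH + \frac{\mathbf{1}_{\{\epsilon_0\ne 0\}}}{\alpha-1}\log(K/\delta)$. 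Summing these two bounds and invoking the choice of $\beta_0$ in the theorem shows $\theta_0^{\epsilon_0}$ always clears the confidence-set cutoff.

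\textbf{Step 2: Hellinger guarantee inside $\boldsymbol{\mathcal{B}}_k$.} For any $\theta_0\in\boldsymbol{\mathcal{B}}_k$, apply the single-task analog of \Cref{prop: hellinger< log} (whose proof only brackets elements of the model class and does not require $\theta_0^\ast$ to sit inside it) to bound the Hellinger sum by $\sum_{t<k,h}\log(\Pb_{\theta_0^\ast}^{\nu^{\pi^t}_h}/\Pb_{\theta_0}^{\nu^{\pi^t}_h}) + O(\log(K\mathcal{N}_\eta/\delta))$. Chain through $\theta_0^{\epsilon_0}$: the confidence-set optimality gives $\sum\log\Pb_{\theta_0}\geq\sum\log\Pb_{\theta_0^{\epsilon_0}}-\beta_0$, and Step 1(b) gives $\sum\log\Pb_{\theta_0^{\epsilon_0}}\geq\sum\log\Pb_{\theta_0^\ast}-(\epsilon_0 KH+\frac{\mathbf{1}_{\{\epsilon_0\ne 0\}}}{\alpha-1}\log(K/\delta))$. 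With $\beta_0$ absorbing both excess terms, one arrives at $\sum_{t<k,h}\mathtt{D}_\mathtt{H}^2(\Pb_{\theta_0}^{\nu^{\pi^t}_h},\Pb_{\theta_0^\ast}^{\nu^{\pi^t}_h})\leq 2\beta_0$.

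\textbf{Step 3: PSR feature-norm reduction and elliptical potential.} The single-task version of \Cref{lemma:Up TV<feature norm} (with the projector $\mathbf{A}_h$ constructed from $\theta_0^\ast$ satisfying \Cref{eqn:projector}) converts $\mathtt{D}_{\TV}(\Pb_{\theta_0}^\pi,\Pb_{\theta_0^\ast}^\pi)$ into $\frac{Q_A\sqrt{|\mathcal{A}|\beta_0}}{\gamma}\sqrt{\sum_h\Eb_{\tau_{h-1}\sim\Pb_{\theta_0^\ast}^\pi}\|(\mathbf{A}_h)^\dagger\bar{\psi}^{0,\ast}(\tau_{h-1})\|^2_{(U_{k,h})^{-1}}}$; this is transparent to the approximation error because everything is expressed relative to $\theta_0^\ast$ and the Hellinger input is already Step 2's bound. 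Since $\bar{\theta}_0\in\boldsymbol{\mathcal{B}}_{K+1}\subset\boldsymbol{\mathcal{B}}_k$ and $\pi^k$ is chosen to maximize the pairwise TV gap inside $\boldsymbol{\mathcal{B}}_k$, a standard triangle inequality yields $\mathtt{D}_{\TV}(\Pb_{\bar{\theta}_0}^\pi,\Pb_{\theta_0^\ast}^\pi)K\leq 2\sum_{k}\max_{\theta_0\in\boldsymbol{\mathcal{B}}_k}\mathtt{D}_{\TV}(\Pb_{\theta_0}^{\pi^k},\Pb_{\theta_0^\ast}^{\pi^k})$. Applying Cauchy--Schwarz and the elliptical potential lemma (\Cref{lemma:elliptical potential lemma}) to the accumulated feature norms contributes the $\sqrt{rH\log(1+rK/\lambda)}$ factor, producing the claimed $\tilde{O}(\frac{Q_A}{\gamma}\sqrt{r|\mathcal{A}|H\beta_0/K})$ rate; the $\sqrt{\epsilon_0}$ summand in the theorem statement is the irreducible piece of this rate coming from the $\epsilon_0 KH$ term inside $\beta_0$.

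\textbf{Main obstacle.} Controlling the non-realizable log-likelihood ratio $\sum\log(\Pb_{\theta_0^\ast}/\Pb_{\theta_0^{\epsilon_0}})$. Neither TV nor KL yields a usable moment generating function for an i.i.d.-like martingale argument across episodes, but the R\'enyi divergence of order $\alpha>1$ is exactly the log-MGF of the log-likelihood ratio; this is precisely why the approximation error in \Cref{assmp:renyi} is measured in R\'enyi, and it is the one place where the proof genuinely departs from the realizable upstream analysis.
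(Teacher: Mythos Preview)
Your Steps 1 and 2 are exactly the paper's argument, and Step 3 is almost right, but there is one genuine gap in the optimism step.

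You write that ``a standard triangle inequality yields''
\[
K\,\mathtt{D}_{\TV}\big(\Pb_{\bar{\theta}_0}^{\pi},\Pb_{\theta_0^\ast}^{\pi}\big)\;\le\;2\sum_{k}\max_{\theta_0\in\boldsymbol{\mathcal{B}}_k}\mathtt{D}_{\TV}\big(\Pb_{\theta_0}^{\pi^k},\Pb_{\theta_0^\ast}^{\pi^k}\big).
\]
In the upstream proof this step works because both $\bar{\boldsymbol{\theta}}$ and $\boldsymbol{\theta}^\ast$ belong to every $\boldsymbol{\mathcal{B}}_k$, so the optimistic choice of $\pi^k$ dominates $\mathtt{D}_{\TV}(\Pb_{\bar{\theta}}^{\pi},\Pb_{\theta^\ast}^{\pi})$ for the arbitrary $\pi$. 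Here, however, $\theta_0^\ast$ need not lie in $\hat{\Theta}_0^{\mathrm{u}}$, hence need not lie in $\boldsymbol{\mathcal{B}}_k$, and the optimism bound $\max_{\theta,\theta'\in\boldsymbol{\mathcal{B}}_k}\mathtt{D}_{\TV}(\Pb_\theta^{\pi},\Pb_{\theta'}^{\pi})\le\max_{\theta,\theta'\in\boldsymbol{\mathcal{B}}_k}\mathtt{D}_{\TV}(\Pb_\theta^{\pi^k},\Pb_{\theta'}^{\pi^k})$ does not apply to the pair $(\bar{\theta}_0,\theta_0^\ast)$.

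The paper's fix uses the proxy you already set up in Step 1: first write
\[
\mathtt{D}_{\TV}\big(\Pb_{\bar{\theta}_0}^{\pi},\Pb_{\theta_0^\ast}^{\pi}\big)\;\le\;\mathtt{D}_{\TV}\big(\Pb_{\bar{\theta}_0}^{\pi},\Pb_{\theta_0^{\epsilon_0}}^{\pi}\big)+\mathtt{D}_{\TV}\big(\Pb_{\theta_0^{\epsilon_0}}^{\pi},\Pb_{\theta_0^\ast}^{\pi}\big).
\]
Since both $\bar{\theta}_0$ and $\theta_0^{\epsilon_0}$ are in every $\boldsymbol{\mathcal{B}}_k$ (by Step 1), optimism now applies to the first term, and the rest of your Step 3 goes through for it. The second term is controlled directly by Pinsker plus R\'enyi monotonicity, $\mathtt{D}_{\TV}\le\sqrt{\tfrac12\mathtt{D}_{\mathtt{R},\alpha}}\le\sqrt{\epsilon_0}$, uniformly in $\pi$. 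This is the actual source of the additive $\sqrt{\epsilon_0}$ in the theorem; it is \emph{not} the $\epsilon_0 KH$ piece of $\beta_0$ (that piece is already inside the first $\tilde{O}$ term and would contribute $\tfrac{Q_A}{\gamma}\sqrt{r|\mathcal{A}|H^2\epsilon_0}$ rather than a clean $\sqrt{\epsilon_0}$). So your explanation of where the $\sqrt{\epsilon_0}$ comes from is off, and patching the triangle step as above closes the gap.
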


\begin{proof}
    First, we follow the proof in \Cref{lemma:Up TV<feature norm}, except setting $N=1$. We obtain that
\begin{align*}
    \mathtt{D}_{\TV} & \left(\Pb_{\hat{\theta}_0^k}^{\pi}(\tau_H), \Pb_{\theta_0^*}^{\pi} (\tau_H) \right) \leq \frac{Q_A|\mathcal{A}|}{\gamma}\sqrt{C_{\lambda} + \sum_{t<k}\zeta_{t,h}^0}\sum_h \Eb_{\tau_{h-1}\sim\Pb_{\theta_0^*}^{\pi}}\left[ \|(\mathbf{A}_h^0)^{\dagger}\bar{\psi}^{0,*}(\tau_{h-1})\|_{(U_{k,h}^0)^{-1}} \right]\\
    &\overset{\RM{1}}\leq O\left(\frac{Q_A\sqrt{|\mathcal{A}|\beta_0}}{\gamma}\sum_h \Eb_{\tau_{h-1}\sim\Pb_{\theta_0^*}^{\pi}}\left[ \|(\mathbf{A}_h^0)^{\dagger}\bar{\psi}^{0,*}(\tau_{h-1})\|_{(U_{k,h}^0)^{-1}} \right] \right),
\end{align*}
where 
\begin{equation}
    \left\{\begin{aligned}
        &C_{\lambda} = \frac{\lambda r Q_A^2 |\mathcal{A}|}{\gamma^4}, \\
        &\lambda = \frac{\gamma^4\beta_0}{rQ_A^2|\mathcal{A}|}, \\
        &\zeta_{t,h}^0 = \mathtt{D}_{\mathtt{H}}^2\left( \Pb_{\hat{\theta}_0^k}^{\nu_h^{\pi^t}}(\tau_H^{t,h}) , \Pb_{\theta_0^*}^{\nu_h^{\pi^t}}(\tau_H^{t,h}) \right),\\
        & U_{k,h}^0 = \lambda I + (\mathbf{A}_h^0)^{\dagger}\sum_{t<k} \Eb_{\tau_{h-1}\sim \Pb_{\theta_0^*}}^{\nu_h^{t} } \bar{\psi}^{0,*}(\tau_{h-1})\bar{\psi}^{0,*}(\tau_{h-1})^{\top} ((\mathbf{A}_h^0)^{\dagger} )^{\top} ,
    \end{aligned}
    \right.
\end{equation}
and $\RM{1}$ is due to the estimation guarantee.

Therefore, 
\begin{align*}
    K \mathtt{D}_{\TV}& \left( \Pb_{\bar{\theta}_0}^{\pi}, \Pb_{\theta_0^*}^{\pi} \right) \\
    &\leq K \mathtt{D}_{\TV} \left( \Pb_{\bar{\theta}_0}^{\pi}, \Pb_{\theta_0^{\epsilon_0}}^{\pi} \right) + K \mathtt{D}_{\TV} \left( \Pb_{\theta_0^{\epsilon_0}}^{\pi}, \Pb_{\theta_0^*}^{\pi} \right)\\
    &\leq \sum_{k}\max_{\hat{\theta}_0^k, \tilde{\theta}_0^k\in \boldsymbol{\mathcal{B}}_k }  \mathtt{D}_{\TV} \left( \Pb_{\hat{\theta}_0^k}^{\pi^k}, \Pb_{\tilde{\theta}_0^k}^{\pi^k} \right) + K\sqrt{\epsilon_0}\\
    &\leq 2 \sum_{k}\max_{\hat{\theta}_0^k \in \boldsymbol{\mathcal{B}}_k }  \mathtt{D}_{\TV} \left( \Pb_{\hat{\theta}_0^k}^{\pi^k}, \Pb_{  \theta_0^*}^{\pi^k} \right)  + K\sqrt{\epsilon_0}\\
    &\leq \frac{Q_A\sqrt{|\mathcal{A}|(\beta_0+\epsilon_0KH + \frac{1}{\alpha-1}\log(K/\delta))}}{\gamma} \sqrt{rHK\log(1+rK/\lambda)} + K\sqrt{\epsilon_0}.
\end{align*}

\end{proof}

\section{Bracketing numbers of Examples and Missing Proofs in \Cref{subsec: example-up} and \Cref{sec:downstream example}}\label{Appd: F bracketing number}
 In the section, we present bracketing numbers of examples and missing Proofs in \Cref{subsec: example-up} and \Cref{sec:downstream example}.
 Because $\phi_h^\top=\sum_{(o_h,a_h)\in\Oc\times\Ac}\phi_{h+1}^{\top}\Mbf_h(o_h,a_h)$  for each $h \in [H-1]$, $\phi_h$ can be decided by $\phi_H$ and $\{\Mbf_h\}_{h=1}^H$. For simplicity, in this section, we reparameterize the PSRs parameters as $\theta= \{\phi_H,\{\Mbf_h\}_{h=1}^H\}$. Without loss of generality, from Theorem C.1 and C.7 in \cite{liu2022optimistic}, we assume for any $(o,a) \in \Oc \times \Ac$, $\Mbf_h(o,a) \in \Rb^{r \times r}$, and the rank of $\Mbf_h(o,a)$ is $r$.

\subsection{Bracketing Numbers of Basic single-task PSRs}
 For completeness, in this subsection, we first present the analysis of bracketing number of basic single-task PSRs.

\begin{lemma}[Bracketing number of single-task PSRs] \label{lemma: bracketing number of single-task}
Let $\Theta$ be the collection of PSR parameters of all rank-$r$ sequential decision making problems with obseration space $\Oc$, action space $\Ac$ and horizon $H$. Then we have 
\begin{align*}
    \log \Nc_{\eta}(\Theta) \leq O(r^2H^2|\Oc||\Ac|\log(\frac{|\Oc||\Ac|}{\eta})).
\end{align*}
\end{lemma}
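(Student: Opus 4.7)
The plan is to reduce the bracketing-number bound to an $\eta$-net count of the PSR parameter vector $\theta=\{\phi_H,\{\mathbf{M}_h(o,a)\}_{h,o,a}\}$ and to convert parameter-space accuracy into policy-weighted TV accuracy via a telescoping argument. First I would observe that the total number of scalar parameters is $d_P = r + H\,|\mathcal{O}|\,|\mathcal{A}|\,r^{2}=\tilde{O}(r^{2}H|\mathcal{O}||\mathcal{A}|)$. Using the rank-$r$ factorisation together with the self-consistency constraints of a valid PSR ($\sum_{o,a}\phi_{h+1}^{\top}\mathbf{M}_h(o,a)=\phi_h^{\top}$ and the fact that $\phi_H^{\top}\mathbf{M}_H\cdots\mathbf{M}_1\psi_0$ must be a probability for every action sequence), one can confine each coordinate to a bounded interval (up to a fixed change of basis), giving a uniform $B=\mathrm{poly}(r,H,|\mathcal{O}|,|\mathcal{A}|)$ such that $\|\phi_H\|_\infty,\|\mathbf{M}_h(o,a)\|_\infty\le B$. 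Discretising each of the $d_P$ coordinates to an $\epsilon$-grid then produces a set $\bar\Theta_\epsilon$ with $|\bar\Theta_\epsilon|\le(CB/\epsilon)^{d_P}$.

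Second, I would upgrade this parameter-space cover to a bracketing cover of the induced distribution class. For any two parameters $\theta,\theta'$ lying in a common grid cell (so that each of $\phi_H-\phi_H'$ and $\mathbf{M}_h(o,a)-\mathbf{M}_h'(o,a)$ has entries bounded by $\epsilon$), I would apply the telescoping identity
\begin{align*}
\mathbb{P}_{\theta}(\tau_H^o\mid\tau_H^a)-\mathbb{P}_{\theta'}(\tau_H^o\mid\tau_H^a)
&=(\phi_H-\phi_H')^{\top}\mathbf{M}_H\cdots\mathbf{M}_1\psi_0\\
&\quad+\sum_{h=1}^{H}(\phi_H')^{\top}\mathbf{M}_H'\cdots\mathbf{M}_{h+1}'\bigl(\mathbf{M}_h-\mathbf{M}_h'\bigr)\mathbf{M}_{h-1}\cdots\mathbf{M}_1\psi_0,
\end{align*}
multiply by $\pi(\tau_H)$, and sum. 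Using Proposition~\ref{prop:TV less than Estimation error} (which casts exactly this telescoping as an upper bound on the policy-weighted TV distance) together with the uniform bound $B$ on surrogates of $\mathbf{m}(\omega_h)^{\top}$ and $\psi(\tau_{h-1})$, each of the $H+1$ terms contributes at most $\mathrm{poly}(B,r,|\mathcal{O}|,|\mathcal{A}|)\cdot\epsilon$, yielding a uniform Lipschitz estimate $\max_{\pi}D_{\mathrm{TV}}(\mathbb{P}_{\theta}^{\pi},\mathbb{P}_{\theta'}^{\pi})\le C\,H\,\mathrm{poly}(r,|\mathcal{O}|,|\mathcal{A}|)\,\epsilon$.

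Third, I would turn the cover into a bracket. For each grid point $\theta_j\in\bar\Theta_\epsilon$ define the deterministic envelope $\Delta_j(\tau_H):=C H\,\mathrm{poly}(r,|\mathcal{O}|,|\mathcal{A}|)\,\epsilon\cdot\pi\text{-free upper bound}$ obtained by replacing $\pi(\tau_H)$ by $1$ in the bound above, and set $l_j(\tau_H)=\max\{0,\mathbb{P}_{\theta_j}(\tau_H)-\Delta_j(\tau_H)\}$, $g_j(\tau_H)=\mathbb{P}_{\theta_j}(\tau_H)+\Delta_j(\tau_H)$. Every $\mathbb{P}_\theta$ with $\theta$ in the cell of $\theta_j$ is sandwiched between $l_j$ and $g_j$, and the policy-weighted $\ell_\infty$ norm $\|g_j-l_j\|_\infty^{\mathrm{p}}$ is exactly the TV bound derived above. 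Choosing $\epsilon\asymp \eta/(H\,\mathrm{poly}(r,|\mathcal{O}|,|\mathcal{A}|))$ makes every bracket an $\eta$-bracket, so
\begin{align*}
\log\mathcal{N}_\eta(\Theta)\le d_P\log\!\frac{CB}{\epsilon}
= O\!\left(r^{2}H|\mathcal{O}||\mathcal{A}|\cdot\log\!\frac{H|\mathcal{O}||\mathcal{A}|}{\eta}\right),
\end{align*}
which matches the claimed bound (with a possibly extra $H$ factor absorbed in the stated $H^{2}$).

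The main obstacle I anticipate is the second step: establishing a uniform-in-$\pi$ Lipschitz constant for $\mathbb{P}_\theta^\pi$ with respect to the parameters. Because PSR parameters are defined only up to a choice of basis, naively $\mathbf{M}_h(o,a)$ and $\phi_H$ are unbounded, so one must first fix a canonical representative (e.g.\ the one induced by the factorisation~\eqref{eq: lr-decom} together with the known core tests) before the $\epsilon$-net has finitely many points and before the telescoping can be controlled by polynomial factors independent of the well-conditioning constant $\gamma$. All other steps (counting the grid, constructing brackets, union bound) are routine once this Lipschitz estimate is in place.
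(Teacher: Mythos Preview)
Your approach---discretize the parameter space, use the telescoping identity to control the policy-weighted TV distance, and upgrade the cover to brackets---is exactly the paper's. The paper resolves the basis/boundedness issue you flag at the end by invoking the normalization from Corollary~C.8 of \cite{liu2022optimistic}, which gives $\|\Mbf_h(o,a)\|_2\le 1$ and $\|\phi_H\|_2\le 1$; this is the canonical representative you were looking for.

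The genuine gap is in your second step. Under that normalization one has $\|\mbf(\omega_h)\|_2\le 1$, but $\|\psi_0\|_2$ (and hence $\|\psi(\tau_{h-1})\|_2$) is only bounded by $\sqrt{|\Ac|^H}$, not by a polynomial. The telescoping therefore yields
\[
\max_{\pi}\DTV\bigl(\Pb_\theta^\pi,\Pb_{\theta'}^\pi\bigr)\;\lesssim\; H\sqrt{r^3|\Ac|^H}\,\delta,
\]
an \emph{exponential}-in-$H$ Lipschitz constant, not the polynomial one you assert. (Equivalently, even with a uniform pointwise bound on $|\Pb_\theta(\tau_H)-\Pb_{\theta'}(\tau_H)|$, summing against $\pi(\tau_H)$ over all trajectories costs $\sum_{\tau_H}\pi(\tau_H)=|\Oc|^H$.) To get $\eta$-brackets one is forced to take $\delta=\eta/(|\Oc||\Ac|)^{cH}$, so that $\log(1/\delta)=O(H\log(|\Oc||\Ac|/\eta))$, and
\[
\log\Nc_\eta(\Theta)\le d_P\log\tfrac{C}{\delta}=O\bigl(r^2H|\Oc||\Ac|\bigr)\cdot O\bigl(H\log\tfrac{|\Oc||\Ac|}{\eta}\bigr).
\]
Thus the second factor of $H$ in the stated $H^2$ is not slack to be ``absorbed'' but is exactly the price of the exponential Lipschitz constant; your claim that ``each of the $H+1$ terms contributes at most $\mathrm{poly}(B,r,|\Oc|,|\Ac|)\cdot\epsilon$'' fails because a product of $H$ operator-norm-$1$ matrices applied to $\psi_0$ does not stay polynomially bounded after summing over trajectories.
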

\begin{proof}
    We assume $\psi_0$ is known,\footnote{Such assumption does not influence the order of bracketing number since the model complexity related to $\Psi_0$ does not dominate.} and $\norm{\psi_0}_2 \leq \sqrt{|\Ac|^H}$. By Corollary C.8 from \cite{liu2022optimistic}, the PSR model class have following form:
    \begin{align*}
        {\Theta}=\left\{ \theta:  \theta= \{\phi_H,\{\Mbf_h\}_{h=1}^H\},\norm{\Mbf_h(o,a)}_2 \leq 1\hspace{1pt}\text{for}\hspace{1pt}(o,a) \in \Oc \times \Ac, \norm{\phi_H}_2 \leq 1\right\}.
    \end{align*} 
    Denote $\widetilde{\Theta}_\delta$ as the $\delta$-cover of $\Theta$ w.r.t $\ell_\infty$-norm with $\delta=\frac{\eta}{(|\Oc||\Ac|)^{cH}}$ for some large $c >0$. Mathematically, for any $\theta = \{\phi_H, \{\Mbf_h\}_{h=1}^H\} \in \Theta$,  there exists $\tilde{\theta} = \{\tilde{\phi}_H, \{\widetilde{\Mbf}_h\}_{h=1}^H\} \in \widetilde{\Theta}_\delta$, such that for any $(o,a) \in \Oc \times \Ac$,
    \begin{align}
        \norm{\phi_H-\tilde{\phi}_H}_{\infty} \leq \delta, \quad
        \norm{\textbf{Vec}(\Mbf_h(o,a))-\textbf{Vec}(\widetilde{\Mbf}_h(o,a))}_{\infty} \leq \delta. \nonumber
    \end{align}
    
Next, we show that $\widetilde{\Theta}_\delta$ can constitute an $\eta$-bracket for $\Theta$. For any policy $\pi$, we have
    \begin{align}
        \sum_{\tau_H \in (\Oc \times \Ac)^H}&\left|\Pb_\theta(\tau_H)-\Pb_{\tilde{\theta}}(\tau_H)\right|\times \pi(\tau_H) \nonumber\\
        & \leq \sum_{\tau_H \in (\Oc \times \Ac)^H}\sum_{h=1}^H\left| \mbf_h(\omega_h)^{\top} \left( \widetilde{\Mbf}_h(o_h,a_h) - \Mbf_h(o_h,a_h) \right) \psi_{h-1}(\tau_{h-1}) \right|\times\pi(\tau_H) \nonumber\\
        & \overset{\RM{1}}{\leq} \sum_{\tau_H \in (\Oc \times \Ac)^H}\sum_{h=1}^H\norm{ \mbf_h(\omega_h)}_2 \norm{\left( \widetilde{\Mbf_h}(o_h,a_h) - \Mbf_h(o_h,a_h) \right)}_2\norm{ \psi_{h-1}(\tau_{h-1})}_2 \times\pi(\tau_H)\nonumber\\
         & \overset{\RM{2}}{\leq} \sum_{\tau_H \in (\Oc \times \Ac)^H}\sum_{h=1}^H \norm{\left( \widetilde{\Mbf}_h(o_h,a_h) - \Mbf_h(o_h,a_h) \right)}_2\sqrt{|\Ac|^H} \times\pi(\tau_H)\nonumber\\
         & \overset{\RM{3}}{\leq} \sum_{\tau_H \in (\Oc \times \Ac)^H}\sum_{h=1}^H \sqrt{r}\norm{\left( \widetilde{\Mbf_h}(o_h,a_h) - \Mbf_h(o_h,a_h) \right)}_{\infty}\sqrt{|\Ac|^H} \times\pi(\tau_H)\label{Eq: lemma0single-1}\\
         & {\leq} \sum_{\tau_H \in (\Oc \times \Ac)^H}\sum_{h=1}^H \sqrt{r^3}\norm{\left( \textbf{Vec}(\widetilde{\Mbf_h}(o_h,a_h) )- \textbf{Vec}(\Mbf_h(o_h,a_h)) \right)}_{\infty}\sqrt{|\Ac|^H} \times\pi(\tau_H)\nonumber\\
         & \leq H\sqrt{r^3|\Ac|^H}\delta \leq \eta, \nonumber
    \end{align}
    where $\RM{1}$ follows from the property of operation norm of matrix, $\RM{2}$ follows from the fact that $\norm{\mbf_h(\omega_h)}_2\leq\norm{\phi_H}_2\norm{\Mbf_H(o_H,a_H)}_2\ldots\norm{\Mbf_{h+1}(o_{h+1},a_{h+1})}_2 \leq 1$ and $\norm{\psi_{h-1}(\tau_{h-1})} \leq \norm{\Mbf_{h-1}(o_{h-1},a_{h-1})}_2\ldots\norm{\Mbf_1(o_1,a_1)}_2\norm{\psi_0}_2 \leq \sqrt{|\Ac|^H}$, and $\RM{3}$ follows from the relationship between the operation norm induced by $\ell_2$ norm and $\ell_\infty$ norm.
    
    By \Cref{lem:aux:cov},
    $$ |\widetilde{\Theta}_\delta| \leq \left(\frac{1+2\sqrt{r}}{\delta}\right)^{r+r^2\times H|\Oc||\Ac|}=\left(3\sqrt{r}\frac{(|\Oc||\Ac|)^{cH}}{\eta}\right)^{r+r^2\times H|\Oc||\Ac|}$$ and $$\log|\widetilde{\Theta}_\delta|=O(r^2H^2|\Oc||\Ac|\log(\frac{|\Oc||\Ac|}{\eta})),$$
    which equals to the log $\eta$-bracketing number.

\end{proof}



\subsection{Bracketing Number of Upstream examples}

\textbf{\Cref{eg:same transtion}(Multi-task POMDP with same transition kernels):} The multi-task parameter space is $$\left\{(\mathbb{T}_{h,a}, \mathbb{O}_h^1,\ldots,\mathbb{O}_h^N): \mathbb{T}_{h,a}\in\mathbb{R}^{|\mathcal{S}|\times|\mathcal{S}|}, \mathbb{O}_h^i\in\mathbb{R}^{|\mathcal{O}|\times|\mathcal{A}|}, \forall i\in[N] \right\}_{h\in[H],a\in\mathcal{A}}.$$

Note that the value of each coordinate of these matrices are probabilities, and thus are bounded within [0,1]. Therefore, the $\eta$-bracketing number in this case is $O(H(|\mathcal{S}|^2|\mathcal{A}| + N|\mathcal{O}||\mathcal{S}|)\log\frac{H|\mathcal{S}||\mathcal{O}||\mathcal{A}|}{\eta} )$ 

\textbf{\Cref{eg:M+delta}(Multi-task PSR with perturbed models):}
     Suppose there exist a latent base task $\mathtt{P}_\mathrm{b}$, and a noisy perturbation space $\boldsymbol{\Delta}$. Each task $n \in [N]$ is a noisy perturbation of the latent base task and can be parameterized into two parts: the base task plus a task-specified noise term. Specifically, for each step $h \in [H]$ and task $n \in [N]$, any $(o,a) \in \Oc \times \Ac$, we have 
     \begin{align*}
         \textstyle \Mbf_h^n(o_h,a_h)=\Mbf_h^\mathrm{b}(o_h,a_h)+\Delta^n_h(o_h,a_h), \quad \Delta^n_h \in \boldsymbol{\Delta}.
     \end{align*}    
Such a multi-task PSR satisfies that $\beta^{(N)} \leq O( \log\frac{KHN}{\delta}+r^2 |\mathcal{O}| |\mathcal{A}| H^2\log\frac{ |\mathcal{O}| |\mathcal{A}| }{\eta}   + HN\log|\boldsymbol{\Delta}|)$, whereas $\beta^{(1)}$ for a single task is given by $ O( r^2 |\mathcal{O}| |\mathcal{A}| H^2\log\frac{ |\mathcal{O}| |\mathcal{A}| }{\eta}   + H(N-1)\log|\boldsymbol{\Delta}| )$.
Clearly, $\beta^{(N)} \ll N\beta^{(1)}$  holds if $\log |\boldsymbol{\Delta}| \ll \tilde{O}(r^2|\mathcal{O}||\Ac|H)$, which can be easily satisfied for low perturbation environments. In such a case, the multi-task PSR  benefits from  a significantly reduced sample complexity compared to single-task learning.

\begin{proof}[Proof of \Cref{eg:M+delta}]\label{Appd_sub: F-3}
        Suppose there exist a latent base task model space:
        \begin{align*}
           {\Theta^\mathrm{b}}=\left\{ \theta:  \theta= \{\phi_H,\{\Mbf_h\}_{h=1}^H\},\norm{\Mbf_h(o,a)}_2 \leq 1\hspace{1pt}\text{for}\hspace{1pt}(o,a) \in \Oc \times \Ac, \norm{\phi_H}_2 \leq 1\right\}.
        \end{align*}
        A base task model is selected: $\theta^\mathrm{b}=\{\phi^\mathrm{b}_H,\{\Mbf^\mathrm{b}_h\}_{h=1}^H\}$. The  parameters of each task $n$ in multi-task PSR models are as follows: 
    \begin{align*}
    {\Theta^n}& =\left\{ \theta:  \theta= \{\phi^\mathrm{b}_H,\{\Mbf^\mathrm{b}_h+\Delta_h^n\}_{h=1}^H\},\Delta_h^n \in \mathbf{\Delta}, \norm{\phi_H^n}_2 \leq 1\right\},\\
    \end{align*}
    where $\Delta_h^n(\cdot,\cdot): \mathcal{O}\times\mathcal{A}\rightarrow\mathbb{R}^{d_h\times d_{h-1}}$ for any $h \in [H]$, and $n \in [N]$, and $\mathbf{\Delta}$ is the noisy perturbation space with finite cardinality. 
    
    Let $\widetilde{\Theta}_\delta^\mathrm{b}$ be the $\delta$-cover of $\Theta^\mathrm{b}$ w.r.t $\ell_\infty$-norm with $\delta=\frac{\eta}{(|\Oc||\Ac|)^{cH}}$. From \Cref{lemma: bracketing number of single-task}, we have $|\widetilde{\Theta}_\delta^\mathrm{b}|=\left(\frac{(|\Oc||\Ac|)^{cH}}{\eta}\right)^{2r+r^2\times H|\Oc||\Ac|}$. For each $n \in [N]$, denote 
    \begin{align*}
        \widetilde{\Theta}_\delta^n &= \widetilde{\Theta}_\delta^\mathrm{b} + \mathbf{\Delta}\\
        &:= \left\{\theta: \theta=\left(\widetilde{\phi}^\mathrm{b}_H,\{\widetilde{\Mbf}^n_h+\Delta_h^n\}_{h=1}^H\right);\left(\widetilde{\phi}^\mathrm{b}_H,\{\widetilde{\Mbf}_h\}_{h=1}^H\right) \in \widetilde{\Theta}_\delta^\mathrm{b}; \Delta_h^n \in \mathbf{\Delta} \right\}.
    \end{align*}
    Obviously, $|\widetilde{\Theta}_\delta^n|=\left(3\sqrt{r}\frac{(|\Oc||\Ac|)^{cH}}{\eta}\right)^{r+r^2\times H|\Oc||\Ac|} \times |\mathbf{\Delta}|^H$. Then denote the multi-task $\delta$-cover as $\widetilde{\ThetaB}_\delta=\widetilde{\Theta}_\delta^1\times\cdots\times\widetilde{\Theta}_\delta^N$. Following from \Cref{lemma: bracketing number of single-task}, for any task $n \geq 1$, $\widetilde{\Theta}_\delta^n$ can constitute an $\eta$-bracket for $\Theta^n$, so $\widetilde{\ThetaB}_\delta$ can constitute an $\eta$-bracket for $\ThetaB$. By noticing that each $\bar{\Theta}_\delta^n$ has a common part, and the changing part is only related to $\Delta_h^n$. The corresponding $\eta$-bracketing number is $\left(3\sqrt{r}\frac{(|\Oc||\Ac|)^{cH}}{\eta}\right)^{r+r^2\times H|\Oc||\Ac|} \times |\mathbf{\Delta}|^{HN}$, and the log $\eta$-bracketing number is at most $O\left(r^2 |\mathcal{O}| |\mathcal{A}| H^2\log\frac{ |\mathcal{O}| |\mathcal{A}| }{\eta}   + H(N-1)\log|\boldsymbol{\Delta}|\right)$.  
\end{proof}

\textbf{\Cref{eg: linear com}(Multi-task PSRs: Linear combination of core tasks):}
Suppose that the multi-task PSR lies in the linear span of $m$ core tasks, i.e., there exist a set of core tasks indexed by $\{1,2,\ldots,m\}$ such that each PSR can be represented as a linear combination of those $m$ core tasks. Specifically, for each task $n \in [N]$, there exists a coefficient  vector $\boldsymbol{\alpha}^n=(\alpha_1^n,\cdots,\alpha_m^n)^\top \in \Rb^m$ s.t. for any $h \in [H]$ and $(o_h,a_h) \in \Oc \times \Ac$,
    \begin{align*}
        \textstyle \phi_h^n(o_h,a_h)=\sum_{l=1}^m\alpha_l^n \phi_h^l(o_h,a_h), \quad \Mbf_h^n(o_h,a_h)=\sum_{l=1}^m\alpha_l^n \Mbf_h^l(o_h,a_h).
    \end{align*}
    For regularization, we assume $0 \leq \alpha^n_l \leq 1$ for all $l \in [m]$ and $n\in [N]$, and $\sum_{l=1}^m\alpha^n_l=1$ for all $n \in [N]$. It can be shown that 
    $\beta^{(N)}=O(   m( r^2  |\mathcal{O}| |\mathcal{A}| H^2+ N) \log\frac{ |\mathcal{O}| |\mathcal{A}| }{\eta})$, whereas $\beta^{(1)}$ for a single task is given by $r^2|\mathcal{O}||\mathcal{A}|H^2\log\frac{|\mathcal{O}||\mathcal{A}|}{\eta}$.
Clearly, $\beta^{(N)} \ll N\beta^{(1)}$ holds if $m \leq \min\{N,r^2  |\mathcal{O}| |\mathcal{A}| H^2\}$, which is satisfied  in practice.

\begin{proof}[Proof of \Cref{eg: linear com}]
     Denote the core tasks model class $\ThetaB^0 = \Theta^{0,1} \times \cdots \times \Theta^{0,m}$, and the multi-task model class $\ThetaB=\Theta^1\times\cdots\times\Theta^N$, where for any $n \in [N]$, $\Theta^n$ is defined as:  
    \begin{align*}
    {\Theta}^n&= \left\{ \left(\sum_{l=1}^m{\alpha}_l^n{\phi}_H^l,\left\{\sum_{l=1}^m{\alpha}_l^n{\Mbf}^l_h\right\}_{h=1}^H\right):{\boldsymbol{\alpha}}^n=({\alpha}^n_1,\cdots,{\alpha}^n_m)^\top \in \Rb^m;\left({\phi}_H^l,\{{\Mbf}^l_h\}_{h=1}^H\right) \in {\Theta}^{0,l}
    \right\}.
    \end{align*}
    
    For any $\delta \geq 0$, we first consider the $\delta$-cover of the model class of core tasks $\widetilde{\ThetaB}^0_{{\delta}}=\widetilde{\Theta}^{0,1}_{{\delta}} \times \cdots \times \widetilde{\Theta}^{0,m}_{{\delta}}$, where for each base tasks $l \in [m]$, $\widetilde{\Theta}^l_{{\delta}}$ is a $\delta$-cover for ${\Theta}^m$. Similar to proof of \Cref{lemma: bracketing number of single-task}, $|\widetilde{\Theta}^{0,l}_{\delta}|=(3\sqrt{r} \times \frac{1}{\delta})^{r^2H^2|\Oc||\Ac|}$ and $|\widetilde{\ThetaB}^0_{\delta}|=(3\sqrt{r} \times \frac{1}{\delta})^{r^2H^2|\Oc||\Ac|m}$. 
    
    Then we consider the cover for multi-task parameter space. Denote $\boldsymbol{C}_{\delta}^m$ as a $\delta$-cover for the unit ball in $\Rb^m$ w.r.t. $\ell_1$ norm. Mathematically, for any $n \in [N]$ and vector $\boldsymbol{\alpha}^n \in \Rb^m$, there exists an $\tilde{\boldsymbol{\alpha}}^n \in \boldsymbol{C}_{\delta}^m$ such that $\norm{\boldsymbol{\alpha}^n-\tilde{\boldsymbol{\alpha}}^n}_1 \leq \delta$.  In addition, the cardinality $|\boldsymbol{C}_{\delta}^m|=(\frac{3}{\delta})^m$.

    Define the multi-task model class $\widetilde{\ThetaB}_\delta=\widetilde{\Theta}_\delta^1\times\cdots\times\widetilde{\Theta}_\delta^N$, where for any $n \in [N]$, $\widetilde{{\Theta}}^n_\delta$ is defined as 
    \begin{align*}
    \widetilde{{\Theta}}^n_\delta&= \left\{ \left(\sum_{l=1}^m\tilde{\alpha}_l^n\widetilde{\phi}_H^l,\left\{\sum_{l=1}^m\tilde{\alpha}_l^n\widetilde{\Mbf}^l_h\right\}_{h=1}^H\right):\tilde{\boldsymbol{\alpha}}^n=(\tilde{\alpha}^n_1,\cdots,\tilde{\alpha}^n_m)^\top \in \boldsymbol{C}_{\delta}^m;\left(\widetilde{\phi}_H^l,\{\widetilde{\Mbf}^l_h\}_{h=1}^H\right) \in \widetilde{\Theta}^{0,l}_{\delta}
    \right\}.
    \end{align*}

    We next show that $\widetilde{\ThetaB}_\delta$ is a $\eta$-bracket of $\ThetaB_\delta$.
    
    By definition, for any model of base task $l \in [m]$: $\theta^l=\left({\phi}_H^l,\{{\Mbf}^l_h\}_{h=1}^H\right) \in \Theta^l$, there exists $\tilde{{\theta}}^l = \left(\widetilde{\phi}_H^l,\{\widetilde{\Mbf}^l_h\}_{h=1}^H\right) \in \widetilde{\Theta}^l_{\delta}$, such that for any $(o,a) \in \Oc \times \Ac$, 
    \begin{align}
        \norm{\phi^l_H-\tilde{\phi}^l_H}_{\infty} \leq \delta, 
        \norm{\textbf{Vec}(\Mbf^l_h(o,a))-\textbf{Vec}(\widetilde{\Mbf}^l_h(o,a))}_{\infty} \leq \delta. \nonumber
    \end{align}

    Then, for any $\theta^l \in \Theta^l,\boldsymbol{\alpha}^n \in \Rb^m$, there exist $\tilde{\theta}\in \tilde{\Theta}^l_{\delta}$ and $\tilde{\boldsymbol{\alpha}}^n \in \boldsymbol{C}_\delta$ such that
    \begin{align*}
    &\norm{\sum_{l=1}^m{\alpha}_l^n{\phi}_H^l-\sum_{l=1}^m\tilde{\alpha}_l^n\widetilde{\phi}_H^l}_\infty\\
    & \quad \leq \sum_{l=1}^m|\alpha_l^n|\norm{{\phi}_H^l-\widetilde{\phi}_H^l}_\infty+\sum_{l=1}^m|\widetilde{\alpha}_l^n-\alpha_l^n|\norm{\widetilde{\phi}_H^l}_\infty\\
    & \quad \leq \sum_{l=1}^m\alpha_l^n\delta+\sum_{l=1}^m|\widetilde{\alpha}_l^n-\alpha_l^n|\leq 2\delta,
    \end{align*}
    and for any $(o_h,a_h)\in \Oc \times \Ac$
    \begin{align*}
        &\norm{\sum_{l=1}^m\alpha_l^n\Mbf_h^l- \sum_{l=1}^m\widetilde{\alpha}_l^n\widetilde{\Mbf}_h^l}_\infty\\
        &\quad\leq \sum_{l=1}^m|\alpha_l^n|\norm{\Mbf_h^l-\widetilde{\Mbf}_h^l}_\infty+\sum_{l=1}^m|\widetilde{\alpha}_l^n-\alpha_l^n|\norm{\widetilde{\Mbf}_h^l}_\infty\\
        &\quad\leq \sum_{l=1}^m\alpha_l^n\sqrt{r}\norm{\textbf{Vec}(\Mbf_h^l)-\textbf{Vec}(\widetilde{\Mbf}_h^l)}_\infty+\sum_{l=1}^m|\widetilde{\alpha}_l^n-\alpha_l^n|\norm{\widetilde{\Mbf}_h^l}_\infty\\
        & \quad \leq \sum_{l=1}^m\alpha_l^n\sqrt{r}\delta+\sum_{l=1}^m\sqrt{r}|\tilde{\alpha}_l^n-\alpha_l^n|\\
        & \quad = \sqrt{r}\delta+\sqrt{r}\norm{\tilde{\alpha}^n-\alpha^n}_1\leq 2\sqrt{r}\delta.
    \end{align*}

    Similar to the analysis in the proof of \Cref{lemma: bracketing number of single-task}, specifically, \Cref{Eq: lemma0single-1}, $\widetilde{\Theta}^t_\delta$ can constitute an $\eta$-bracket for $\Theta^t$ with $\delta=\frac{\eta}{2\sqrt{r}(|\Oc||\Ac|)^{cH}}$. 
    
    The cardinality of the cover of multi-task model class is  $|\widetilde{\ThetaB}_\delta|=|\widetilde{\ThetaB}_\delta^0||\boldsymbol{C}_\delta|^N=(3\sqrt{r} \times \frac{1}{\delta})^{r^2H^2|\Oc||\Ac|m} \times
    (\frac{3}{\delta})^{mN}$.
    
    In conclusion, the log $\eta$-bracketing number is     
    $O\left(r^2H^2|\Oc||\Ac|m\log(\frac{rH|\Oc||\Ac|}{\eta})+mN\log(\frac{rH|\Oc||\Ac|}{\eta})\right) $.
\end{proof}

\subsection{ Bracketing Number of Downstream Examples }\label{Appd_sub: F-4}

\textbf{ \Cref{eg:same transtion} } Note that the model class for downstream learning is $\hat{\Theta}_0^{\mathrm{u}} = \{\mathbb{O}_h\}_{h\in[H]}$. Thus, we immediately obtaint that $\log\mathcal{N}_{\eta}(\hat{\Theta}_0^{\mathrm{u}}) = O(H|\mathcal{O}||\mathcal{S}|\log\frac{|\mathcal{O}||\mathcal{S}|}{\eta})$

\textbf{\Cref{eg:M+delta-ds}(Multi-task PSR with perturbed models):}
Similar to the upstream tasks, the downstream task $0$ is also a noisy perturbation of the latent base task. Specifically, for each step $h \in [H]$, any $(o,a) \in \Oc \times \Ac$, we have 
     \begin{align}
         \textstyle \phi_H^0=\phi_H^{\mathrm{b}}, \Mbf_h^{0}(o_h,a_h)=\Mbf_h^\mathrm{b}(o_h,a_h)+\Delta^{0}_h(o_h,a_h), \quad \Delta^{0}_h \in \boldsymbol{\Delta}. \label{eq:eg3-1}
     \end{align} 
The log $\eta$-bracketing number is at most $H\log|\boldsymbol{\Delta}|$. 
\begin{proof}
    Suppose the estimated model parameter of the base task is $\bar{\theta}^{\mathrm{b}}=\{\bar{\phi}_H^{\mathrm{b}},\{\overline{\Mbf}_h^{\mathrm{b}}\}_{h=1}^H\}$.
     Because the downstream task model parameters satisfy \Cref{eq:eg3-1}, the empirical candidate model class can be characterized as
     \begin{align*}
         \hat{\Theta}_0^{\mathrm{u}}&=\left.\{\theta: \theta=\{\phi_H^{0},\{\Mbf^0_h\}_{h=1}^H\};\phi_H^0=\bar{\phi}_H^{\mathrm{b}};\right.\\
         &\qquad\qquad\qquad\left.
         \Mbf_h^{0}(o_h,a_h)= \overline{\Mbf}_h^{\mathrm{b}}(o_h,a_h)+\Delta_h^0(o_h,a_h), (o_h,a_h)\in \Oc\times\Ac;\Delta_h^0\in\boldsymbol{\Delta} \right.\}.
     \end{align*}
     If $\bar{\theta}^{\mathrm{b}}$ is given, then the candidate model class is decided by $\boldsymbol{\Delta}$. Then for any $\eta > 0$, the $\eta$-bracketing number is $|\boldsymbol{\Delta}|^H$.
\end{proof}

\textbf{\Cref{eg: linear com-ds}(Multi-task PSRs: Linear combination of core tasks):} Suppose the downstream task $0$ also lies in the linear span of $m$ core tasks same as the upstream. Moreover, assume the upstream tasks are diverse enough to span the whole core tasks space. As a result, the downstream task can be represented as a linear combination of a subset of the upstream tasks. Specifically, there exists a constant $L$ satisfying $m \leq L \leq N$ and a coefficient vector $\boldsymbol{\alpha}^{0}=(\alpha_1^{0},\cdots,\alpha_L^{0})^\top \in \Rb^L$ s.t. for any $h \in [H]$ and $(o_h,a_h) \in \Oc \times \Ac$,
\begin{align}
        \textstyle \phi_H^{0}=\sum_{l=1}^L\alpha_l^{0} \phi_H^l, \quad \Mbf_h^{0}(o_h,a_h)=\sum_{l=1}^L\alpha_l^{0} \Mbf_h^l(o_h,a_h). \label{eq:eg4-1}
\end{align}
    For regularization, we assume $0 \leq \alpha^{0}_l \leq 1$ for all $l \in [L]$, and $\sum_{l=1}^L\alpha^{0}_l=1$. It can be shown that 
    $\beta_0= O(LH\log(\frac{r|\Oc||\Ac|}{\eta}))$, whereas $\beta^{(1)}$ for learning without prior information is given by $O(r^2|\mathcal{O}||\mathcal{A}|H^2\log\frac{|\mathcal{O}||\mathcal{A}|}{\eta})$.
Clearly, $\beta_0 \ll \beta^{(1)}$ holds if $m \leq \min\{N,r^2  |\mathcal{O}| |\mathcal{A}| H^2\}$, which is satisfied  in practice. 

\begin{proof}
     Suppose for the upstream task $l\in[L]$, the estimated model parameter is $\bar{\theta}_l=\{\bar{\phi}_H^l,\{\overline{\Mbf}_h^l\}_{h=1}^H\}$.
     Because the downstream task model parameters satisfy \Cref{eq:eg4-1}, the empirical candidate model class can be characterized as
     \begin{align*}
         \hat{\Theta}_0^{\mathrm{u}}&=\left.\{\theta: \theta=\{\phi_H^{0},\{\Mbf^0_h\}_{h=1}^H\};\phi_H^{0}=\sum_{l=1}^L\alpha_n^{0} \bar{\phi}_H^l;\right.\\
         &\qquad\qquad\qquad\left.\Mbf_h^{0}(o_h,a_h)=\sum_{l=1}^L\alpha_l^{0} \overline{\Mbf}_h^l(o_h,a_h), (o_h,a_h)\in \Oc\times\Ac;\boldsymbol{\alpha}^{0}\in\Rb^L \right.\}.
     \end{align*}
    Then we consider the cover for $\hat{\Theta}_0^{\mathrm{u}}$. Denote $\boldsymbol{C}_{\delta}^L$ as a $\delta$-cover for the unit ball in $\Rb^L$ w.r.t. $\ell_1$ norm. Mathematically, for any vector $\boldsymbol{\alpha}^{0} \in \Rb^L$, there exists an $\tilde{\boldsymbol{\alpha}}^{0} \in \boldsymbol{C}_{\delta}^L$ such that $\norm{\boldsymbol{\alpha}^{0}-\tilde{\boldsymbol{\alpha}}^{0}}_1 \leq \delta$.  In addition, the cardinality $|\boldsymbol{C}_{\delta}^L|=(\frac{3}{\delta})^L$. 
    
    Define 
    \begin{align*}
         \widetilde{\Theta}_\delta^{\mathrm{u}}&=\left.\{\theta: \theta=\{\phi_H^{0},\{\Mbf_h\}_{h=1}^H\};\phi_H^{0}=\sum_{l=1}^L\tilde{\alpha}_l^{0} \bar{\phi}_H^l;\right.\\
         &\qquad\qquad\qquad\left.\Mbf_h^{0}(o_h,a_h)=\sum_{l=1}^L\tilde{\alpha}_l^{0} \overline{\Mbf}_h^l(o_h,a_h), (o_h,a_h)\in \Oc\times\Ac;\tilde{\boldsymbol{\alpha}}^{0}\in \boldsymbol{C}_{\delta}^L\right.\}.
     \end{align*}
     Then, for any $\theta^0 \in \hat{\Theta}_0$ with $\boldsymbol{\alpha}^0 \in \Rb^L$, there exist $\tilde{\theta}\in \tilde{\Theta}^{\mathrm{u}}_{\delta}$ with $\tilde{\boldsymbol{\alpha}}^0 \in \boldsymbol{C}_\delta^L$ such that
    \begin{align*}
    &\norm{\sum_{l=1}^L\alpha_l^{0} \bar{\phi}_H^l-\sum_{l=1}^L\tilde{\alpha}_l^{0} \bar{\phi}_H^l}_\infty
    \leq \sum_{l=1}^L|\alpha_l^{0}-\tilde{\alpha}_l^{0}|\norm{ \bar{\phi}_H^l}_\infty \leq \norm{\boldsymbol{\alpha}^0-\tilde{\boldsymbol{\alpha}}^0}_1 \leq \delta,
    \end{align*}
    and for any $(o_h,a_h)\in \Oc \times \Ac$
    \begin{align*}
        &\norm{\sum_{l=1}^L{\alpha}_l^{0} \overline{\Mbf}_h^l(o_h,a_h)-\sum_{l=1}^L\tilde{\alpha}_l^{0} \overline{\Mbf}_h^l(o_h,a_h)}_\infty\\
        & \qquad \qquad \leq \sum_{l=1}^L|\alpha_l^0-\tilde{\alpha}_l^0|\norm{\overline{\Mbf}_h^l(o_h,a_h)}_\infty \leq \sqrt{r}\norm{\boldsymbol{\alpha}^0-\tilde{\boldsymbol{\alpha}}^0} \leq \sqrt{r}\delta.
    \end{align*}

     Similar to the analysis in the proof of \Cref{lemma: bracketing number of single-task}, specifically, \Cref{Eq: lemma0single-1}, $\widetilde{\Theta}^{\mathrm{u}}_\delta$ can constitute an $\eta$-bracket for $\widehat{\Theta}_0^{\mathrm{u}}$ with $\delta=\frac{\eta}{2\sqrt{r}(|\Oc||\Ac|)^{cH}}$. 
    
    The cardinality of the cover of multi-task model class is  $|\widetilde{\Theta}^{\mathrm{u}}_\delta|=|\boldsymbol{C}_{\delta}^L|=(\frac{3}{\delta})^L$. In conclusion, the log $\eta$-bracketing number is     
    $O\left(LH\log(\frac{r|\Oc||\Ac|}{\eta})\right) $.
\end{proof}

\section{Examples of Multi-task MDPs from previous work}\label{Appd_sub: F-1}
To demonstrate that our framework encompasses multi-task learning under MDPs, we provide several examples of MDPs from previous work in this subsection. Suppose $\Sc$ is the state space, there exist $N$ source tasks, and $P^{(*,n)}: \Sc \times \Ac \times \Sc \to \Rb$ is the true transition kernel of task $n$.
\begin{figure}
\centering    
\begin{overpic}[width=0.6\textwidth]{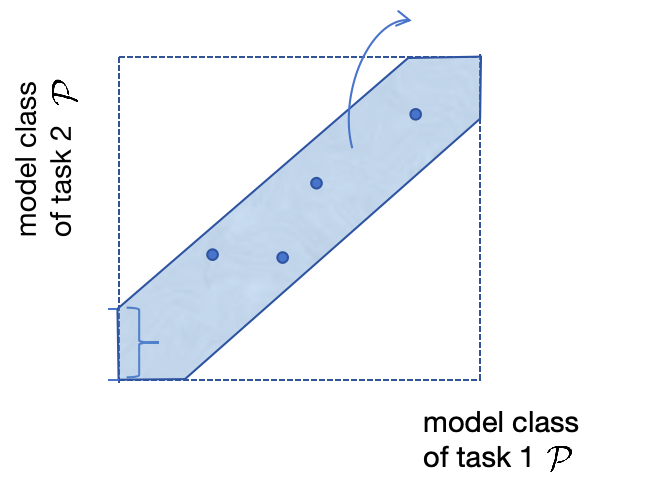} 
    
    \put(63,70){\footnotesize$\Fc_{\mathrm{u}}\subset\mathcal{P}^2$}
    \put(37,39){\fontsize{1pt}{10pt}\selectfont$\left(P^{(*,1)},  P^{(*,2)}\right)$}
    \put(9,12){\fontsize{2pt}{10pt}\selectfont$P^0$}
    \put(9,26){\fontsize{2pt}{10pt}\selectfont$P^1$}
    \put(25,21){\fontsize{1pt}{10pt}\selectfont$ \mathtt{D}_{\TV}(P^0(\cdot|s,a),P^1(\cdot|s,a)) \leq \epsilon $}
    
\end{overpic}
\caption{Supplementary illustration of joint model class in two dimensions for MDPs. A concrete example of tasks with similar transition kernels, i.e., any point $(P^0,  P^1)$ in the joint class $ \Fc_{\mathrm{u}}$ satisfies $\max_{(s,a)\in\Sc\times\Ac}\mathtt{D}_{\TV}(P^0(\cdot|s,a),P^1(\cdot|s,a)) \leq \epsilon$ for a small positive constant $\epsilon$.  }
\label{fig:2}  
\end{figure}

\cite{cheng2022provable} studied multi-task learning under low-rank MDPs in which the transition kernel $P^{(*,n)}$ has a $d$ dimension low-rank decomposition into two embedding functions $\phi^{(*)}: \Sc \times \Ac \to \Rb^d, \mu^{(*,n)}: \Sc \to \Rb^d$ as $P^{(*,n)}(s,a,s^{'})=\langle\phi^{(*)}(s,a),\mu^{(*,n)}(s^{'})\rangle$ for all $(s,a,s^{'}) \in \Sc \times \Ac \times \Sc$ for each task $n$. In this setting, \cite{cheng2022provable} assume the $N$ source tasks share common representations $\phi^{(*)}$ and for each task $n$, $\phi^{(*)} \in \Phi, \mu^{(*,n)} \in \Psi$ for finite model class $\{\Phi,\Psi\}$. Consequently, the $\eta$-bracketing number for the multi-task low-rank MDPs model class is at most $O(H|\Phi||\Psi|^N)$, which is much smaller than the one of the individual single-task with $O(H|\Phi|^N|\Psi|^N)$ if $|\Phi| \gg |\Psi|$ .

\cite{zhang2021provably} studied multi-task learning under tabular MDPs with an assumption that for any two tasks $n_1,n_2 \in [N]$, it holds that $\max_{(s,a) \in \Sc\times \Ac}\DTV(P^{(*,n_1)}(\cdot|s,a)|P^{(*,n_2)}(\cdot|s,a)) \leq \epsilon$ for some small $\epsilon>0$. Then the multi-task model class is much smaller than the model class of $N$ individual single-task (see \Cref{fig:2} for an illustration when $N=2$). Consequently, the $\eta$-bracketing number for the multi-task low-rank MDPs model class is at most $O ( H|\mathcal{S}|^2|\mathcal{A}|(\log\frac{H|\mathcal{A}||\mathcal{S}|}{\eta}+(N-1)\log\frac{H|\mathcal{A}||\mathcal{S}|\epsilon}{\eta}))$. This is smaller than that of each individual single-task, which is $O ( HN|\mathcal{S}|^2|\mathcal{A}|\log\frac{H|\mathcal{A}||\mathcal{S}|}{\eta})$ if $\epsilon \leq \eta$ and $N \geq 1$.

\section{Auxillary lemmas}

The following lemma characterizes the relationship between the total variation distance and the Hellinger-squared distance. Note that the result for probability measures has been proved in Lemma H.1 in \citet{zhong2022posterior}. Since we consider  more general bounded measures, we provide the full proof for completeness.
\begin{lemma}\label{lemma:TV and hellinger}
Given two bounded measures $P$ and $Q$ defined on the set $\mathcal{X}$, let $|P| = \sum_{x\in\mathcal{X}} P(x)$ and $|Q| = \sum_{x\in\mathcal{X}}Q(x).$ We have
\[ \mathtt{D}_{\TV}^2(P,Q)  \leq 4(|P|+|Q|)\mathtt{D}_{\mathtt{H}}^2(P,Q).  \]
In addition, if $P_{Y|X}, Q_{Y|X}$ are two conditional distributions over a random variable $Y$, and $P_{X,Y} = P_{Y|X}P$, $Q_{X,Y}= Q_{Y|X}Q$ are the joint distributions when $X$ follows the distributions $P$ and $Q$, respectively, we have
\[\mathop{\Eb}_{X\sim P }\left[ \mathtt{D}_{\mathtt{H}}^2(P_{Y|X} (\cdot|X),Q_{Y|X}(\cdot|X))\right] \leq 8\mathtt{D}_{\mathtt{H}}^2 (P_{X,Y},Q_{X,Y}). \]
\end{lemma}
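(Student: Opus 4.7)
My plan is to prove the two inequalities separately, with Part 1 being a Cauchy--Schwarz style argument and Part 2 being an algebraic identity followed by marginalization.

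For Part 1, the plan is to factor the total variation integrand as
\[
|P(x)-Q(x)| \;=\; \bigl|\sqrt{P(x)}-\sqrt{Q(x)}\bigr|\cdot\bigl(\sqrt{P(x)}+\sqrt{Q(x)}\bigr),
\]
and then invoke Cauchy--Schwarz to split this product into a Hellinger piece and a ``mass'' piece. Concretely, $\mathtt{D}_{\TV}(P,Q)^2 \le \bigl(\sum_x(\sqrt{P(x)}-\sqrt{Q(x)})^2\bigr)\bigl(\sum_x(\sqrt{P(x)}+\sqrt{Q(x)})^2\bigr)$. The first factor equals $2\mathtt{D}_{\mathtt{H}}^2(P,Q)$ by the definition used in the paper (the $\tfrac{1}{2}\sum(\sqrt{P}-\sqrt{Q})^2$ form, which for $|P|=|Q|=1$ reduces to $1-\sum\sqrt{PQ}$ as used in the proof of Proposition~2). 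The second factor is bounded by $2\sum_x(P(x)+Q(x)) = 2(|P|+|Q|)$ via $(a+b)^2 \le 2(a^2+b^2)$. Multiplying these two bounds together gives the claimed factor of $4(|P|+|Q|)$. This part is routine.

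For Part 2, the idea is to relate $\sqrt{P(x)}\bigl(\sqrt{P_{Y|X}(y|x)}-\sqrt{Q_{Y|X}(y|x)}\bigr)$ to the joint Hellinger increment. The key algebraic identity is
\[
\sqrt{P(x)}\bigl(\sqrt{P_{Y|X}(y|x)}-\sqrt{Q_{Y|X}(y|x)}\bigr)
\;=\;\bigl(\sqrt{P_{X,Y}(x,y)}-\sqrt{Q_{X,Y}(x,y)}\bigr)
\;-\;\bigl(\sqrt{P(x)}-\sqrt{Q(x)}\bigr)\sqrt{Q_{Y|X}(y|x)}.
\]
Squaring and applying $(u-v)^2 \le 2u^2+2v^2$, then summing over $(x,y)$, yields
\[
\sum_{x,y} P(x)\bigl(\sqrt{P_{Y|X}(y|x)}-\sqrt{Q_{Y|X}(y|x)}\bigr)^2
\;\le\; 4\,\mathtt{D}_{\mathtt{H}}^2(P_{X,Y},Q_{X,Y}) + 4\,\mathtt{D}_{\mathtt{H}}^2(P,Q),
\]
where the second term uses $\sum_y Q_{Y|X}(y|x)=1$. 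The LHS is $2\,\mathbb{E}_{X\sim P}[\mathtt{D}_{\mathtt{H}}^2(P_{Y|X},Q_{Y|X})]$. The final ingredient is the data-processing (marginalization) inequality for Hellinger distance, $\mathtt{D}_{\mathtt{H}}^2(P,Q)\le \mathtt{D}_{\mathtt{H}}^2(P_{X,Y},Q_{X,Y})$, which follows from Cauchy--Schwarz applied to $\sum_x\sqrt{P(x)Q(x)} = \sum_x\sqrt{\bigl(\sum_y P_{X,Y}\bigr)\bigl(\sum_y Q_{X,Y}\bigr)} \ge \sum_{x,y}\sqrt{P_{X,Y}Q_{X,Y}}$. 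Combining the two bounds and dividing by $2$ gives the stated factor of $8$ (with $4$ on each of the two terms, summing to $8$ after halving $\times 2$; the paper's constant $8$ is a slightly loose but clean form).

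The main obstacle I anticipate is simply choosing the right algebraic split in Part 2: the natural identity that rearranges $\sqrt{P(x)}(\sqrt{P_{Y|X}}-\sqrt{Q_{Y|X}})$ into a joint Hellinger term plus a marginal Hellinger term. Once that split is written down, the rest is bookkeeping with $(a+b)^2 \le 2(a^2+b^2)$ and the marginalization inequality. No conceptual difficulty remains; the proof is fully self-contained and needs no probabilistic machinery beyond these elementary inequalities.
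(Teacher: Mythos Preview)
The paper, despite announcing that it ``provide[s] the full proof for completeness,'' does not actually include a proof of this lemma in the source you were given---the statement is followed immediately by the next lemma with no proof block in between. So there is no in-paper argument to compare against.

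Your proof is correct and is the standard one (and matches, e.g., Lemma~H.1 of \citet{zhong2022posterior}, which the paper cites for the probability-measure case): Part~1 via Cauchy--Schwarz on the factorization $|P(x)-Q(x)|=\bigl|\sqrt{P(x)}-\sqrt{Q(x)}\bigr|\bigl(\sqrt{P(x)}+\sqrt{Q(x)}\bigr)$, and Part~2 via the algebraic split of $\sqrt{P(x)}\bigl(\sqrt{P_{Y|X}}-\sqrt{Q_{Y|X}}\bigr)$ into a joint-Hellinger increment and a marginal-Hellinger increment, followed by the data-processing inequality $\mathtt{D}_{\mathtt{H}}^2(P,Q)\le\mathtt{D}_{\mathtt{H}}^2(P_{X,Y},Q_{X,Y})$.

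One arithmetical remark on Part~2: your chain actually delivers the sharper constant $4$, not $8$. You obtain
\[
2\,\mathop{\Eb}_{X\sim P}\bigl[\mathtt{D}_{\mathtt{H}}^2(P_{Y|X},Q_{Y|X})\bigr]\;\le\;4\,\mathtt{D}_{\mathtt{H}}^2(P_{X,Y},Q_{X,Y})+4\,\mathtt{D}_{\mathtt{H}}^2(P,Q)\;\le\;8\,\mathtt{D}_{\mathtt{H}}^2(P_{X,Y},Q_{X,Y}),
\]
and dividing by $2$ gives $4$. Your closing parenthetical garbles this bookkeeping, but the argument itself is sound and in fact improves on the stated constant.
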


\begin{lemma}
Suppose $\Pb$ and $\Qb$ are two probability distributions. For any $\alpha > 1$, we have the following inequality.
    \begin{align*}
        \mathtt{D}_{\TV}(\mathbb{P}, \mathbb{Q}) \leq \sqrt{ \frac{1}{2} \mathtt{D}_{\mathtt{R},\alpha} (\mathbb{P}, \mathbb{Q}) }.
    \end{align*}
\end{lemma}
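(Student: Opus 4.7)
The plan is to combine two classical inequalities: Pinsker's inequality and the monotonicity of the R\'enyi divergence in its order.

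First, I would recall Pinsker's inequality, which bounds total variation by KL divergence:
\[
\mathtt{D}_{\TV}(\mathbb{P},\mathbb{Q}) \leq \sqrt{\tfrac{1}{2}\,\KL(\mathbb{P}\|\mathbb{Q})}.
\]
Next, I would invoke the standard fact that the R\'enyi divergence $\mathtt{D}_{\mathtt{R},\alpha}(\mathbb{P},\mathbb{Q})$ is non-decreasing in the order $\alpha$ on $(0,\infty)$, together with the limit $\lim_{\alpha\downarrow 1}\mathtt{D}_{\mathtt{R},\alpha}(\mathbb{P},\mathbb{Q}) = \KL(\mathbb{P}\|\mathbb{Q})$. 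Consequently, for every $\alpha > 1$,
\[
\KL(\mathbb{P}\|\mathbb{Q}) \leq \mathtt{D}_{\mathtt{R},\alpha}(\mathbb{P},\mathbb{Q}).
\]
Chaining this with Pinsker's inequality yields the claim.

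If a self-contained derivation of the monotonicity step is desired, I would give the short Jensen-based argument: writing $Z = (d\mathbb{P}/d\mathbb{Q})^{\alpha-1}$ under $\mathbb{P}$, the map $\alpha \mapsto \frac{1}{\alpha-1}\log\mathbb{E}_{\mathbb{P}}[Z]$ can be shown to be non-decreasing by applying Jensen's inequality to the convex function $x \mapsto x^{(\beta-1)/(\alpha-1)}$ for $\beta > \alpha > 1$, and then taking $\alpha \downarrow 1$ recovers $\KL(\mathbb{P}\|\mathbb{Q})$ as the pointwise limit (by L'H\^opital or by expanding $\log \mathbb{E}_{\mathbb{P}}[(d\mathbb{P}/d\mathbb{Q})^{\alpha-1}]$ in $\alpha-1$).

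Since both ingredients are classical, there is essentially no obstacle here; the only judgement call is how much of the monotonicity step to spell out. Given that the paper uses this lemma merely as a convenient tool inside the downstream analysis (to upper-bound the TV error by the R\'enyi-based approximation error $\epsilon_0$), I would keep the proof to the two-line combination above and cite a standard reference (e.g., van Erven and Harremo\"es) for the monotonicity property rather than reproving it.
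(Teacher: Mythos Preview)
Your proposal is correct and matches the paper's own proof essentially line for line: the paper also applies Pinsker's inequality and then invokes the monotonicity of the R\'enyi divergence in its order (citing van Erven and Harremo\"es) to obtain $\KL(\mathbb{P}\|\mathbb{Q}) \leq \mathtt{D}_{\mathtt{R},\alpha}(\mathbb{P},\mathbb{Q})$ for $\alpha>1$.
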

\begin{proof}
    By Pinsker's inequality, we have 
    \begin{align*}
        \mathtt{D}_{\TV}(\mathbb{P}, \mathbb{Q}) \leq \sqrt{\frac{1}{2}\mathtt{D}_{\mathtt{KL}}(\mathbb{P}, \mathbb{Q})}.
    \end{align*}
    By Theorem 5 from \cite{van2014renyi}, we have
    \begin{align*}
        \mathtt{D}_{\mathtt{KL}}(\mathbb{P}, \mathbb{Q}) = \lim_{\alpha \uparrow 1} \mathtt{D}_{\mathtt{R},\alpha} (\mathbb{P}, \mathbb{Q}) \leq \inf_{\alpha >1} \mathtt{D}_{\mathtt{R},\alpha} (\mathbb{P}, \mathbb{Q}).
    \end{align*}
\end{proof}

\begin{lemma}[Elliptical potential lemma  ]\label{lemma:elliptical potential lemma}
    For any sequence of vectors $\mathcal{X} = \{x_1,\ldots,x_n,\ldots\}\subset\mathbb{R}^d$, let $U_k = \lambda I + \sum_{t<k}x_kx_k^{\top}$, where $\lambda $ is a positive constant, and $B>0$ is a real number.  If the rank of $\mathcal{X}$ is at most $r$, then, we have
    \[ 
    \begin{aligned}
        & \sum_{k=1}^K \min\left\{\|x_k\|_{U_k^{-1}}^2 , B\right\}\leq   (1+B)r\log(1+K/\lambda), \\
        &\sum_{k=1}^K \min\left\{\|x_k\|_{U_k^{-1}} , \sqrt{B}\right\} \leq \sqrt{(1+B)rK\log(1+K/\lambda)}.
    \end{aligned}
    \]
\end{lemma}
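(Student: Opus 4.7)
My plan is to prove both inequalities via the classical determinant--telescoping argument. The first step is to reduce the truncated sum to a sum of logarithms via the elementary inequality
\[
\min\{a,B\} \;\leq\; (1+B)\log(1+a) \qquad \text{for all } a,B \geq 0,
\]
which is straightforward to verify by splitting into the cases $a\leq B$ (where one uses $\log(1+a)\geq a/(1+a)$ and monotonicity) and $a\geq B$ (where $B\leq (1+B)\log(1+B)\leq (1+B)\log(1+a)$).

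Applying this with $a_k=\|x_k\|_{U_k^{-1}}^2$, the task reduces to bounding $\sum_{k=1}^K\log(1+\|x_k\|_{U_k^{-1}}^2)$. By the matrix determinant lemma,
\[
\det(U_{k+1})=\det(U_k+x_kx_k^{\top})=\det(U_k)\cdot(1+\|x_k\|_{U_k^{-1}}^2),
\]
so taking logarithms and summing gives a telescoping identity
\[
\sum_{k=1}^K\log(1+\|x_k\|_{U_k^{-1}}^2)=\log\det(U_{K+1})-\log\det(\lambda I)=\log\det\!\left(\tfrac{1}{\lambda}U_{K+1}\right).
\]

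The main step is then to exploit the rank-$r$ condition. Since every $x_k$ lies in a subspace of dimension at most $r$, the Gram matrix $\sum_{t\leq K}x_tx_t^{\top}$ has at most $r$ nonzero eigenvalues $\mu_1,\ldots,\mu_r$, so $\det(\tfrac{1}{\lambda}U_{K+1})=\prod_{i=1}^r(1+\mu_i/\lambda)$. Each $\mu_i$ is bounded by $\mathrm{tr}(\sum_t x_tx_t^{\top})=\sum_t\|x_t\|^2\leq K$, invoking the standard normalization $\|x_t\|\leq 1$ that is implicit throughout the paper's applications (coming from the projector bound $\|\mathbf{A}_h^n\|_1\leq 1$ and the fact that coordinates of $\bar\psi$ are probabilities). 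Hence $\det(\tfrac{1}{\lambda}U_{K+1})\leq(1+K/\lambda)^r$, and combining with the two preceding displays yields the first inequality $\sum_k\min\{\|x_k\|_{U_k^{-1}}^2,B\}\leq (1+B)r\log(1+K/\lambda)$.

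For the second inequality I would simply apply Cauchy--Schwarz,
\[
\sum_{k=1}^K\min\{\|x_k\|_{U_k^{-1}},\sqrt{B}\}\;\leq\;\sqrt{K\sum_{k=1}^K\min\{\|x_k\|_{U_k^{-1}}^2,B\}},
\]
using $\min\{a,\sqrt{B}\}^2=\min\{a^2,B\}$, and substitute the first bound to conclude. The only subtle point I anticipate is making the normalization $\|x_k\|\leq 1$ explicit (or alternatively absorbing a factor $L$ for $\|x_k\|^2\leq L$); once that is fixed, every remaining step is routine matrix algebra and there is no genuine technical obstacle.
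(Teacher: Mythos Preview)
Your proposal is correct and follows essentially the same approach as the paper: the elementary inequality $\min\{a,B\}\leq(1+B)\log(1+a)$, a determinant-telescoping argument, the rank-$r$ bound on the eigenvalues of the Gram matrix, and Cauchy--Schwarz for the second claim. The only cosmetic difference is that you invoke the matrix determinant lemma directly, whereas the paper routes through $1+\mathrm{trace}(M)\leq\det(I+M)$ (which is in fact equality here since the update is rank one); your observation that the bound $\|x_k\|\leq 1$ is implicit is accurate and applies equally to the paper's own final step.
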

\begin{proof}
    Note that the second inequality is an immediate result from the first inequality by the Cauchy's inequality. Hence, it suffices to prove the first inequality. To this end, we have
\begin{align*}
    \sum_{k=1}^K  \min\left\{\|x_k\|_{U_k^{-1}}^2 , B \right\} & \overset{\RM{1}}\leq (1+B)\sum_{k=1}^K  \log\left(1 + \|x_k \|_{U_k^{-1}}^2 \right)\\
    & =   (1+B)\sum_{k=1}^K  \log \left( 1+ \mathtt{trace}\left( \left(U_{k+1} - U_k \right) U_k^{-1} \right) \right) \\
    &= (1+B)\sum_{k=1}^K \log\left( 1+ \mathtt{trace}\left( U_k^{-1/2}\left(U_{k+1} - U_k \right) U_k^{-1/2} \right) \right)\\
    &\leq (1+B)\sum_{k=1}^K \log\mathtt{det}\left(I_d + U_k^{-1/2}\left(U_{k+1} - U_k \right) U_k^{-1/2} \right)\\
    &=  (1+B)\sum_{k=1}^K \log \frac{\mathtt{det}\left(U_{k+1}\right)}{\mathtt{det}(U_k)}\\
    & = (1+B) \log \frac{\mathtt{det}(U_{K+1})}{\mathtt{det}(U_1)} \\
    & = (1+B)\log\mathtt{det}\left( I + \frac{1}{\lambda}\sum_{k=1}^{K}x_kx_k^{\top}  \right) \\
    &\overset{\RM{2}}\leq (1+B)r\log(1+K/\lambda),
\end{align*}
where $\RM{1}$ follows because $x\leq (1+B)\log(1+x)$ if $0<x\leq B$, and $\RM{2}$ follows because $\mathtt{rank}(\mathcal{X}) \leq r.$
\end{proof}





To compute the bracketing number of mulit-task model class, we first require a basic result on the covering number of a Euclidean ball as follows. Proof of the lemma can be found in Lemma 5.2 in \cite{vershynin2010introduction}.
\begin{lemma}[Covering Number of Euclidean Ball]\label{lem:aux:cov} 
For any $\epsilon >0$, the $\epsilon$-covering number of the Euclidean ball in $\Rb^d$ with radius $R > 0$ is upper bounded by $(1 + 2R/\epsilon)^d$.
\end{lemma}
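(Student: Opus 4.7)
The plan is to prove the bound via the standard volume-comparison (packing) argument, which gives the cleanest constants matching $(1 + 2R/\epsilon)^d$. Let $B(R)\subset \Rb^d$ denote the closed Euclidean ball of radius $R$ centered at the origin, and let $\mathrm{Vol}(\cdot)$ denote Lebesgue measure on $\Rb^d$.

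First, I would introduce the notion of an $\epsilon$-packing: a set $\mathcal{N}\subset B(R)$ is an $\epsilon$-packing if $\|x-y\|_2 \geq \epsilon$ for all distinct $x,y\in\mathcal{N}$. Choose $\mathcal{N}$ to be a maximal $\epsilon$-packing of $B(R)$ (maximal in the sense that no further point from $B(R)$ can be added while preserving the pairwise $\epsilon$-separation). Such a maximal packing exists by Zorn's lemma or, more concretely, since $B(R)$ is compact, by a greedy construction.

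Second, I would argue that $\mathcal{N}$ is automatically an $\epsilon$-cover of $B(R)$: if some $x\in B(R)$ were at distance $\geq \epsilon$ from every point of $\mathcal{N}$, we could adjoin it and contradict maximality. Hence the covering number is at most $|\mathcal{N}|$, and it suffices to bound $|\mathcal{N}|$.

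Third, I would apply the volume argument. The open balls $\{B(x,\epsilon/2)\}_{x\in\mathcal{N}}$ are pairwise disjoint by the packing condition, and each is contained in $B(R+\epsilon/2)$ because $x\in B(R)$. Taking volumes and using $\mathrm{Vol}(B(r)) = c_d r^d$ for the dimension-dependent constant $c_d$, we get
\[
|\mathcal{N}|\,c_d(\epsilon/2)^d \;\leq\; c_d(R+\epsilon/2)^d,
\]
so $|\mathcal{N}|\leq \left(\tfrac{R+\epsilon/2}{\epsilon/2}\right)^d = \left(1+\tfrac{2R}{\epsilon}\right)^d$. Combining with the previous step yields the claimed bound on the $\epsilon$-covering number.

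There is no serious obstacle here; the only subtlety is being careful that a maximal packing is indeed a cover (which is immediate from maximality) and that the inflated ball $B(R+\epsilon/2)$ really does contain all the small balls around packing points (which follows from the triangle inequality together with $\|x\|_2\leq R$). The result is a standard lemma and the proposal above is essentially its textbook proof.
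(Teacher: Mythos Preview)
Your proposal is correct and is precisely the standard volume/packing argument. The paper does not give its own proof but simply cites Lemma~5.2 in \cite{vershynin2010introduction}, whose proof is exactly the argument you wrote, so your approach coincides with the referenced one.
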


\end{document}